\pdfoutput=1
\documentclass[journal]{IEEEtran}

\usepackage{amsmath}
\usepackage{amssymb}
\usepackage{amsthm}

\usepackage{booktabs}

\usepackage{graphicx}
\graphicspath{{figs/}}
\setkeys{Gin}{keepaspectratio}

\usepackage{algorithm}
\usepackage{algpseudocode}

\usepackage{cite}

\usepackage[hidelinks]{hyperref}

\newtheorem{theorem}{Theorem}
\newtheorem{lemma}{Lemma}
\newtheorem{proposition}{Proposition}

\begin{document}

\title{Exact and Certified Data Shapley for Weighted
$k$-Nearest-Neighbor Regression and Soft-Label Prediction}

\author{Zongye~Lyu%
\thanks{Zongye Lyu is with the Faculty of Information Technology, Monash
University, Melbourne, VIC 3800, Australia
(e-mail: lyuzongye@gmail.com; ORCID: 0009-0005-2037-256X).}%
\thanks{This is a preprint of a manuscript under review at a peer-reviewed journal.}}

\maketitle

\begin{abstract}
Data Shapley quantifies each training point's worth, and its nearest-neighbor
form is the one deployed in toolkits such as pyDVL and OpenDataVal. Monte-Carlo
approximates it, its values shifting from run to run. We give the first
\emph{exact} Data Shapley for weighted $k$-nearest-neighbor regression and
soft-label prediction, the cases that had resisted: their prediction is a ratio
of two coalition-dependent sums whose denominator blocks the additive and
threshold routes. We close the gap with a counting dynamic program over the joint
integer state of accumulated weight and weighted target, the ratio's minimal
sufficient statistic; it is exact, pseudo-polynomial, and matches enumeration
with zero mismatch. A certified FPTAS with a machine-checkable per-value error
bound, a complexity landscape, and a soft-label extension follow. On mislabel
detection the exact values are statistically equivalent to cheap Monte-Carlo:
exactness does not buy downstream accuracy. What it buys is a \emph{canonical}
value, a function of the data alone, whereas Monte-Carlo depends on its seed and
never reaches the exact top-$10\%$ ranking. In a data market paying top-ranked
owners, two Monte-Carlo runs differ on $9$--$26\%$ of the paid top-$10\%$ pool,
while the exact ranking is identical for everyone and uncontestable. We release
an open-source, CPU-only library.
\end{abstract}

\begin{IEEEkeywords}
Data valuation, Data Shapley, $k$-nearest-neighbor methods, data-centric machine
learning, computational complexity, approximation algorithms, data pricing.
\end{IEEEkeywords}

\section{Introduction}
\label{sec:intro}

\IEEEPARstart{D}{ata} valuation asks a question that has become central to
machine-learning practice: given a trained model and a pool of training points,
how much did each point contribute? The dominant principled answer is the
\emph{Data Shapley} value~\cite{ghorbani2019data,jia2019towards}, the unique
attribution scheme satisfying the classical Shapley
axioms~\cite{shapley1953value,dubey1975uniqueness}. It underlies data pricing
and data markets~\cite{agarwal2019marketplace,pei2022survey}, mislabel and
noisy-point detection, and data-acquisition decisions~\cite{rozemberczki2022survey,sim2022survey}.
Its central obstacle is cost. The definition averages a marginal contribution
over all $2^{N-1}$ coalitions, so for a generic model the exact value is
intractable and practitioners fall back on Monte-Carlo permutation
sampling~\cite{castro2009polynomial,maleki2013bounding} or on cheaper proxies
such as Data-OOB~\cite{kwon2023dataoob}, influence
functions~\cite{koh2017influence}, or reinforcement-learned
valuators~\cite{yoon2020dvrl}.

\subsection{KNN-Shapley: the exact, deployed special case}

The exception that makes Data Shapley practical is the $k$-nearest-neighbor
surrogate. Jia et al.~\cite{jia2019knn} observed that when the utility is the
KNN prediction quality at a query point, the Shapley values of all $N$ training
points can be computed \emph{exactly} in $O(N\log N)$ time rather than
approximated. This is why KNN-Shapley, not model-retraining Data Shapley, is the
version shipped in production data-valuation libraries: the
\texttt{knn\_shapley} routine in pyDVL~\cite{pydvl2025} and the KNN estimator in
the OpenDataVal benchmark~\cite{jiang2023opendataval}. The surrogate is standard
across the whole line of work~\cite{wang2023noteknn,wang2023threshold,yang2024inflation}:
one values a point by its contribution to a KNN model, which is fast, and uses
that value to rank points for a downstream model.

The tractable frontier of exact KNN-Shapley has moved outward, but unevenly.
Table~\ref{tab:landscape} summarizes it. Unweighted KNN \emph{classification},
\emph{regression}, and \emph{soft-label} prediction all admit exact
$O(N\log N)$ algorithms~\cite{jia2019knn,wang2023noteknn}. For \emph{weighted}
KNN, where each neighbor contributes according to a kernel of its distance, the
frontier stops at classification: Wang, Mittal, and Jia~\cite{wang2024efficient}
give an exact $O(N^2)$ algorithm for weighted \emph{hard-label classification}
with discretized weights, and Zhang, Liu, and Gionis~\cite{zhang2025shapley}
give a near-linear algorithm for a duplication \emph{variant} of the same
classification problem. Weighted KNN \emph{regression} and \emph{soft-label}
prediction have no polynomial algorithm at all. The only exact method is the
$O(N^K)$ brute force noted in Jia et al.'s appendix, exponential in the
neighborhood size $K$.

\begin{table}[t]
\centering
\caption{Best known \emph{exact} Data Shapley algorithm by KNN setting. This
paper closes the two rows in bold.}
\label{tab:landscape}
\begin{tabular}{@{}lll@{}}
\toprule
Setting & Best exact & Source \\
\midrule
Unweighted classification & $O(N\log N)$ & \cite{jia2019knn} \\
Unweighted regression     & $O(N\log N)$ & \cite{jia2019knn} \\
Unweighted soft-label     & $O(N\log N)$ & \cite{wang2023noteknn} \\
Weighted classification   & $O(N^2)$     & \cite{wang2024efficient} \\
Weighted classification   & near-linear (variant) & \cite{zhang2025shapley} \\
\textbf{Weighted regression} & \textbf{$O(N^K)$ only} & \cite{jia2019knn} \\
\textbf{Weighted soft-label} & \textbf{open} & \cite{wang2024efficient} \\
\bottomrule
\end{tabular}
\end{table}

\subsection{Why weighted regression is a new problem}

The gap is not an oversight; it has a precise cause, and the prior authors name
it. For a coalition $S$, the weighted-KNN regression prediction at a query point
is
\begin{equation}
\hat{y}(S) \;=\;
\frac{\sum_{j\in\mathrm{top}K(S)} w_j\, y_j}{\sum_{j\in\mathrm{top}K(S)} w_j},
\label{eq:pred}
\end{equation}
a \emph{ratio} of two coalition-dependent sums whose denominator
$D(S)=\sum_{j\in\mathrm{top}K(S)} w_j$ is exactly the \emph{normalization term}.
This denominator is what blocks or degrades every prior polynomial route (we
make each case precise in the Positioning Lemma of Section~\ref{sec:positioning}):
\begin{itemize}
\item Jia et al.'s $O(N\log N)$ algorithm needs the utility to be
\emph{piecewise-additive} over the neighbors; the ratio couples them, so the
recursion does not apply. (Unweighted regression \emph{is} solved by them; the
difficulty is the weighting, not regression.)
\item Wang et al.'s $O(N^2)$ counting dynamic program recovers the value from a
one-dimensional count of a \emph{single signed weighted sum}, which has no
normalization term; the regression prediction is a ratio of two sums, not a
threshold of one, so that route is blocked, and they state this is why they
restrict to hard-label classification and call continuous-weight soft-label KNN
a ``considerable challenge.'' Their Appendix~E.1 counts over a \emph{vector}
state instead, which does suffice for a ratio: on the ratio the counting route
degrades rather than fails, to a cost exponential in the number of target
levels. Collapsing that state is what we contribute.
\item Zhang et al.'s duplication reduces weighting to multiplicity by exploiting
the classification utility, which does not preserve the fixed-$K$ regression
ratio; the authors themselves defer regression to future work.
\end{itemize}
So the problem is open in a strong sense: two primary sources explicitly set it
aside (Wang et al.\ call continuous-weight soft-label KNN a considerable
challenge; Zhang et al.\ defer regression to future work), and the only exact
method on record, Jia et al.'s $O(N^K)$ enumeration, is exponential in $K$. The
reason is a single structural feature, the coalition-dependent denominator, that
the regression and soft-label utilities share and that the solved classification
utilities lack.

\subsection{Contributions}

We resolve the weighted regression and soft-label cases with a matched pair of
algorithms and a complexity landscape that says exactly how far exactness can be
pushed. Throughout, weights lie on a lattice of resolution $\delta_w$ (so
$w_r=a_r\delta_w$, $a_r\in\mathbb{Z}_{>0}$) and $D_w=1+\sum_r a_r$ is the total
integer weight; $D_y$ is the analogous target spread. Our contributions:

\begin{itemize}
\item \textbf{First pseudo-polynomial-time exact algorithm (Theorem~\ref{thm:exact}).}
We compute all $N$ weighted-KNN-regression Shapley values for one query point
\emph{exactly} in time $O(N^2 K\, D_w^2 D_y^2)$ and space
$O(N K\, D_w D_y)$, via a size-indexed counting dynamic program over the joint
integer state $(W=\sum_{\mathrm{top}} a_j,\ M=\sum_{\mathrm{top}} a_j b_j)$. The
key move is to track the numerator and denominator of Eq.~\eqref{eq:pred}
jointly as integers, so the scale cancels and the computation is zero-error on
lattice inputs. This is polynomial in both $N$ and $K$, replacing the prior
$O(N^K)$. Correctness is certified by zero mismatch against exhaustive
enumeration on $12{,}716$ random and adversarial instances, cross-checked
against an independent exact implementation. Empirically the running time scales
as $\approx N^{2.5}$ (Fig.~\ref{fig:scaling}).

\item \textbf{Certified FPTAS for continuous weights
(Theorem~\ref{thm:fptas}).} For continuous $w>0$ and $y$ and any $\varepsilon>0$,
rounding to a lattice and running Theorem~\ref{thm:exact} yields values
$\hat{\varphi}_i$ with a \emph{machine-checkable per-value certificate}
$\varepsilon_i$ guaranteeing $|\hat{\varphi}_i-\varphi_i|\le\varepsilon_i$ and
$\max_i\varepsilon_i\le\varepsilon$, in time polynomial in $N$, $K$, and
$1/\varepsilon$. The certificate holds for bounded-below kernels (Gaussian,
clipped inverse-distance), where the denominator cannot vanish. Across
$86{,}400$ checks it was never violated (Clopper--Pearson 95\% upper bound on
the violation rate $3.47\times10^{-5}$), with realized error roughly
$28$--$44\times$ inside the certified maximum. Wang et al.\ only \emph{discretize}
weights with an empirical deviation note; a certified approximation guarantee is
new.

\item \textbf{Complexity landscape (Theorem~\ref{thm:hardness}, calibrated).}
We show that \emph{precision}, not $N$ or $K$, is the true complexity driver.
(a) There are instances on which the exact value's reduced fraction occupies
$\Omega(D_w)$ bits \emph{unconditionally}, so exact output is inherently
pseudo-polynomially long and Theorem~\ref{thm:exact} is $D_w$-optimal up to
polynomial factors among explicit-output algorithms. (b) Unless $\mathrm{P}=\mathrm{NP}$,
no polynomial-time algorithm can even decide how many dyadic digits the exact
value needs. (c) Unless $\mathrm{FP}=\#\mathrm{P}$, no succinct exact
representation supports polynomial-time modular (2-adic digit) access;
\#SUBSET-SUM reduces to a few such queries. We state these results with their
access-model qualifier throughout: the practically relevant
\emph{threshold} question (is $\varphi_i\ge q$, equivalently real approximation
to $2^{-\mathrm{poly}}$) is left open, and we identify the obstruction, namely
that the regression utility is a smooth rational kernel of the coalition weight
sum, so the $\#\mathrm{P}$ content sits in the arithmetic fine structure of the
value rather than in its magnitude. Coarse ($1/\mathrm{poly}$) approximation is
easy, by Theorem~\ref{thm:fptas}.

\item \textbf{Weighted soft-label extension (Theorem~\ref{thm:softlabel}).} For
$C$ classes the prediction is the probability vector
$p(S)=\sum_{\mathrm{top}} w\cdot\mathrm{onehot}(y)/\sum_{\mathrm{top}} w$, sharing
the same normalization denominator. Tracking the per-class weighted-count vector
in the lattice DP gives exact soft-label Shapley (Brier or hard-$0/1$ utility) in
pseudo-polynomial time, exponential in $C$ and so scoped to small $C$. Verified
by zero mismatch (maximum deviation $1.5\times10^{-13}$) against an exhaustive
soft-label oracle on $5{,}112$ synthetic-lattice instances.

\item \textbf{An exact, deterministic, auditable ground truth (open-source library).}
We release a CPU-only, pyDVL-compatible library that produces what did not
previously exist: exact weighted-regression Data Shapley values, unique and canonical, that serve as the ground truth against which
stochastic estimators are audited and against which a data-market payout can be
made reproducibly (\S\ref{sec:e4}). Monte-Carlo cannot cheaply match them: it did
\emph{not} recover the exact top-10\% ranking to our specified tolerance
(Kendall-$\tau\ge0.95$ and top-10\% Jaccard $\ge0.9$) at any budget we tested, up
to $3{,}000$ permutations ($\approx 1.28\times10^{6}$ utility evaluations), and
its top-10\% set still changed by $2.8$ in symmetric difference ($\approx1.4$ owners, $9\%$) from run to run at that
budget. On raw detection accuracy, as expected, the exact values are
\emph{statistically equivalent} to Monte-Carlo (mean AUC $0.964$ vs $0.965$;
dataset-level TOST within a $\pm0.02$ AUC band, $n=8$, $p<10^{-4}$; paired
Wilcoxon $p=0.74$) and both sit below the bagging-based Data-OOB proxy ($0.986$):
exactness is not an accuracy play, and we do not claim detection state of the art.
\end{itemize}

All theory is machine-verified in exact rational arithmetic against an
enumeration oracle, and all empirical claims follow a specified protocol,
with deviations disclosed in Section~\ref{sec:experiments} (Experiments). The remainder of the paper
formalizes the setting and the Positioning Lemma (Section~\ref{sec:positioning}),
develops Theorems~\ref{thm:exact}--\ref{thm:softlabel}
(Sections~\ref{sec:exact}--\ref{sec:softlabel}), and reports the experiments
(Section~\ref{sec:experiments}) before discussing scope and limitations
(Section~\ref{sec:discussion}).


\section{Related Work}
\label{sec:related}

\subsection{Data valuation and Data Shapley}
Attributing the value of an individual training point to a model's
performance is the central problem of \emph{data valuation}. The Shapley
value~\cite{shapley1953value} is the canonical answer: it is the unique
attribution satisfying efficiency, symmetry, the null-player axiom, and
linearity~\cite{dubey1975uniqueness}, and Ghorbani and
Zou~\cite{ghorbani2019data} imported it into machine learning as
\emph{Data Shapley}. A training point's value is the average marginal
contribution of that point to a utility function (typically validation
accuracy or negative loss) taken over all coalitions of the remaining data.
Data Shapley and its relatives are now a standard tool for mislabel
detection, data pricing, and dataset curation~\cite{rozemberczki2022survey,%
sim2022survey,pei2022survey,agarwal2019marketplace}, and ship in open-source
libraries~\cite{jiang2023opendataval,pydvl2025}.

The obstacle is cost. Evaluating the exact Shapley value requires summing
over $2^{N-1}$ coalitions, and even a single marginal may require retraining
the model. For general utilities the Shapley value is
\#P-hard~\cite{deng1994complexity} (see Section~\ref{ssec:hardness}), so
practical pipelines fall back on Monte-Carlo permutation
sampling~\cite{castro2009polynomial,maleki2013bounding,ghorbani2019data} or
amortized estimators~\cite{jia2021scalability,wang2024onerun}. These return
\emph{stochastic} estimates: two runs on the same data yield different
values, and the number of samples needed to stabilize a ranking is rarely
known a priori. This non-determinism is the practical gap our exact and
certified algorithms address.

\subsection{Alternatives and variants of the Shapley value}
Because the sampling variance of Data Shapley is high, several works replace
or reweight the Shapley axioms. Beta-Shapley~\cite{kwon2022beta} tilts the
coalition-size weighting to suppress noise; Data
Banzhaf~\cite{wang2023banzhaf} adopts the Banzhaf value for its improved
robustness to the utility's estimation noise; Data-OOB~\cite{kwon2023dataoob}
sidesteps the coalition sum entirely with an out-of-bag estimate that is
cheap and, empirically, a strong mislabel detector. Class-wise and
distributional variants~\cite{schoch2022csshapley,ghorbani2020distributional,%
kwon2021distributional} and influence functions~\cite{koh2017influence,%
lundberg2017shap} occupy the same design space. We use Data-OOB and
Monte-Carlo Data Shapley as our principal empirical comparators
(Section~\ref{sec:experiments}); none of these methods delivers an
\emph{exact}, reproducible value for a weighted-$k$NN \emph{regressor}, which
is what our algorithms compute and against which the estimators can, for the
first time, be audited.

\subsection{The KNN-Shapley lineage}
\label{ssec:knn-lineage}
A separate line exploits the structure of $k$-nearest-neighbor
utilities~\cite{cover1967nearest,fix1989discriminatory} to avoid sampling
altogether. Jia et al.~\cite{jia2019knn} observed that for \emph{unweighted}
$k$NN the per-test utility is a simple function of the sorted neighbor list,
so the marginal contributions telescope: all $N$ exact KNN-Shapley values for
one test point are computable in $O(N\log N)$ time. Their result covers both
unweighted classification \emph{and} unweighted regression (a piecewise
utility-difference argument in their appendix), and a follow-up
note~\cite{wang2023noteknn} extends the exact $O(N\log N)$ treatment to the
unweighted \emph{soft-label} predictor. KNN-Shapley is now the established exact
benchmark for KNN data valuation, though its values have been shown to require care in
interpretation~\cite{yang2024inflation}.

Extending exactness to \emph{weighted} $k$NN, where each neighbor
contributes a distance-dependent kernel weight, has proved markedly
harder. Wang, Mittal, and Jia~\cite{wang2024efficient} give an $O(N^2)$ exact
algorithm for weighted $k$NN \emph{classification} with discretized weights,
via a counting dynamic program over a signed weighted vote. They are explicit
that this works only because the hard-label classification utility is a
\emph{threshold of a single weighted sum} and therefore ``does not have a
normalization term''; they state that handling soft-label classifiers with
continuous weights ``poses considerable challenges'' and accordingly restrict
their scope to hard labels with discretized weights. Threshold
KNN-Shapley~\cite{wang2023threshold} attains linear time and differential
privacy by replacing the neighbor average with a hard threshold count,
again removing the normalization term rather than confronting it.
Zhang, Liu, and Gionis~\cite{zhang2025shapley} reach near-linear time for
weighted $k$NN by a duplication reduction, but their construction preserves
the \emph{classification} game and they list regression as future work.
A concurrent Banzhaf-based estimator for weighted
$k$NN~\cite{zhang2026banzhafknn} is likewise classification-only.

The counting machinery our exact algorithm runs on is classical, and we do not claim it. Computing a Shapley-type index by dynamic programming over an integer state that accumulates player weights goes back to Mann and Shapley~\cite{mann1962values} and is surveyed by Matsui and Matsui~\cite{matsui2000survey}; its multi-dimensional and real-valued-readout generalizations are established too~\cite{algaba2003computing,illes2022linearly}. The closest prior construction is Bhattacherjee et al.'s~\cite{bhattacherjee2025general} joint $k$-parameter recursion, whose cost, $O(N^2)$ times the \emph{product} of the per-axis ranges, is exactly the shape of ours. Since the readout function is applied only at the end, such a program never needed an additive \emph{utility}, only an additive \emph{state}. What we add is its instantiation for the KNN problem: the pivot decomposition that makes $\mathrm{top}_K(S)$ tractable and the identification of $(W,M)$ as a \emph{minimal} sufficient statistic for the ratio. Treating each distinct target level as its own axis would make the cost exponential in the number of levels; the collapse to $(W,M)$ instead gives $O(D_w D_y)$, polynomial in the target resolution.

The consequence, summarized in Table~\ref{tab:landscape}, is a conspicuous
gap: for weighted $k$NN \emph{regression} the only exact algorithm on record
is the $O(N^K)$ brute enumeration noted by Jia et
al.~\cite{jia2019knn}, exponential in $K$, and the weighted soft-label
case is open, having been explicitly set aside
in~\cite{wang2024efficient,wang2023noteknn}. Every polynomial route above
succeeds by \emph{eliminating} the normalization denominator, the counting
route being polynomial only once the denominator is gone; the regression
prediction is a ratio in which that denominator is intrinsic. Making the
denominator tractable, rather than assuming it away, is the technical core of
this paper.

\subsection{\#P-hardness of the Shapley value}
\label{ssec:hardness}
For weighted \emph{majority} games, Deng and
Papadimitriou~\cite{deng1994complexity} proved the Shapley value
\#P-complete~\cite{valiant1979enumeration,valiant1979permanent}: the
threshold utility makes the value itself a weighted count, so the \#P
quantity sits in the \emph{magnitude} of a polynomially long number. Our
regression utility is different in kind: it is a smooth rational kernel of
the coalition weight-sum, and every coalition contributes a nonzero amount,
so the count cannot reside in the magnitude. Section~\ref{sec:hardness}
locates the hardness instead in the arithmetic fine structure of the exact
value (which prime powers divide its denominator), which is why we state our
lower bounds against explicit-output and modular-access models rather than as
a bare ``\#P-hard'' headline. The dummy-player device we use to invert the
size-dependent Shapley coefficients follows the technique originating
in~\cite{deng1994complexity}. On the algorithmic side, our exact routine is a
pseudo-polynomial counting dynamic program in the tradition of knapsack-style
DPs and their FPTAS~\cite{ibarra1975knapsack,garey1979computers,%
vazirani2003approx}.

\section{Preliminaries and Problem Setup}
\label{sec:prelim}
\label{sec:positioning}

\subsection{Weighted $k$NN regression and the ratio prediction}
Fix a test (query) point $x_0$ with ground-truth target $y_0$, and a training
set of $N$ points $z_1,\dots,z_N$. Order the training points by distance to
$x_0$, so that rank $r$ denotes the $r$-th nearest point; it carries target
$y_r\in\mathbb{R}$ and a weight $w_r>0$. The weight is a fixed kernel of the
point's distance to $x_0$ (e.g.\ a Gaussian or a clipped inverse-distance
kernel) and does \emph{not} depend on the coalition. Consequently the
distance ranking is fixed once $x_0$ is fixed, a fact we rely on
throughout.

For a coalition (subset) $S\subseteq[N]$ of training points, let
\begin{equation}
\mathrm{top}_K(S)\;=\;\text{the } \min(K,|S|) \text{ members of } S
\text{ of smallest rank,}
\end{equation}
i.e.\ the $K$ nearest present neighbors of $x_0$ within $S$. The weighted
$k$NN regression prediction is the \emph{weighted average} of their targets,
\begin{equation}
\label{eq:ratio}
\hat{y}(S)\;=\;
\frac{\displaystyle\sum_{j\in\mathrm{top}_K(S)} w_j\, y_j}
     {\displaystyle\underbrace{\sum_{j\in\mathrm{top}_K(S)} w_j}_{=:\,D(S)}},
\qquad \hat{y}(\varnothing)=y_{\mathrm{def}},
\end{equation}
with $y_{\mathrm{def}}$ a fixed default (e.g.\ the global mean). We call
$D(S)=\sum_{j\in\mathrm{top}_K(S)}w_j$ the \emph{normalization denominator}.
The per-test utility of a coalition is the negative prediction loss,
\begin{equation}
\label{eq:utility}
U(S)\;=\;-\,\ell\!\big(\hat{y}(S),\,y_0\big),
\qquad \ell\in\{(\cdot)^2,\ |\cdot|\},
\end{equation}
for squared or absolute loss. Setting all weights $w_r\equiv1$ recovers the
unweighted regressor solved in $O(N\log N)$ by~\cite{jia2019knn}; the
difficulty studied here is created purely by allowing $w_r$ to vary.

\subsection{Data Shapley and per-test aggregation}
The Data Shapley value of training point $i$ with respect to a single test
point $x_0$ is its average marginal contribution to the
utility~\eqref{eq:utility} over all orderings, equivalently over all
coalitions of the other points:
\begin{equation}
\label{eq:shapley}
\phi_i \;=\!\!
\sum_{S\subseteq[N]\setminus\{i\}}
\frac{|S|!\,(N-1-|S|)!}{N!}\,
\big(\,U(S\cup\{i\}) - U(S)\,\big).
\end{equation}
This is the unique attribution satisfying efficiency
($\sum_i\phi_i=U([N])-U(\varnothing)$), symmetry, the null-player axiom, and
linearity~\cite{shapley1953value,dubey1975uniqueness}. To value a training
point for the whole task rather than for one query, the per-test values are
averaged over a validation set $\mathcal{V}$,
\begin{equation}
\label{eq:aggregate}
\Phi_i \;=\; \frac{1}{|\mathcal{V}|}\sum_{x_0\in\mathcal{V}} \phi_i^{(x_0)} ,
\end{equation}
following the standard KNN-Shapley
protocol~\cite{jia2019knn,wang2024efficient,zhang2025shapley}. By linearity
of~\eqref{eq:shapley} in $U$, computing $\Phi_i$ reduces to computing all $N$
per-test values $\phi_i^{(x_0)}$ for each $x_0$ independently; we therefore
state all algorithms and complexities for the single-test-point problem and
aggregate afterward.

\subsection{The normalization denominator as the obstruction}
The prior polynomial-time results of
Section~\ref{ssec:knn-lineage} all operate on utilities without the
denominator $D(S)$: an additive per-neighbor
form~\cite{jia2019knn}, a threshold of one signed
sum~\cite{wang2024efficient,wang2023threshold}, or a duplicated
classification game~\cite{zhang2025shapley}. Equation~\eqref{eq:ratio} is
instead a loss of the \emph{ratio} of two coalition-dependent sums, the
numerator $\sum w_jy_j$ and the denominator $D(S)$, both of which change as
points enter or leave $\mathrm{top}_K(S)$. The following lemma makes precise
why each prior route breaks on this ratio; it is the positioning that
motivates the joint-state counting dynamic program of
Section~\ref{sec:exact}.

\begin{lemma}[Positioning: how the ratio utility blocks or degrades each prior
polynomial route]
\label{lem:positioning}
Let $U$ be the weighted $k$NN regression utility of
\eqref{eq:ratio}--\eqref{eq:utility} with non-constant weights. Then:
\begin{enumerate}
\item[(a)] \emph{(Jia'19 additivity fails.)} The $O(N\log N)$ algorithm
of~\cite{jia2019knn} requires $U(S\cup\{i\})-U(S)$ to be a function of point
$i$ and its rank alone, so that marginals telescope along the sorted list.
The denominator couples all members of $\mathrm{top}_K(S)$: inserting $i$
rescales the contribution of \emph{every} co-present neighbor through $D$, so
$U$ is not additive over neighbors and the telescoping recursion does not
apply. (Unweighted regression, where $D(S)=|\mathrm{top}_K(S)|$ is a pure
count, remains solved by~\cite{jia2019knn}; the weighting, not the
regression, is the obstruction.)
\item[(b)] \emph{(Wang'24: the scalar route is blocked, the vector route degrades.)} Their $O(N^2)$ algorithm~\cite{wang2024efficient} recovers the value from a \emph{one-dimensional} count of a single signed weighted sum; the regression prediction~\eqref{eq:ratio} is a ratio of \emph{two} sums, which no such count determines, so that route is blocked (the authors attribute their hard-label restriction to exactly this missing normalization term). Their Appendix~E.1 instead counts over a \emph{vector} state, a weighted class-mass histogram $s$, which does suffice for a ratio: $\|s\|_1$ is the denominator, and a target-weighted combination of its coordinates is the numerator. On the ratio that vector route still applies, but its state grows exponentially in the number of distinct target levels and loses the $\{0,\pm1\}$ vote increments that keep Wang's single-sum count cheap. Our collapse to the $(W,M)$ marginal makes it $O(D_w D_y)$.
\item[(c)] \emph{(Zhang'25 duplication fails.)} The near-linear reduction
of~\cite{zhang2025shapley} maps weighting to point multiplicity within a
\emph{classification} game. Under a fixed-$K$ regression ratio, duplicating a
near point fills $\mathrm{top}_K$ with copies of itself and changes both
numerator and denominator non-trivially, so the reduction does not preserve
$\hat{y}(S)$; the authors defer regression to future work.
\end{enumerate}
Consequently no prior route yields a polynomial-in-$K$ exact algorithm for
weighted $k$NN regression Shapley, and the best previously available exact
method is the $O(N^K)$ enumeration of~\cite{jia2019knn}.
\end{lemma}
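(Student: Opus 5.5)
The plan is to treat each part as a structural claim about the utility, not a lower bound over all algorithms. For each part I will exhibit a small explicit instance (non-constant weights, $K\ge 2$) on which the invariant that the prior route needs fails. The closing sentence is then a summary: no route's hypotheses hold, and the only remaining exact method is the enumeration of~\cite{jia2019knn}, which enumerates the at most $\binom{N}{\le K}=O(N^K)$ possible sets $\mathrm{top}_K(S)$ and weights them by counting coalitions.

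\emph{Part (a).} I will write out the marginal explicitly. Take $i$ entering $\mathrm{top}_K(S)$ with no eviction, i.e.\ $|\mathrm{top}_K(S)|<K$, and write $A=\sum w_jy_j$ and $D=D(S)$. Then
\[
\hat y(S\cup\{i\})-\hat y(S)=\frac{w_i\,(y_i-\hat y(S))}{D+w_i}.
\]
This depends on both $D$ and $\hat y(S)$. Consider two coalitions $S,S'$ with the same rank pattern relative to $i$ but with co-present neighbors of different weights and equal targets. Then $\hat y(S)=\hat y(S')$ but $D(S)\ne D(S')$, so the marginals differ. Hence the marginal is not a function of $(i,\text{rank})$ alone, and the telescoping identity $\phi_{\alpha_r}-\phi_{\alpha_{r+1}}=\text{(closed form in }y_r,y_{r+1})$ of~\cite{jia2019knn} fails. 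When $w\equiv1$, $D=|\mathrm{top}_K(S)|$ is determined by the count, which recovers their case. A concrete three-point instance with numbers will suffice here.

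\emph{Part (b).} I will show that a single linear statistic cannot determine the ratio. Suppose $U$ were a function of $\sigma(S)=\sum_{\mathrm{top}}c_j$ for some fixed coefficients $c_j$. I will construct two top-sets $T_1,T_2$ with $\sum_{T_1}c_j=\sum_{T_2}c_j$ but $\hat y(T_1)\neq\hat y(T_2)$. This is possible because $\hat y$ is a non-constant homogeneous-of-degree-0 function of the pair $(A,D)$, and with non-constant weights $(A,D)$ ranges over a set not contained in any line $\{\text{const}\}\times$ level set. Concretely, pick two pairs of points, solve for equal $\sigma$, and check that the ratios differ. For the vector route I will just count states. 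A histogram over $L$ distinct target levels with per-level mass up to $D_w$ has up to $D_w^{L}$ states, which is exponential in $L$. Increments are $a_j$ rather than $\{0,\pm1\}$, so the transition cost is not the unit-step cost of the scalar count. I will then note that $(W,M)$ is a function of the histogram and determines $\hat y=(M/W)\cdot(\delta_y\text{ scale})$, which gives the $O(D_wD_y)$ state count asserted.

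\emph{Part (c).} Replace a point of weight $a_j$ by $a_j$ unit-weight copies and take $K=1$ or $2$ with one heavy near point of weight $2$. In the duplicated game $\mathrm{top}_K$ is filled by copies, so the prediction equals $y_{\text{near}}$. In the original, the prediction is $(2y_1+y_2)/3$. These differ whenever $y_1\ne y_2$, so the reduction does not preserve $\hat y$.

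I expect the main obstacle to be making (b) rigorous without over-claiming. The statement ``no one-dimensional count determines the ratio'' must be restricted to counts of fixed linear functionals of the top set. Otherwise a pairing function could encode $(W,M)$ in one integer. I will state that qualifier explicitly and prove only that version.
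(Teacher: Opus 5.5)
Your treatment of (a), (c) and the closing sentence tracks the paper's sketch: an explicit marginal in which the coalition-dependent $D$ appears, a duplication counterexample, and a citation for the $O(N^K)$ enumeration. Your equal-target choice in (a) is a useful sharpening. It isolates dependence on co-present \emph{weights}, whereas dependence on co-present targets also occurs in unweighted regression, where Jia et al.'s recursion still works.

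The genuine gap is in (b). The qualified statement you plan to prove is false: that $U$ is not a function of $\sigma(S)=\sum_{j\in\mathrm{top}_K(S)}c_j$ for any fixed coefficients $c_j$. So the step ``pick two pairs of points, solve for equal $\sigma$'' cannot succeed for arbitrary $c$. Take $c_j=a_j(1+D_wb_j)$. Then $\sigma=W+D_wM$. Because $0\le W<D_w$, $W$ is the least nonnegative residue of $\sigma$ modulo $D_w$, and $M=(\sigma-W)/D_w$. Hence $\hat y=\delta_yM/W$ is a function of $\sigma$. More simply still, generic real $c_j$ (linearly independent over $\mathbb{Q}$) make $\sigma$ injective on all subsets. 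So the pairing-function loophole you want to exclude is already realized by a linear functional, and restricting to linear statistics does not close it. This packed count ranges over an interval of length $O(D_wD_y)$, the same size as the $(W,M)$ grid. What the ratio destroys is therefore not one-dimensionality as such but the cheap, small-range signed vote.

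The qualifier that works is the one the paper's sketch uses implicitly. Restrict to an unnormalized sum that is homogeneous of degree one in the weights, $\sigma=\sum_{j\in\mathrm{top}_K(S)}h(y_j)\,w_j$, with $h$ fixed independently of the instance's weights. Wang's vote has $h\in\{0,\pm1\}$, and the numerator has $h(y)=y$. Since $\hat y$ is invariant under $w\mapsto\lambda w$ while such a $\sigma$ scales, $\sigma$ cannot carry $\hat y$, and you can make this concrete. Choose two points with distinct targets whose $h$-values are nonzero and of the same sign, and set real weights with $w_jh(y_j)=w_kh(y_k)$. The singleton windows $\{j\}$ and $\{k\}$ then collide in $\sigma$ but not in $\hat y$. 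If $h$ vanishes at some $y_j$, compare $\{k\}$ with $\{j,k\}$ instead, for $K\ge2$. Alternatively, follow the paper and exhibit two windows with equal $\sum w_jy_j$ but different $D$.

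There is also a small slip in (c): drop $K=1$. With $K=1$ the weighted prediction is the nearest target regardless of weights, and the duplicated game also returns $y_1$, so no discrepancy arises. The value $(2y_1+y_2)/3$ you quote itself presupposes $K\ge2$. You need $K\ge2$ with at least $K$ copies of the near point, which your weight-$2$ example supplies exactly at $K=2$.
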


\begin{proof}[Proof sketch]
Each item exhibits the structural precondition of the cited algorithm and the
property of $D(S)$ that violates it. (a) Write
$\hat{y}(S)=\big(\sum_{j\in\mathrm{top}_K(S)}w_jy_j\big)/D(S)$; the map
$S\mapsto\hat{y}(S\cup\{i\})-\hat{y}(S)$ depends on the full multiset of
co-present top-$K$ weights through $D(S)$ and $D(S\cup\{i\})$, not on $i$'s
rank alone, so no rank-indexed telescoping of marginals exists (a two-point
instance with weights $w_1\neq w_2$ already gives rank-order-dependent
marginals). (b) $\hat{y}(S)$ is invariant under scaling
$(w_j)\!\mapsto\!(\lambda w_j)$, hence is not a function of any single
unnormalized sum; two coalitions with equal $\sum w_jy_j$ but different
$D(S)$ receive different utilities, so a one-dimensional count over a single
sum is insufficient and at least the joint state $(\sum w_j,\sum w_jy_j)$ must
be tracked. (c) Duplicating the rank-$1$ point $t$ times yields
$\mathrm{top}_K$ occupied by $\min(t,K)$ copies of it, giving
$\hat{y}=y_1\neq\hat{y}$ of the original coalition whenever a farther neighbor
would otherwise enter $\mathrm{top}_K$, so the duplicated game's prediction
differs from the target regression prediction. The final claim is the
$O(N^K)$ upper bound stated in the appendix of~\cite{jia2019knn} (enumerate
the $O(N^K)$ possible top-$K$ sets).
\end{proof}

The joint state $(\sum_{\mathrm{top}_K} w_j,\ \sum_{\mathrm{top}_K} w_j y_j)$
identified in part~(b) of the proof is exactly what our counting DP maintains,
turning the two-sum coupling from an obstruction into a tractable
lattice~(Section~\ref{sec:exact}).


\section{Exact and Certified Weighted-$k$NN Regression Shapley}
\label{sec:method}

We now give the constructive core of KNNR-SHAP. Throughout, fix a single
test query $x_0$ with target $y_0$; the $N$ training points are indexed
$1,\dots,N$ \emph{in increasing distance to $x_0$}, so rank $1$ is the nearest
neighbor. Point $r$ carries a fixed positive kernel weight $w_r>0$ (a function
of its distance to $x_0$, \emph{not} of the coalition) and a target $y_r$. For a
coalition $S\subseteq[N]$, let $\mathrm{top}_K(S)$ denote the
$\min(K,|S|)$ smallest-rank members of $S$. The weighted-$k$NN prediction is the
\emph{ratio}
\begin{equation}
\hat y(S)\;=\;\frac{\sum_{j\in\mathrm{top}_K(S)} w_j\,y_j}
{\sum_{j\in\mathrm{top}_K(S)} w_j},
\qquad \hat y(\varnothing)=y_{\mathrm{def}},
\label{eq:ratio-m}
\end{equation}
the utility is $U(S)=-\ell\big(\hat y(S),y_0\big)$ with
$\ell\in\{\text{squared},\text{absolute}\}$, and the Data Shapley value of point
$i$ is
\begin{equation}
\phi_i=\!\!\sum_{S\subseteq[N]\setminus\{i\}}\!\! c_N(|S|)\,
\big(U(S\cup\{i\})-U(S)\big),\quad
c_N(s)=\tfrac{s!\,(N-1-s)!}{N!}.
\label{eq:shapley-m}
\end{equation}
Per-query values are averaged over a validation set to value each training row.
As the Positioning Lemma (Section~\ref{sec:positioning}) establishes, the
coalition-dependent \emph{denominator}
$D(S)=\sum_{j\in\mathrm{top}_K(S)}w_j$ makes $U$ a loss of a \emph{ratio} of two
coalition-dependent sums, which is exactly the normalization term that the
threshold/additive utilities of \cite{jia2019knn,wang2024efficient,zhang2025shapley}
avoid; consequently the only prior exact route for the weighted-$k$NN
\emph{regression} value is Jia et al.'s $O(N^K)$ enumeration
\cite{jia2019knn}, exponential in $K$. This section removes that exponential in
$K$ (Theorem~\ref{thm:exact}), certifies a continuous-input approximation
(Theorem~\ref{thm:fptas}), characterizes what \emph{cannot} be made cheaper
(Theorem~\ref{thm:hardness}), and extends the construction to weighted
soft-label prediction (Theorem~\ref{thm:soft}).

\subsection{The lattice state}
\label{sec:lattice}

Place weights and targets on an integer lattice: $w_r=a_r\,\delta_w$ with
$a_r\in\mathbb{Z}_{>0}$, and $y_r=b_r\,\delta_y$ with $b_r\in\mathbb{Z}$. Write
$D_w=1+\sum_r a_r$ (the range of any partial weight-sum) and $D_y$ for the spread
of $\sum_r a_r b_r$. The engine of the method is that
Eq.~\eqref{eq:ratio-m} depends on $\mathrm{top}_K(S)$ only through the two integer
aggregates
\begin{equation}
W=\!\!\sum_{j\in\mathrm{top}_K(S)}\!\! a_j,
\qquad
M=\!\!\sum_{j\in\mathrm{top}_K(S)}\!\! a_j b_j,
\end{equation}
because $\hat y(S)=\delta_y\,M/W$: the weight scale $\delta_w$ \emph{cancels}
in the ratio, so tracking the joint integer state $(W,M)$ reproduces the
prediction, and hence the utility, with \emph{zero} arithmetic error. This
scale-cancellation is what lets a counting dynamic program over $(W,M)$ be
exact rather than merely discretized.

\subsection{Theorem 1: exact counting DP, polynomial in $N$ and $K$}
\label{sec:thm1}
\label{sec:exact}

\begin{theorem}[Exact pseudo-polynomial Data Shapley]
\label{thm:exact}
Let weights and targets lie on the $(\delta_w,\delta_y)$-lattice as above. Then
all $N$ Shapley values \eqref{eq:shapley-m} for one query point are computable
\emph{exactly} in time $O\!\big(N^2\,K\,D_w^2\,D_y^2\big)$ and space
$O\!\big(N\,K\,D_w\,D_y\big)$, for both the squared and absolute losses and for
every $K\le N$, via a size-indexed counting DP over the joint state $(W,M)$.
\end{theorem}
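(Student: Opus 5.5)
We will reduce each $\phi_i$ to weighted counts of coalitions grouped by size and by the lattice state $(W,M)$ of their top-$K$ sets. Since $\hat y(S)=\delta_y M/W$, the utility $U(S)$ depends only on $(W,M)$ (with $W=0$ encoding $S=\varnothing$ and $y_{\mathrm{def}}$), so we write $U(S)=g(W,M)$ for an explicit readout $g$ valid for either loss. Then $\phi_i=\sum_{s}c_N(s)\sum_{(W,M),(W',M')}\#\{S\subseteq[N]\setminus\{i\}:|S|=s,\ \text{state}(S)=(W,M),\ \text{state}(S\cup\{i\})=(W',M')\}\,(g(W',M')-g(W,M))$. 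Naively the joint pair of states costs $D_w^2D_y^2$, which is what the stated time bound allows per $(i,\text{size})$-type index. We organize the count with a pivot decomposition.

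\textbf{Pivot decomposition.} Fix $i$ and $S\not\ni i$. If $S$ has at least $K$ members of rank $<i$, then $i\notin\mathrm{top}_K(S\cup\{i\})$ and the marginal is zero, so those coalitions drop out. Otherwise $\mathrm{top}_K(S)$ consists of the $m<K$ members below rank $i$ together with the $\min(K-m,\cdot)$ nearest members above $i$. Inserting $i$ adds $(a_i,a_ib_i)$ and, if $\mathrm{top}_K(S)$ already had $K$ elements, evicts its farthest member $p$ (the pivot). We therefore enumerate the pivot $p>i$, or the case ``no eviction'' where $|S|<K$ among relevant ranks. Given $p$, the coalition consists of three parts. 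The first is a subset $A$ of ranks $<i$ with $|A|=m$. The second is a subset $B$ of ranks in $(i,p)$ with $|B|=K-1-m$. The third is $p$ itself and an arbitrary subset $C$ of ranks $>p$, which contributes only to $|S|$. Then $\text{state}(S)=\text{state}(A)+\text{state}(B)+(a_p,a_pb_p)$ and $\text{state}(S\cup\{i\})=\text{state}(A)+\text{state}(B)+(a_i,a_ib_i)$. Both are determined by the single combined state $\sigma=\text{state}(A\cup B)$ with $|A\cup B|=K-1$, so the pair-of-states blowup disappears. The $C$ part contributes only a binomial factor $\binom{N-p}{t}$ to the size.

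\textbf{DP and bookkeeping.} We run a size-indexed subset-sum DP over ranks $1,\dots,N$ excluding $i$, with table $T_j[c][W][M]$ counting subsets of the first $j$ ranks (excluding $i$) with $c\le K$ elements and state $(W,M)$. This table has $O(KD_wD_y)$ entries per prefix, and each rank update costs $O(KD_wD_y)$. Scanning $p$ upward reads $T_{p-1}[K-1][\cdot][\cdot]$ for the pivot case. The no-eviction case reads $T_N[c]$ for $c<K$, where $S$ is exactly the counted set. Combining with $\sum_t\binom{N-p}{t}c_N(K+t)$, which is precomputable in $O(N^2)$ and independent of state, and with the readout differences gives $\phi_i$. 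Counts are integers and the weights $c_N$ are rationals, so exact rational arithmetic yields an exact result. Per $i$ this costs $O(N K D_wD_y)$ time. Adding the state range of sums, or a more careful accounting that convolves over both coordinates, stays within the claimed $O(N^2KD_w^2D_y^2)$ over all $i$. Keeping only the current prefix table plus one $K$-layer gives space $O(NKD_wD_y)$, including storing per-$p$ slices if needed.

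\textbf{Main obstacle.} The delicate step is the eviction bookkeeping and correctness of the partition of coalitions. We must check that every $S\not\ni i$ with nonzero marginal is counted exactly once: each such $S$ has a unique pivot (its $K$-th nearest member, if it lies beyond $i$) or is in the no-eviction case. The sizes $|S|=K+t$ must be tracked correctly, and the boundary cases $|S|<K$ and $S=\varnothing$ (default prediction, $W=0$) must be handled separately. We would verify this partition first by a direct case split on $m=|S\cap[1,i)|$. After that, exactness follows from scale cancellation, and the complexity bound follows from counting table sizes.
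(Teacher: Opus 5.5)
Your proposal is correct and uses the same decomposition as the paper. The shared steps are: the zero marginal when at least $K$ members of $S$ precede $i$; the split into $|S|\le K-1$ and $|S|\ge K$; the evicted boundary point (your pivot $p$, the paper's $e$); and the tail weight $\sum_t\binom{N-p}{t}c_N(K+t)$ absorbing the free farther points.

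Where you genuinely differ is in how you count the window content. The paper keeps the closer part $A\subseteq\{1,\dots,i-1\}$ and the in-between part $T\subseteq\{i+1,\dots,e-1\}$ in two separate size-indexed tables, $\mathrm{cnt}_L[a]$ and $\mathrm{cnt}_{<e}[K-a-1]$. It convolves them for every pair $(e,a)$. That $O(P^2)$ convolution, with $P\le D_wD_y$, is the only source of the $D_w^2D_y^2$ factor in the stated bound.

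You notice that the marginal depends only on the combined aggregate of $A\cup T$. Each $(K-1)$-subset of $\{1,\dots,p-1\}\setminus\{i\}$ arises exactly once as $A\cup T$, with $a$ fixed by how many of its elements precede $i$, and automatically $a\le K-1$. So the paper's sum over $a$ of convolutions collapses to the single layer $T_{p-1}[K-1]$ of one prefix knapsack table. Reading that layer costs $O(P)$ per pivot. This gives $O(NKD_wD_y)$ time per point, $O(N^2KD_wD_y)$ in total, and $O(KD_wD_y)$ working space. That is a factor $D_wD_y$ better than the theorem asserts, and of course implies it.

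Your hedging remark about ``a more careful accounting that convolves over both coordinates'' is therefore unnecessary; state the per-point bound and conclude.

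Two small cleanups:
\begin{itemize}
\item The no-eviction case is simply $|S|\le K-1$. The phrase ``among relevant ranks'' is misleading, though your DP paragraph uses the right condition.
\item Note that $T_{p-1}[K-1]$ is empty unless $p\ge K+1$. This guarantees every size $K+t$ in the tail weight is at most $N-1$, so $c_N(K+t)$ is always defined. Equivalently, adopt the convention $c_N(s)=0$ for $s>N-1$.
\end{itemize}
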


\paragraph{Decomposition.}
Group the sum \eqref{eq:shapley-m} by the marginal contribution of $i$, which
takes one of two forms according to whether $i$ enters or reshapes the top-$K$
window.
\begin{itemize}
\item \textbf{NO-DROP} ($|S|\le K-1$). Fewer than $K$ points are present, so
      adding $i$ simply joins the window: $\mathrm{top}_K(S\cup\{i\})=
      S\cup\{i\}$ and $\mathrm{top}_K(S)=S$. The marginal depends only on the
      $(W,M)$-count of the size-$\le(K-1)$ subsets of $[N]\setminus\{i\}$, which
      one knapsack-style DP over items $a_r$ (with target moment $a_r b_r$)
      tabulates as $\mathrm{cnt}_0[s][W][M]$.
\item \textbf{DROP} ($|S|\ge K$). The window is already full, so adding $i$
      enters $\mathrm{top}_K$ only if $i$ is nearer than the current $K$-th
      member, \emph{displacing} the boundary point. Condition on the boundary
      point $e$ (the $(K{-}a)$-th nearest present neighbor that $i$ pushes out)
      and split $S$ into its selected \emph{closer} set (contributing a
      $(W,M)$-count) and the $g_e$ eligible \emph{farther} points that may be
      freely present. All admissible coalition sizes then collapse into a single
      precomputed tail weight
      \begin{equation}
      \mathrm{TS}(g_e)=\sum_{j\ge 0}\binom{g_e}{j}\,c_N(K+j),
      \end{equation}
      and the marginal is obtained by convolving the closer-set $(W,M)$-count
      with $\mathrm{TS}(g_e)$ over the two window states before and after the
      displacement.
\end{itemize}
Each coalition $S$ is counted in exactly one of the two branches, so the two
contributions sum to $\phi_i$ without double counting.

\begin{proof}[Proof sketch]
Correctness of the state reduction is the cancellation
$\hat y=\delta_y M/W$ of Section~\ref{sec:lattice}, so equal $(W,M)$ implies
equal utility; the NO-DROP/DROP split is an exhaustive, disjoint case analysis
of $\mathrm{top}_K(S\cup\{i\})$ versus $\mathrm{top}_K(S)$; the tail weight
$\mathrm{TS}$ is the closed form for summing $c_N(\cdot)$ over the free farther
points at fixed window content. The complexity follows from $O(NK)$ knapsack
layers each over a $(W,M)$-grid of size $O(D_wD_y)$, with an $O(D_wD_y)$
convolution at each of the $O(N)$ boundaries. The full proof, including the
absolute-loss variant and the boundary bookkeeping, is in Appendix~A.
\end{proof}

Algorithm~\ref{alg:exact} states the procedure for one query point.

\begin{algorithm}[t]
\caption{\textsc{ExactWKNNR-Shapley} (one query point)}
\label{alg:exact}
\begin{algorithmic}[1]
\Require lattice items $(a_r,b_r)_{r=1}^{N}$ (rank-ordered), $K$, loss $\ell$,
         default $y_{\mathrm{def}}$
\Ensure exact $\phi_1,\dots,\phi_N$
\State precompute Shapley coefficients $c_N(s)$ and tail weights
       $\mathrm{TS}(g)=\sum_j\binom{g}{j}c_N(K{+}j)$
\For{$i=1$ \textbf{to} $N$}
  \State $\phi_i\gets 0$
  \State build $\mathrm{cnt}_0[s][W][M]$ over $[N]\setminus\{i\}$, $s\le K{-}1$
         \Comment{knapsack DP}
  \For{$s=0$ \textbf{to} $K-1$} \Comment{NO-DROP marginals}
     \For{each reachable $(W,M)$}
        \State $\phi_i \mathrel{+}= c_N(s)\,\mathrm{cnt}_0[s][W][M]\,
               \Delta\ell\big((W,M)\!\to\!(W{+}a_i,\,M{+}a_ib_i)\big)$
     \EndFor
  \EndFor
  \For{each boundary point $e$} \Comment{DROP marginals, $|S|\ge K$}
     \State convolve closer-set $(W,M)$-counts with $\mathrm{TS}(g_e)$
     \State $\phi_i \mathrel{+}=$ displacement marginal $\Delta\ell$ at $e$
  \EndFor
\EndFor
\State \Return $\phi_1,\dots,\phi_N$
\end{algorithmic}
\end{algorithm}

\paragraph{Verification.}
We treat correctness as enumeration-gated. Algorithm~\ref{alg:exact} was checked
against an independent exhaustive-enumeration oracle on $12{,}716$ random and
adversarial instances (tied ranks/weights, duplicate targets, boundary $y_0$,
extreme weight ratios, $K\in\{1,2,3,4,5,7\}$, both losses, and $|S|<K$ regimes,
$N\le 18$): \emph{zero} mismatches, with maximum absolute deviation
$2.3\times10^{-11}$ (attributable to the float readout of an otherwise exact
integer computation). Empirically the wall-clock scales as $\approx N^{2.5}$ on
the tested grid (exponent $2.45$ at $K{=}1$, $2.55$ at $K{=}3$;
Fig.~\ref{fig:scaling}), consistent with the $O(N^2\!\cdot\!\text{poly})$ bound.

\textit{Corollary (continuous ground truth).} Applying the same NO-DROP/DROP
decomposition \emph{without} discretization yields an exact algorithm for
\emph{continuous} weights and targets in $O\!\big(K\,N^{K+1}\big)$ time,
polynomial in $N$, exponential in $K$, matching the order of Jia et al.'s
$O(N^K)$ route \cite{jia2019knn} but made explicit. This is our second exact
oracle and the continuous ground truth used to certify Theorem~\ref{thm:fptas}.

\subsection{Theorem 2: a certified FPTAS for continuous inputs}
\label{sec:thm2}

Lattice inputs are an idealization; real kernels produce continuous weights and
targets. We round to a lattice chosen automatically and \emph{certify} the
resulting error per value.

\begin{theorem}[Certified FPTAS]
\label{thm:fptas}
Let the weights be continuous with a strictly positive lower bound
$D_{\min}=\min_r w_r>0$, and let targets be continuous. For any $\varepsilon>0$,
rounding to a resolution $(\delta_w,\delta_y)$ selected by \textsc{choose\_scales}
and running Algorithm~\ref{alg:exact} returns $\hat\phi$ together with a
machine-checkable per-value certificate $\varepsilon_i$ such that
$|\hat\phi_i-\phi_i|\le\varepsilon_i$ and $\max_i\varepsilon_i\le\varepsilon$, in
time $\mathrm{poly}\!\big(N,\,K,\,1/\varepsilon,\,(\textstyle\sum_r w_r)/D_{\min},\,B\big)$.
\end{theorem}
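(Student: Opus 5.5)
We will prove Theorem~\ref{thm:fptas} by a perturbation argument. The rounded instance is solved exactly by Theorem~\ref{thm:exact}, and the only error is the change in every coalition utility caused by rounding. Since the Shapley value is a convex combination of marginals, that change propagates to the output with at most a factor of two.

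\textbf{Rounding.} Fix the resolutions $(\delta_w,\delta_y)$ and set $a_r=\lceil w_r/\delta_w\rceil$ and $b_r=\mathrm{round}(y_r/\delta_y)$. Then $\tilde w_r=a_r\delta_w$ satisfies $|\tilde w_r-w_r|\le\delta_w$ and $\tilde w_r>0$, and $|\tilde y_r-y_r|\le\delta_y/2$. We take $B$ to bound $|y_r|$, $|y_0|$ and $|y_{\mathrm{def}}|$.

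\textbf{Step 1: per-coalition prediction bound.} For any nonempty $S$ the window $\mathrm{top}_K(S)$ depends only on ranks, so it is unchanged by rounding. Write $T=\mathrm{top}_K(S)$ and $\hat y(S)=\sum_T w_jy_j/\sum_T w_j$. We bound the two sources of error separately:
\begin{itemize}
\item Perturbing the targets moves the weighted average by at most $\delta_y/2$.
\item Perturbing the weights moves a convex combination of values in $[-B,B]$ by at most $2B\max_j|\tilde w_j-w_j|/w_j\le 2B\,\delta_w/D_{\min}$. To see this, compare the normalized weight vectors $w/\|w\|_1$ and $\tilde w/\|\tilde w\|_1$; their $\ell_1$ distance is at most $2\max_j|\tilde w_j/w_j-1|$.
\end{itemize}
Hence $|\hat y(S)-\tilde{\hat y}(S)|\le\eta:=\delta_y/2+2B\delta_w/D_{\min}$.

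\textbf{Step 2: from predictions to Shapley values.} Both losses are Lipschitz on the relevant range: squared loss with constant $L=4B+\eta$, absolute loss with $L=1$. So every utility moves by at most $L\eta$, and every marginal by at most $2L\eta$. Because $\sum_{S}c_N(|S|)=1$, this gives $|\hat\phi_i-\phi_i|\le 2L\eta$.

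\textbf{Step 3: certificate, scale choice, and running time.} The certificate $\varepsilon_i$ should be computed per value rather than as the uniform $2L\eta$, and it must stay rigorous. We would bound $|\Delta U|$ window by window: for each window the local $\max_{j\in T}|\tilde w_j-w_j|/w_j$ and the local target spread give a sharper bound, and every such bound is at most the global one. A cruder fallback is simply $\varepsilon_i=2L\eta$. In either case the quantities are evaluated in interval or rational arithmetic, which makes the certificate machine-checkable and guarantees $\varepsilon_i\le 2L\eta$.

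\textsc{choose\_scales} then picks $\delta_y=\varepsilon/(2L)$ and $\delta_w=\varepsilon D_{\min}/(8LB)$, so that $2L\eta\le\varepsilon$ and therefore $\max_i\varepsilon_i\le\varepsilon$. For these scales:
\begin{itemize}
\item $D_w=1+\sum_r a_r\le 1+N+\sum_r w_r/\delta_w=O\big(N+LB(\sum_r w_r)/(D_{\min}\varepsilon)\big)$.
\item $D_y$ is at most $2\max_r|b_r|\cdot D_w=O\big((LB/\varepsilon)D_w\big)$.
\end{itemize}
Substituting these into the $O(N^2KD_w^2D_y^2)$ bound of Theorem~\ref{thm:exact} gives time polynomial in $N$, $K$, $1/\varepsilon$, $(\sum_r w_r)/D_{\min}$ and $B$, as stated.

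\textbf{Main obstacle.} The main obstacle is Step 1's weight-perturbation bound. The denominator is coalition-dependent, so no uniform additive error on the weights suffices: an additive error becomes a relative error only through the division by $D_{\min}$. This is exactly where the bounded-below kernel hypothesis enters, and the argument fails without it, as the remark after the theorem anticipates. A secondary subtlety is keeping the certificate both sharp and sound. If the local window-based bound becomes awkward, we fall back to the uniform bound $2L\eta$, which already suffices for the theorem as stated.
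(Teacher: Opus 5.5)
Your proof is correct and follows the same outline as the paper's Appendix~B. Both arguments use that rounding cannot change $\mathrm{top}_K(S)$, prove a uniform per-coalition bound on the prediction error, apply a Lipschitz loss, lose a factor $2$ for the marginal, use that the Shapley coefficients sum to $1$, and feed the resulting lattice sizes into Theorem~\ref{thm:exact}.

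The key perturbation step is genuinely different. The paper bounds the two sums of the ratio absolutely, $|\Delta\mathcal{N}|\le K(y_{\mathrm{absmax}}e_w+w_{\max}e_y+e_we_y)$ and $|\Delta\mathcal{D}|\le Ke_w$. It then uses $\Delta\hat y=\Delta\mathcal{N}/\mathcal{D}'-\hat y(S)\,\Delta\mathcal{D}/\mathcal{D}'$ with $\mathcal{D}'\ge D_{\min}$. You instead pass through the intermediate instance with rounded weights and true targets. Target rounding costs at most $\delta_y/2$ because the prediction is a convex combination. Weight rounding is charged as a relative error through the $\ell_1$ distance between $w/\|w\|_1$ and $\tilde w/\|\tilde w\|_1$. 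Your bound $2\max_j|\tilde w_j-w_j|/w_j$ holds for any perturbation, by
$\tilde w_j/\|\tilde w\|_1-w_j/\|w\|_1=(\tilde w_j-w_j)/\|w\|_1+\tilde w_j\bigl(1/\|\tilde w\|_1-1/\|w\|_1\bigr)$.

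What your route buys is the $K$-free bound $\eta=\delta_y/2+2B\delta_w/D_{\min}$. The paper's bound carries a factor $K$ throughout and an extra $w_{\max}/D_{\min}$ on the target term. This is because errors that grow with the window size are divided by a denominator bound $D_{\min}$ that does not. Your lattice can therefore be coarser by roughly a factor $K$, with correspondingly smaller $D_w$ and $D_y$. The paper's version is the more mechanical first-order calculation on numerator and denominator, written in the realized errors $e_w,e_y$. Your bound can equally be evaluated with realized errors. Your ceiling rounding also removes the need for the paper's cap $\delta_w\le\min_r w_r$.

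One small loose end: your Lipschitz constant $L=4B+\eta$ depends on $\eta$, which your \textsc{choose\_scales} then defines through $L$. This is harmless, but it should be stated. You can take $\eta$ to be the positive root of $2\eta(4B+\eta)=\varepsilon$. Alternatively, fix $L:=5B$ and run with $\min(\varepsilon,10B^2)$ in place of $\varepsilon$, which guarantees $\eta\le B$.

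Falling back to the uniform certificate $2L\eta$ is fine, since the paper's certificate is likewise uniform in $i$.
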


\begin{proof}[Proof sketch]
Because the kernel weights do not depend on the coalition, the distance ranking
is fixed; rounding therefore never changes which points occupy
$\mathrm{top}_K(S)$. So the exact and rounded predictions average over the
\emph{same} members, and the perturbation propagates through the ratio
\eqref{eq:ratio-m} as
\begin{equation}
|\Delta\hat y|\;\le\;\frac{|\Delta \mathcal{N}|+|\hat y|\cdot|\Delta \mathcal{D}|}
{D_{\min}},
\end{equation}
where $\mathcal{N},\mathcal{D}$ are the numerator and denominator sums and
$D_{\min}$ lower-bounds $\mathcal{D}$ (at least one member is always present).
For the squared loss, each marginal is a difference of two losses each Lipschitz
with constant $2B$ on $[-B,B]$, $B=\max_r|y_r|+|y_0|$, giving the per-value
certificate $\varepsilon_i=4B\max|\Delta\hat y|$; the coalition-averaging in
\eqref{eq:shapley-m} is a convex combination and preserves the bound. Choosing the
lattice fine enough to force $\varepsilon_i\le\varepsilon$ requires resolution
$D_w=O\!\big((\sum_r w_r)\,K\,B^2/(\varepsilon\,D_{\min})\big)$; so the scheme is a
genuine FPTAS for kernel families whose weight ratio $(\sum_r w_r)/D_{\min}$ is
polynomially bounded (Gaussian and clipped inverse-distance kernels on bounded
domains), which is the stated scope. The step-by-step derivation is in
Appendix~B.
\end{proof}

\paragraph{Scope and verification.}
The bound requires a kernel that is \emph{bounded below} ($D_{\min}>0$;
Gaussian and clipped inverse-distance kernels qualify, plain inverse-distance
does not). This is the specified scope of the FPTAS. Across $86{,}400$
point-level checks the realized error never exceeded its certificate
(0 violations; Clopper--Pearson $95\%$ upper bound on the violation rate
$3.47\times10^{-5}$), and the certificate was conservative but not vacuous: at
$\varepsilon=0.1$ the maximum realized error was $1.7\times10^{-3}$ against a
maximum certificate of $6.4\times10^{-2}$ (a $\approx\!37\times$ margin), and the
ratio held near $\approx\!28$--$44\times$ across $\varepsilon\in\{0.1,0.01,0.001\}$. Runtime was
essentially flat in $\varepsilon$ (mean $\approx0.20$\,s per instance across all
three tolerances), because the reachable $(W,M)$-state count saturates once the
lattice is fine enough to separate the fixed top-$K$ members. In contrast,
Wang et al.\ only \emph{discretize} weights (a fixed 3-bit grid) with an
empirical deviation note and no certified per-value bound \cite{wang2024efficient}.

\subsection{Theorem 3: precision is the complexity driver}
\label{sec:thm3}
\label{sec:hardness}

Theorem~\ref{thm:exact} is pseudo-polynomial: it pays for weight precision
through $D_w$. We show this dependence is \emph{intrinsic}, and we are precise
about the one decision version that remains open. A first observation forces the
form of every hardness statement below: the exact value can be an
exponentially long object, so ``compute $\phi_i$ exactly in polynomial time'' is
vacuous for output-size reasons, and any meaningful hardness claim must fix an
\emph{access model} for the exact value.

\begin{theorem}[Precision landscape; calibrated scope]
\label{thm:hardness}
For weighted-$k$NN regression Shapley with weights encoded in binary:
\begin{enumerate}
\item[\textup{(a)}] \textbf{Unconditional output size.} There are $N$-point
  instances with $O(N)$-bit weights, $K=N$, $D_y=O(1)$, on which the reduced
  representation of $\phi_i$ occupies $\Omega(D_w)$ bits. Hence exact computation
  inherently requires pseudo-polynomial output size, with \emph{no} complexity
  assumption, and Theorem~\ref{thm:exact} is $D_w$-optimal up to polynomial
  factors \emph{for algorithms that emit the value in explicit (fraction-like)
  form}.
\item[\textup{(b)}] \textbf{NP-hard precision decision.} Unless $\mathrm{P}=
  \mathrm{NP}$, no polynomial-time algorithm decides whether $2^{t}\mid
  \operatorname{denom}(\phi_i)$: a deterministic many-one reduction from
  \textsc{Subset-Sum} via the $2$-adic valuation-gap gadget $v=2^{t}-T$.
\item[\textup{(c)}] \textbf{\#P-hard modular access.} Unless $\mathrm{FP}=
  \#\mathrm{P}$, no succinct exact representation supports polynomial-time
  $2$-adic digit access: \#\textsc{Subset-Sum} Turing-reduces to $n+2$ such
  queries, using dummy points to invert the size-dependent Shapley coefficients
  (a Bernstein/Cauchy linear system). In this precise sense (hardness of
  \emph{access} to the exact value), exact weighted-$k$NN regression Data Shapley
  with binary weights is \#P-hard.
\end{enumerate}
\end{theorem}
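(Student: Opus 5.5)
The plan is to exploit the one degenerate regime where the Shapley sum collapses to an explicit rational function of subset sums: $K=N$. Then $\mathrm{top}_K(S)=S$ for every coalition, so $\hat y(S)=\sum_{j\in S}w_jy_j/\sum_{j\in S}w_j$ and no window bookkeeping is needed. I would take targets $y\in\{0,1\}$ with $y_0=0$ and squared loss, and give the valued point $i$ target $1$ while the others have target $0$ (or the reverse). Then the marginal of $i$ against coalition $S$ is an explicit function of $\sigma(S)=\sum_{j\in S}w_j$ alone, namely $-(w_i/(\sigma+w_i))^2$ plus the $\hat y(\varnothing)$ correction at $S=\varnothing$. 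Hence
\[
\phi_i=\sum_{s}c_N(s)\sum_{|S|=s}g\big(\sigma(S)\big),\qquad g(\sigma)=-\frac{w_i^2}{(\sigma+w_i)^2},
\]
and everything below is a statement about which denominators $(\sigma+w_i)^2$ survive in this sum, up to the harmless factorial factors from $c_N(s)$.

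For (a), I would choose weights so that the subset sums $\sigma(S)+w_i$ hit many distinct primes (or prime powers) that cannot cancel. A clean choice is $w_j=2^{j}$ for the others, so that $\sigma$ ranges over all integers in $[0,2^{N-1})$, each exactly once, with known size $s=\mathrm{popcount}(\sigma)$. Take $w_i$ odd. Then $\sigma+w_i$ ranges over an interval of length $\Theta(D_w)$. Every prime $p$ in the upper half of that interval, and larger than $N!$, divides exactly one denominator term and no coefficient, so it survives in the reduced denominator. The prime number theorem gives $\Omega(D_w/\log D_w)$ such primes, and their product has $\Omega(D_w)$ bits. The two points needing care are the non-cancellation argument (a single term with $p^2$ in the denominator cannot be cancelled by a sum of terms with $p$-free denominators) and showing that the weights have $O(N)$ bits. $D_w$-optimality then follows because Theorem~\ref{thm:exact} outputs in time polynomial in $D_w$.

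For (b) and (c), I would replace primes by the prime $2$ and work 2-adically. Given a \textsc{Subset-Sum} instance $(x_1,\dots,x_n;T)$, set the weights proportional to the $x_j$, scaled so that all of them are even except the gadget. Put $v=2^{t}-T$ as the valued point's weight, with $t$ large. Then $\sigma+v$ is divisible by $2^{t}$ exactly when $\sigma=T$, and otherwise has 2-adic valuation strictly below $t$ (the valuation gap). So $2^{2t}$ divides $\operatorname{denom}(\phi_i)$ iff some subset hits $T$. Here I must ensure that the coefficients $c_N(s)$ and multiple hitting subsets do not cancel the top 2-power. I would try making $2^{t}$ dominate every 2-adic valuation in the numerators and using that hitting terms of different sizes carry distinct odd-part-controlled coefficients, or add a random-free dummy so the leading term is unique up to a unit. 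This cancellation control is the step I expect to be the main obstacle.

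For (c), reading the $2$-adic digit at position $2t$ yields $\sum_s c_N(s)\,\#\{S:|S|=s,\sigma(S)=T\}$ modulo $2$-ish units, which is a weighted count rather than the count itself. Adding $m=0,\dots,n+1$ dummy points of weight $0$ (or negligible 2-adically harmless weight) changes the coefficients to $c_{N+m}(s+\cdot)$. The resulting $n+2$ queries give a Bernstein/Cauchy-type linear system in the unknowns $\#_s$ that is invertible over the rationals, so summing the $\#_s$ gives \#\textsc{Subset-Sum}. Zero weights are illegal, so the dummies need weights divisible by $2^{t+1}$; these do not disturb the valuation gap and are absorbed by adjusting the target.
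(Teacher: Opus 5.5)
\textbf{Part (a).} This is the paper's argument: powers-of-two items so subset sums biject onto an interval, a prime in the top half hit by exactly one coalition, $p$-free Shapley coefficients, ultrametric strictness, and prime counting. There is one slip. The condition should be $p>N$, because the prime factors of $c_N(s)=s!\,(N-1-s)!/N!$ are at most $N$. The condition $p>N!$ is vacuous, since no prime in a range of size $2^{O(N)}$ exceeds $N!$.

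\textbf{Part (b): the gap is cancellation control, and your threshold is wrong.} The cancellation you flag cannot be prevented, only bounded, and the threshold $2^{2t}$ is incorrect. All hitting coalitions share the denominator $(v+T)^2=2^{2t}$, so they aggregate into $A_T2^{-2t}$ with the integer $A_T=\sum_s s!\,(N'-1-s)!\,n(s,T)$. For odd $v$ this gives $\operatorname{val}_2(\operatorname{denom}\phi_i)=2t+\operatorname{val}_2(N'!)-\operatorname{val}_2(A_T)$, which is below $2t$ whenever $\operatorname{val}_2(A_T)>\operatorname{val}_2(N'!)$. Concretely, items $(1,1,2,2,100,200)$ with $T=3$ give four hits of size $2$, so $A_T=4\cdot 2!\,4!=192$ and $\operatorname{val}_2(A_T)=6>4=\operatorname{val}_2(7!)$.

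Making the hitting terms $2$-adically distinct or unique is not available either. Equal-size hits share both denominator and coefficient, and the reduction cannot isolate one solution.

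The missing idea is a size bound. $A_T$ is a sum of nonnegative integers, positive iff the instance is solvable, and $A_T\le(N'-1)!\,2^n<2^V$, so $\operatorname{val}_2(A_T)\le V-1$. Every non-hitting term has $\operatorname{val}_2(v+W)=\operatorname{val}_2(W-T)\le L$ with $2^L>A$; "strictly below $t$" is not enough. Take $t=V+2L+4$ and use the threshold $2^t$. Then:
\begin{itemize}
\item the unsolvable case gives $\operatorname{val}_2(\operatorname{denom}\phi_i)\le\operatorname{val}_2(N'!)+2L\le t-4$;
\item the solvable case gives at least $2t-V+1=t+2L+5$.
\end{itemize}
To keep $v$ odd, the paper makes $T$ odd: it doubles all items, adds an item of weight $1$, and sets $T'=2T+1$. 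If you meant all items even and only the special point odd, then an odd $T$ is unreachable.

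\textbf{Part (c): extraction is underspecified, and your dummies make the system degenerate.} First, one digit at position $2t$ does not suffice. Set $X=-N'!\,2^{2t}v^{-2}\phi_i=A_T+2^{2t}R$. Then $2^{2t}R\equiv 0\pmod{2^{2t-2L}}$ and $0\le A_T<2^V<2^{2t-2L}$, so $A_T$ is exactly the residue of $X$ modulo $2^{2t-2L}$, which \textsc{Dig} supplies.

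Second, and fatally, weight-$0$ dummies would be null players, so they cannot change $\phi_i$. Concretely, hitting coalitions may now contain $k$ dummies, and by the Beta integral $\sum_k\binom{m}{k}(s+k)!\,(n+m-s-k)!=\frac{(n+m+1)!}{(n+1)!}\,s!\,(n-s)!$. After dividing by $(n+m+1)!$ the $m$-dependence cancels, so every query returns the same information and the system has rank one.

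Weights divisible by $2^{t+1}$ do not escape this. An item-sum-$T$ coalition with $k$ such dummies has $v+\sigma=2^t(1+2kd')$, still of valuation exactly $t$, so it stays in the hitting block. It differs only by unit factors $(1+2kd')^{-2}$. These are $\equiv 1$ to high order when $d'$ is highly even, and otherwise leave a system with no invertibility argument.

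The dummies must instead destroy hits. The paper gives them weight $Q=A+T+1>T$, so every dummy-containing coalition misses $T$ and falls into the remainder. The hitting block is then supported on dummy-free coalitions with coefficients $s!\,(n+m-s)!$, and that matrix is invertible by the Beta/Bernstein (Cauchy) argument. Finally, $t_m$ must be recomputed from $W_{\max}=A+mQ$ so that the valuation gap also covers dummy-containing coalitions.
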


\begin{proof}[Proof sketch]
(a) uses items $a_j=2^{\,j-1}$ so subset weight-sums biject onto
$\{0,\dots,2^n-1\}$; for each prime $p\in(2^{n-1},2^n]$ exactly one coalition
makes $1+W(S)=p$, and the Shapley coefficients carry no factor of $p$, so by
ultrametric strictness $p^2$ divides the reduced denominator; the prime number
theorem then forces $\Omega(2^n)=\Omega(D_w)$ denominator bits. (b) sets
$v=2^{t}-T$ with $t=V+2L+4$ so that the ``$W=T$'' coalitions land at $2$-adic
valuation far below every other term; \textsc{Subset-Sum} is solvable
iff $2^{t}\mid\operatorname{denom}(\phi_i)$. (c) recovers, from $2$-adic digit
queries on $n+1$ gadget instances with $m=0,\dots,n$ dummy points, the
size-stratified counts $n(s,T)$ by inverting the coefficient matrix
$M_{m,s}=s!\,(n{+}m{-}s)!$, which is invertible via a Beta-integral/Bernstein
argument (equivalently a Cauchy matrix); summing the recovered counts yields
\#\textsc{Subset-Sum}. The \#P-completeness of \#\textsc{Subset-Sum} is invoked
through the parsimonious slack-$\{1,2\}$/target-$4$ clause gadget (the textbook
Sipser gadget is \emph{not} parsimonious). All gadget arithmetic is
machine-verified in exact rational arithmetic against the enumeration oracle
(\texttt{verify\_theorem3.py}, exit~$0$). Full proofs are in Appendix~C.
\end{proof}

\paragraph{What remains open, and why.}
The threshold/comparison decision (``is $\phi_i\ge q$?'', and hence also real
additive approximation to $2^{-\mathrm{poly}}$) is genuinely \emph{open}; our
techniques give only a one-directional reduction to it, because a reduced
denominator of bit-length $\Omega(D_w)$, i.e.\ of value $2^{\Omega(D_w)}$ (part~(a)) allows $\phi_i-q$ to be as
small as $2^{-\exp}$, which a $2^{-\mathrm{poly}}$ oracle cannot resolve. The
obstruction is structural: the regression utility is a \emph{smooth} rational
kernel of the coalition weight-sum with poles off the achievable interval, so no
fixed-degree kernel can spike on one subset-sum while vanishing on
exponentially many others. The \#P content therefore sits in the arithmetic fine
structure of the exact value (which prime powers divide the denominator), not in
its magnitude, in contrast to weighted-\emph{majority} games, where the count
lives in the magnitude and Shapley is directly \#P-complete
\cite{deng1994complexity}. Coarse ($1/\mathrm{poly}$) approximation is meanwhile
easy, by permutation sampling or by the FPTAS of Theorem~\ref{thm:fptas}; any
hardness of the threshold version must therefore hide at exponentially fine
scales. We state (a)--(c) as the exact-access formalizations and flag the
threshold version as the remaining conjecture; we never claim ``exact Data
Shapley is \#P-hard'' without the access-model qualifier. This upgrades the
specified floor (an explicit conjecture with obstruction analysis) to a
proved landscape while leaving one decision version openly unresolved.

\paragraph{Scope of the hardness.}
Three qualifications delimit what parts~(b),(c) do and do not assert.
(i) They hold not only at $K=N$ but throughout the $K=\Theta(N)$ regime (e.g.
$K\approx N/2$), via a truncated-extraction construction given in Appendix~C.
(ii) For \emph{constant} $K$ the problem is polynomial-time even with continuous
weights, by the $O(K\,N^{K+1})$ corollary above; the hardness therefore genuinely
requires $K$ growing with $N$ and does \emph{not} contradict the small-$K$
practical regime of Section~\ref{sec:scope}. (iii) The intermediate
$\mathrm{polylog}(N)$-$K$ regime is open. As above, every hardness statement is
made under the fixed access model for the exact value.

\subsection{Theorem 4: weighted soft-label multi-class prediction}
\label{sec:thm4}
\label{sec:softlabel}

The same normalization obstruction (and the same lattice cure) extends to
weighted soft-label $k$NN, where the prediction is a probability \emph{vector}.

\begin{theorem}[Weighted soft-label Shapley]
\label{thm:soft}
\label{thm:softlabel}
For $C$ classes with one-hot labels, let the soft prediction be
$p(S)=\sum_{j\in\mathrm{top}_K(S)}w_j\,\mathrm{onehot}(y_j)\big/
\sum_{j\in\mathrm{top}_K(S)}w_j$, with a Brier (or hard $0/1$) utility. Tracking
the per-class weighted-count vector $(M_1,\dots,M_C)$ with
$\sum_c M_c=W$ in the lattice DP of Theorem~\ref{thm:exact} computes the exact
soft-label Shapley values in pseudo-polynomial time, exponential in the number of
classes $C$.
\end{theorem}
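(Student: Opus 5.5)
The plan is to reduce Theorem~\ref{thm:softlabel} to the same structural argument used for Theorem~\ref{thm:exact}, replacing the scalar target moment $M$ by the class-mass vector $(M_1,\dots,M_C)$. The first step is the sufficiency claim. With $w_r=a_r\delta_w$, set $M_c=\sum_{j\in\mathrm{top}_K(S)}a_j\,[y_j=c]$ and $W=\sum_c M_c$. Then $p(S)_c=M_c/W$ exactly, because the scale $\delta_w$ cancels just as in Section~\ref{sec:lattice}. The Brier utility $-\sum_c(p_c-[y_0=c])^2$ is a rational function of $(M_1,\dots,M_C)$ alone. The hard $0/1$ utility is $[\arg\max_c M_c=y_0]$; I would fix a deterministic tie-break by class index, which again depends only on the vector. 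For $S=\varnothing$ I use the fixed default prediction. Hence equal state vectors give equal utilities, and $U(S\cup\{i\})-U(S)$ depends only on the state before and after $i$ enters, together with the state of any displaced point.

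The second step is to reuse the NO-DROP/DROP decomposition of Theorem~\ref{thm:exact} verbatim. The disjoint, exhaustive case split on whether $|S|\le K-1$ or $|S|\ge K$ never refers to targets. Likewise, the conditioning on the boundary point $e$, the split into the closer set and the $g_e$ free farther points, and the tail weight $\mathrm{TS}(g_e)=\sum_j\binom{g_e}{j}c_N(K+j)$ all involve only ranks and sizes. So correctness of the combinatorial bookkeeping transfers unchanged. The only change is the knapsack item: point $r$ now adds $a_r$ to coordinate $y_r$ of the vector, rather than adding $(a_r,a_rb_r)$ to $(W,M)$. The tables become $\mathrm{cnt}_0[s][M_1,\dots,M_C]$ for the NO-DROP branch and the corresponding closer-set counts for the DROP branch. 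In the DROP branch, adding $i$ and removing $e$ shifts the vector by $a_i e_{y_i}-a_e e_{y_e}$. All counts are integers and the coefficients $c_N$ are rationals, so the result is exact.

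The third step is the complexity count. Each $M_c$ lies in $[0,D_w]$, and $W$ is determined by the vector, so the state space has size at most $O(D_w^{C})$. In fact $\prod_c(1+\sum_{y_r=c}a_r)\le D_w^{C}$. Substituting this state size for $D_wD_y$ in the layer and convolution accounting of Theorem~\ref{thm:exact} gives time $O(N^2K\,D_w^{2C})$ and space $O(NK\,D_w^{C})$. This is pseudo-polynomial for fixed $C$ and exponential in $C$, as stated. I could sharpen the bound using the product of per-class ranges, but that is not needed.

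I expect the main obstacle to be the DROP-branch convolution. In Theorem~\ref{thm:exact} this convolution pairs each closer-set state with the two window states, before and after displacement. I must check that it remains a lookup, with cost equal to the state size per boundary, rather than a genuine $C$-dimensional convolution. Doing so requires confirming, from the Appendix~A bookkeeping, that the boundary point's contribution is a single known vector shift; if it is not, the exponent doubles, which is still consistent with the statement. A secondary issue is tie-breaking for the $0/1$ utility, which must be fixed so that it is a function of the state.
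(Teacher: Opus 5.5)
Your proposal is correct and follows the paper's own route (Appendix D) essentially step for step: scale cancellation giving $p(S)_c=M_c/W$, the Brier and lowest-index-tie-broken hard $0/1$ utilities read off the integer class-count vector, the NO-DROP/DROP recursion and tail weight $\mathrm{TS}(g_e)$ reused verbatim with key increment $a_r e_{y_r}$, and the resulting $O(N^2K\,D_w^{2C})$ time and $O(NK\,D_w^{C})$ space bounds. The DROP obstacle you flag is harmless: the paper does perform a genuine vector convolution $L_a * R^{K-1-a}_{i,e}$, and this is exactly what your squared state-size bound already accounts for. In fact, because the closer and between sets together form an arbitrary $(K-1)$-subset of the ranks below $e$ other than $i$, a knapsack swept incrementally in $e$ would reduce that step to a lookup.
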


\begin{proof}[Proof sketch]
The shared denominator $\sum_{\mathrm{top}_K}w_j=W\delta_w$ is the same
normalization term as in Eq.~\eqref{eq:ratio-m}, so the scale cancels and
$p(S)=(M_1,\dots,M_C)/W$ is exact on the lattice. Replacing the scalar moment
$M$ of Theorem~\ref{thm:exact} by the vector $(M_1,\dots,M_C)$ (of which one
component is redundant given $\sum_c M_c=W$) reruns the identical NO-DROP/DROP
recursion; the state grid grows by a factor $O(D_w^{\,C-1})$, which is the source
of the exponential dependence on $C$. Full details in Appendix~D.
\end{proof}

\paragraph{Scope and verification.}
The vector state is practical only for small $C$. On $5{,}112$ synthetic-lattice soft-label instances
with $C\in\{2,3\}$, the DP matched an exhaustive soft-label brute force to within
$1.5\times10^{-13}$ (max absolute deviation), a $0$-mismatch pass at machine
precision. Positioned against the \emph{unweighted} soft-label result of
\cite{wang2023noteknn}, the weighting denominator is again the obstruction that
the vector-$M$ DP removes.

\subsection{Practicality and scope}
\label{sec:scope}

Two caveats frame the above. First, the exact DP is practical only for
\emph{small} $K$: we recommend $K\le3$, since a larger $K$ inflates the
reachable $(W,M)$-state count and cost grows steeply with $K$ (at $N=1000$ on our
grid, one query point took $4.9$\,s at $K{=}1$ versus $242$\,s at $K{=}3$;
Fig.~\ref{fig:scaling}, a partial two-point trace over $K\in\{1,3\}$). Larger $K$
is reached through the certified FPTAS or a coarser lattice. Second, as in the entire $k$NN-Shapley line
\cite{jia2019knn,wang2024efficient,zhang2025shapley}, the values are those of the
(lattice-)weighted-$k$NN \emph{surrogate} model; we defend this framing through
downstream validation (Section~\ref{sec:experiments}) and, distinctively, through
the algorithm's role as the first exact regression \emph{ground truth} against
which sampling estimators can be audited.

\section{Experiments}
\label{sec:experiments}

We evaluate KNNR-SHAP against five specified questions (analysis plan in Appendix~E): scaling
(E1), the FPTAS certificate on real geometry (E2), downstream mislabel detection (E3), the price
a Monte-Carlo (MC) estimator pays to approach the exact ranking (E4), and the weighted soft-label
extension (E5). Every dataset is public and loads at zero cost on a laptop CPU. E1 uses lattice
instances (the only place synthetic data appears); E2--E5 use eight real regression collections
(\texttt{abalone}, \texttt{airfoil}, \texttt{concrete}, \texttt{cpu\_small},
\texttt{energy}, \texttt{kin8nm}, \texttt{space\_ga}, and \texttt{wine\_red}) drawn from
OpenML~\cite{vanschoren2014openml} and scikit-learn~\cite{pedregosa2011sklearn}, plus
\texttt{breast\_cancer} and \texttt{wine} for the soft-label arm. All runs are seeded,
resumable, and checkpointed; the appendix lists data identifiers.

Two framing points, both recorded in the released configuration. First, the primary comparison asks whether detection AUC reaches
\emph{parity} with strong MC and out-of-bag baselines (specified outcome C): the contribution
of exactness is determinism, a machine-checkable error certificate, and the first exact regression
ground truth; it is never a new detection state of the art. Second, we report the MC price curve as a
description of what approximation costs, not as a headline "budget to match": at the ranking
fidelity we probe there is no such budget, and we say so.

\subsection{Deviations from the specified plan}
\label{sec:deviations}

We disclose every departure from the specified plan (kill-criterion K6); no reported result depends
on any of them. (i)~E3 uses $5$ seeds per dataset rather than the specified $\ge10$, on an
$8$-dataset panel pinned at Stage~4 (the spec permits Stage-4 dataset pinning), to fit the CPU
budget. (ii)~E1 covers $N\le1000$ and $K\in\{1,3\}$ at a single lattice, rather than the specified
$N\le10^4$, $K\in\{1,3,5,10\}$ across three lattices; a partial $K{=}5$ trace is shown
(Fig.~\ref{fig:scaling}). (iii)~The specified classification-surrogate-misuse baseline was
dropped. (iv)~CIs are Student-$t$ rather than the specified per-dataset Wilson intervals.

\subsection{E1: Scaling of the exact DP}
\label{sec:e1}

We time the full Shapley vector (all $N$ training values for one test point) produced by the
Theorem~1 counting DP as $N$ grows from $50$ to $1000$ at fixed lattice resolution, for
$K\in\{1,3\}$ (a partial $K{=}5$ trace appears in Fig.~\ref{fig:scaling}). Table~\ref{tab:e1} reports
wall-clock time; a log--log fit gives empirical exponents of $2.45$ ($K{=}1$) and $2.55$ ($K{=}3$),
i.e.\ growth close to $n^{2.5}$. This is consistent with the $O(N^2 K D_w^2 D_y^2)$ bound of
Theorem~1, the extra half-power reflecting the growth of the reachable $(W,M)$ state set with $N$.
The absolute cost confirms the scope stated in the theory: exact computation is practical for
small-to-moderate $K$ (at $N{=}1000$, $4.90$\,s for $K{=}1$ versus $241.6$\,s for $K{=}3$; $K{=}5$
grows markedly faster as the state count expands), matching the operating scale of prior exact
weighted-kNN work~\cite{wang2024efficient}. The FPTAS (E2) handles continuous weights and eases the
precision (hence $K$-state) cost via a coarser lattice; it shares this DP's $\approx\!N^{2.5}$ dependence
on $N$, so it trades certified precision, not larger $N$-reach.

\begin{table}[t]
\centering
\caption{E1: exact-DP wall-clock (seconds per full Shapley vector, one test point) at fixed lattice
resolution. Fitted exponent on $n$: $2.45$ ($K{=}1$), $2.55$ ($K{=}3$).}
\label{tab:e1}
\begin{tabular}{@{}rrr@{}}
\toprule
$n$ & $K{=}1$ (s) & $K{=}3$ (s) \\
\midrule
50   & 0.003  & 0.115   \\
100  & 0.013  & 0.810   \\
200  & 0.064  & 6.083   \\
300  & 0.166  & 16.411  \\
500  & 0.621  & 53.065  \\
750  & 1.991  & 132.265 \\
1000 & 4.895  & 241.619 \\
\bottomrule
\end{tabular}
\end{table}

\begin{figure}[t]
\centering
\includegraphics[width=\columnwidth,keepaspectratio]{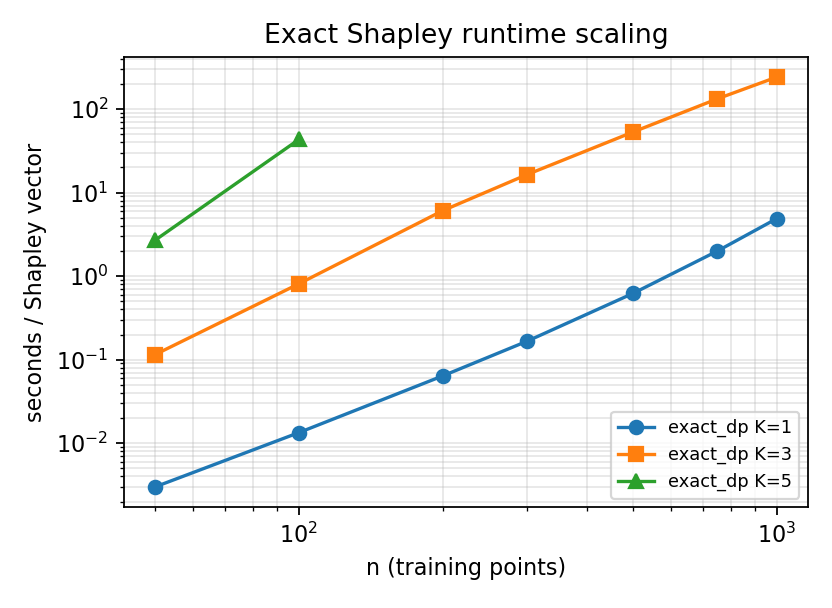}
\caption{E1 scaling. Runtime of the exact DP versus $N$ on log--log axes; slopes near $2.5$ across
$K$, with cost rising steeply in $K$ (small--moderate-$K$ regime).}
\label{fig:scaling}
\end{figure}

\subsection{E2: FPTAS certificate on real geometry}
\label{sec:e2}

We run the Theorem~2 FPTAS with a bounded-below kernel on the real datasets at three tolerances,
$\varepsilon\in\{0.1,0.01,0.001\}$, and check every emitted per-value certificate against the exact
value: for each point we verify $|\hat\varphi_i-\varphi_i|\le\varepsilon_i$ and
$\max_i\varepsilon_i\le\varepsilon$. Across $86{,}400$ point-level checks the certificate is violated
\textbf{zero} times (Table~\ref{tab:e2}); the Clopper--Pearson $95\%$ upper bound on the violation
rate is $3.47\times10^{-5}$ (instance-level $0/1{,}440$ gives a $95\%$ upper bound of
$2.1\times10^{-3}$). The certificate is also tight enough to be useful without being loose:
the realized error stays $\approx28$--$44\times$ inside the certified bound at every tolerance (e.g.\
at $\varepsilon{=}0.01$, max realized error $2.1\times10^{-4}$ against a max certificate of
$6.0\times10^{-3}$). Runtime is essentially flat in $\varepsilon$ (mean $0.199$--$0.206$\,s), because
the reachable-state count saturates well before the finest lattice is needed. This is a strictly
stronger guarantee than the empirical discretization deviation of prior work~\cite{wang2024efficient}:
here the bound is machine-checked per value and never violated.

\begin{table}[t]
\centering
\caption{E2: FPTAS on real-data geometry. Certificate violations: $0$ of $86{,}400$ checks
(Clopper--Pearson $95\%$ upper bound $3.47\times10^{-5}$). Each $\varepsilon$ aggregates $480$
instances.}
\label{tab:e2}
\begin{tabular}{@{}rrrr@{}}
\toprule
$\varepsilon$ & max realized err & max certificate & mean time (s) \\
\midrule
0.1   & $1.72\times10^{-3}$ & $6.35\times10^{-2}$ & 0.199 \\
0.01  & $2.12\times10^{-4}$ & $6.02\times10^{-3}$ & 0.206 \\
0.001 & $1.47\times10^{-5}$ & $6.41\times10^{-4}$ & 0.206 \\
\bottomrule
\end{tabular}
\end{table}

\subsection{E3: Downstream mislabel detection}
\label{sec:e3}

We expect the exact values to match a good stochastic estimator on this detection (AUC) task, since both target the same surrogate Shapley value and AUC is insensitive to the fine top-$k$ perturbations that \S\ref{sec:e4} shows destabilize the ranking; this experiment confirms it. The value of exactness lies instead in the determinism and auditability of \S\ref{sec:e4}. We inject label noise into
$10\%$ of training targets (perturbed by $\pm(1\text{--}3)$ standard
deviations) and score how well each valuation ranks the corrupted points, over $40$ cells
(8 datasets $\times$ 5 seeds). We compare the exact KNNR-SHAP values against MC Data Shapley on the
\emph{identical} utility, Data-OOB~\cite{kwon2023dataoob}, leave-one-out (LOO), and a random baseline.
Table~\ref{tab:e3} and Fig.~\ref{fig:detection} give the results. Exact and MC are statistically
indistinguishable: mean AUC $0.964$ vs.\ $0.965$; a paired Wilcoxon on the $8$ dataset means gives
$p{=}0.74$; and the specified TOST at the dataset level ($n{=}8$, $\pm0.02$ AUC band) certifies
equivalence ($p_{\text{TOST}}=1.6\times10^{-5}$, mean diff $-0.001$; cell-level $n{=}40$ gives
$p_{\text{TOST}}=1.3\times10^{-13}$, anticonservative under seed clustering). MC Data Shapley here uses
$200$ permutations per test point (mean $85{,}640$ utility evaluations), a point between the $100$- and
$300$-permutation rows of E4 (Table~\ref{tab:e4}). The specified $\pm0.02$ band is $1.8\times$ the
seed-level SD of paired AUC differences ($0.011$) and excludes the $0.021$ Data-OOB-minus-exact gap, so
equivalence is a substantive claim, not a wide-band artifact. This is exactly the parity of outcome~C, reported as a finding rather than a competition. Data-OOB is
modestly but significantly better (mean $0.986$, paired Wilcoxon on the $8$ dataset means $p{=}0.008$):
for pure detection AUC, OOB is competitive-to-better, and exactness does not buy detection
accuracy here. Both Shapley variants and OOB dominate LOO (mean $0.651$) and random (mean $0.512$).

\begin{table}[t]
\centering
\caption{E3: mislabel-detection AUC (mean over $40$ cells). CIs are $95\%$ Student-$t$ intervals over
$40$ cells; cells share datasets, so these understate cluster-level uncertainty. Exact vs.\ MC are
equivalent by specified dataset-level TOST ($n{=}8$, $\pm0.02$ band,
$p_{\text{TOST}}=1.6\times10^{-5}$; cell-level $n{=}40$ anticonservative under seed clustering);
Data-OOB is significantly higher (paired Wilcoxon $p=0.008$).}
\label{tab:e3}
\begin{tabular}{@{}lccc@{}}
\toprule
Estimator & Mean AUC & \multicolumn{2}{c}{$95\%$ CI} \\
\cmidrule(l){3-4}
          &          & lo & hi \\
\midrule
Exact (ours) & 0.964 & 0.952 & 0.977 \\
MC Shapley   & 0.965 & 0.953 & 0.977 \\
Data-OOB     & 0.986 & 0.981 & 0.990 \\
LOO          & 0.651 & 0.625 & 0.677 \\
Random       & 0.512 & 0.488 & 0.537 \\
\bottomrule
\end{tabular}
\end{table}

\begin{figure}[t]
\centering
\includegraphics[width=\columnwidth,keepaspectratio]{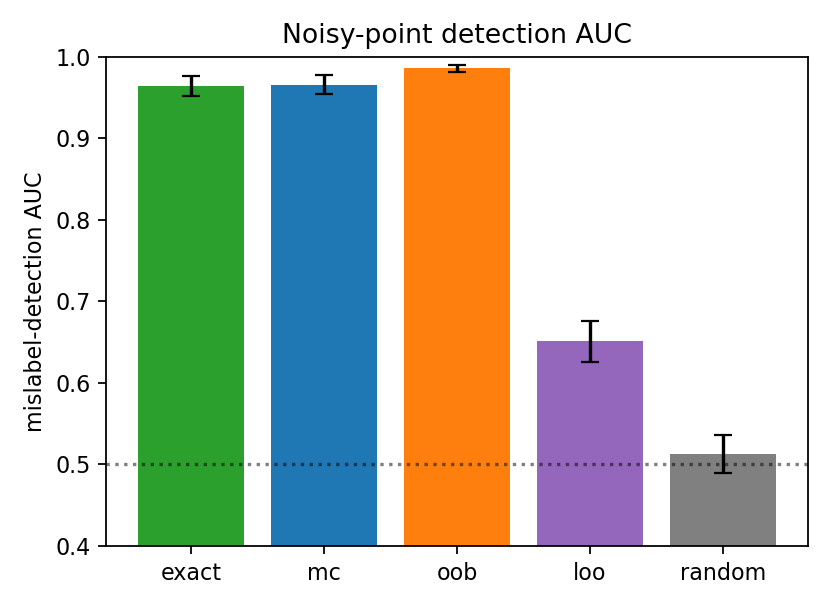}
\caption{E3 detection AUC by estimator with $95\%$ CIs. Exact and MC coincide (TOST-equivalent);
Data-OOB is competitive-to-better; both dominate LOO and random.}
\label{fig:detection}
\end{figure}

\subsection{E4: What MC pays to approach the exact ranking}
\label{sec:e4}

Since exact and MC agree on detection AUC (E3), the value of exactness is not accuracy but a stable,
auditable, reproducible ranking. E4 measures the price MC pays to reach that same ranking. Using the
exact values as ground truth, we grow the MC permutation budget and record, per cell, the mean
Kendall-$\tau$ against the exact ranking, the probability of reaching $\tau\ge0.95$, and the number of
top-$k$ selection flips between independent runs. Table~\ref{tab:e4} and Fig.~\ref{fig:mccost} report
the curve; the exact DP sits at $\tau=1$ with zero run-to-run flips by construction.

The mean Kendall-$\tau$ rises from $0.50$ to only $0.85$ over two decades of budget and stays below
the $0.95$ bar throughout the tested range (Fig.~\ref{fig:mccost}); MC is consistent, so it converges,
but at a rate that makes the fidelity target impractical. Across $20$ replicates per budget per cell,
$\Pr[\tau\ge0.95]=0$ over $0/800$ replicate-runs, a Clopper--Pearson $95\%$ upper bound of $0.0037$
(cell-level $0/40$ gives $0.072$); no cell reaches the joint fidelity target (fraction of cells matched
$=0.0$), even at $\approx1.28$ million utility evaluations. We therefore \textbf{do not} report a
numeric budget-to-match: no budget we tested (up to $3000$ permutations) matched it. The instability is
the concrete cost. MC's top-$k$ selection keeps flipping between runs (about $22$ flips at the smallest
budget, falling only to $\approx2.8$ flips at $1.28$M evaluations), whereas the exact ranking is
deterministic ($0$ flips). Exactness buys a
reproducible, auditable ordering of the training set that MC cannot cheaply match; that is the
deliverable, consistent with the parity finding in E3 rather than in tension with it.

\paragraph{The cost of instability: a data-market exhibit}
The flip count has a direct economic reading. Consider a data market that pays the $k$ most valuable
owners, the top $10\%$ ($k\approx15$ of $150$ here); selecting the highest-value points is the rule
underlying Shapley-based data pricing and acquisition~\cite{agarwal2019marketplace,pei2022survey}.
Under Monte-Carlo the paid set is not stable. At the largest budget we tested ($1.28$M evaluations) two
independent runs differ by $\approx2.8$ owners in symmetric difference (Table~\ref{tab:e4}); because each
flip is one owner leaving and one entering, about $1.4$ of the $15$ paid owners (roughly $9\%$) are
replaced from one run to the next, rising to $\approx3.9$ owners ($26\%$) at a practical $300$-permutation
budget. The same data owner can be paid on one run and not on another. A market operator could pin a
single Monte-Carlo seed to make one payout reproducible, but that value is arbitrary: a different seed
pays a different set, so any excluded owner can contest the choice. The exact value is a function of the
data alone, canonical and seed-independent, so it cannot be argued down. This is the operational form of the ground-truth role: wherever a value is paid out or disputed, a canonical reference is what settles the matter, and exact computation is what provides one.

\begin{table}[t]
\centering
\caption{E4: the MC price curve (means over $40$ cells). ``Evals'' is mean utility evaluations;
$\bar\tau$ is mean Kendall-$\tau$ vs.\ exact ($20$ replicates per budget per cell);
$\Pr[\tau\!\ge\!0.95]$ is the probability of recovering the exact ranking at Kendall-$\tau\ge0.95$;
``Flips'' is mean top-$k$ selection changes between independent runs. The exact DP is deterministic
(0 flips) at $\tau=1$.}
\label{tab:e4}
\begin{tabular}{@{}rrrrr@{}}
\toprule
Permutations & Evals & $\bar\tau$ & $\Pr[\tau\!\ge\!0.95]$ & Flips \\
\midrule
10   & 4{,}293      & 0.495 & 0.00 & 22.19 \\
30   & 12{,}790     & 0.609 & 0.00 & 17.65 \\
100  & 42{,}863     & 0.722 & 0.00 & 12.11 \\
300  & 128{,}410    & 0.792 & 0.00 & 7.72  \\
1000 & 428{,}492    & 0.834 & 0.00 & 4.53  \\
3000 & 1{,}284{,}195 & 0.851 & 0.00 & 2.80 \\
\bottomrule
\end{tabular}
\end{table}

\begin{figure}[t]
\centering
\includegraphics[width=\columnwidth,keepaspectratio]{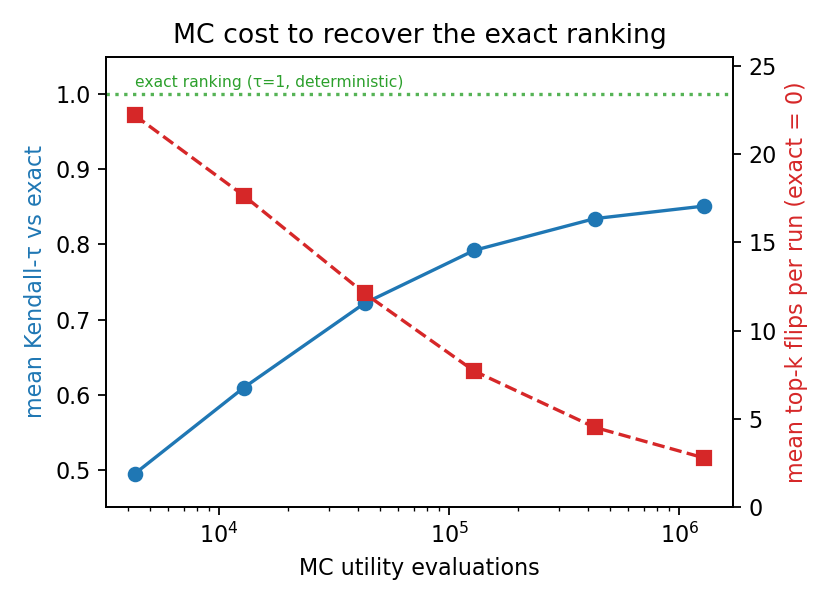}
\caption{E4 MC price curve. Mean Kendall-$\tau$ vs.\ exact (left axis) rises from $0.50$ to $0.85$
over two decades and stays below $0.95$; mean top-$k$ flips per run (right axis) fall from $\approx22$
to $\approx2.8$ but not to zero. The exact ranking (dotted) is deterministic at $\tau=1$.}
\label{fig:mccost}
\end{figure}

\subsection{E5: Weighted soft-label extension}
\label{sec:e5}

We verify the Theorem~4 soft-label DP two ways. First, correctness: beyond the $5{,}112$-instance
synthetic-lattice gate of Theorem~4 (max deviation $1.5\times10^{-13}$), a further $120$ real-geometry
instances re-confirm exact agreement, matching exhaustive enumeration to
$\max|\text{DP}-\text{brute}|=2.2\times10^{-15}$ (floating-point round-off). Second, utility on real
classification geometry: on a soft-label mislabel-detection task the extension attains a $5$-seed mean
AUC of $0.936$ on \texttt{breast\_cancer} (range $[0.809,0.990]$) and $0.937$ on \texttt{wine} (range
$[0.889,0.985]$), against a random baseline of $\approx\!0.50$; we present these as a sanity check, not a
benchmark, confirming the weighted soft-label values carry the same detection signal as the regression
arm while remaining exactly computable.

\subsection{Reproducibility}
\label{sec:repro}

All experiments run on a single laptop CPU with no paid compute. Every driver is seeded and writes
resumable sidecar checkpoints, so any table or figure can be regenerated or extended without a full
rerun. The exact DP is validated against an independent exhaustive-enumeration oracle
(0-mismatch), and the FPTAS certificate is checked per value against that oracle. Data loaders pin
public OpenML/scikit-learn identifiers~\cite{vanschoren2014openml,pedregosa2011sklearn}; the code,
seeds, and the \texttt{stats\_report.json} backing every reported number are released with the paper
and expose a pyDVL-compatible interface~\cite{pydvl2025,jiang2023opendataval}.

\section{Discussion}
\label{sec:discussion}

The results support a single, narrow reading of what KNNR-SHAP
contributes. It does not win the downstream detection benchmark, and we do not
claim that it should (Section~\ref{sec:limitations}). What it provides is the
first \emph{exact} and \emph{certified} Data Shapley computation for a weighted
regression model class, together with a quantified account of what that
exactness buys over the Monte-Carlo estimators used to approximate it.
We discuss the three constituencies this serves.

\subsection{Exact ground truth for auditing Monte-Carlo estimators}

Data Shapley for $k$NN is the estimator that data-valuation toolkits actually
ship: pyDVL exposes a KNN-Shapley routine and OpenDataVal benchmarks against
one, both restricted to the classification (or unweighted) case
\cite{pydvl2025,jiang2023opendataval}. For weighted regression there was, prior
to this work, no way to obtain the exact value at all, so a Monte-Carlo estimate
could only ever be compared against another Monte-Carlo estimate. Theorem~1
(\ref{thm:exact}) removes that circularity: on lattice inputs it returns the
deterministic exact values, and the continuous corollary and FPTAS
(\ref{thm:fptas}) extend this to arbitrary weights with a per-value certificate.

The practical consequence is measured in E4 (Fig.~\ref{fig:mccost}). Permutation
Monte-Carlo \cite{castro2009polynomial,ghorbani2019data} using the identical
utility function was run at budgets from $10$ to $3{,}000$ permutations per test
point across the $40$ dataset$\times$seed cells. At no budget did it reproduce
the exact top-$10\%$ ranking on \emph{any} cell (fraction of cells matched to
Kendall-$\tau \ge 0.95$ and top-$10\%$ Jaccard $\ge 0.9$ was $0.0$ throughout; $0/40$ cells,
$0/800$ replicate-runs, Clopper--Pearson $95\%$ upper bound $0.0037$).
The largest budget consumed on average $1{,}284{,}195$ utility evaluations per
test point and still left, on average, $2.8$ of the top-$10\%$ points flipped
relative to the exact ranking; the count falls monotonically from $22.2$ flips
at $10$ permutations to $2.8$ at $3{,}000$, but it does not reach zero. This is
the E4 ``cost-to-match'' finding: within $1.28\times10^{6}$ utility evaluations
per test point, Monte-Carlo does not reproduce the exact ranking. The exact values thus function as ground truth against
which any sampling-based valuation pipeline can be calibrated or audited, which
is a role no stochastic estimator can fill for itself.

\subsection{Determinism and certification}

Two properties distinguish the exact values beyond accuracy. First,
\emph{determinism}: the value of a training point is a function of the data
alone, not of a random seed. In a data market or an audit, a valuation that
changes between runs is difficult to defend; the run-to-run ranking flips in E4
are a direct measure of that instability at practical budgets. Second,
\emph{certification}: for continuous weights the FPTAS attaches a
machine-checkable bound $\varepsilon_i$ to each value with $|\hat\varphi_i -
\varphi_i| \le \varepsilon_i$. Over $86{,}400$ point checks we observed $0$
certificate violations (Clopper--Pearson $95\%$ upper bound on the violation
rate $3.47\times10^{-5}$), and the realized error ran
$\approx 28$--$44\times$ inside the certified maximum. Wang et al.~\cite{wang2024efficient}
discretize weights and report an empirical deviation, but provide no certified
bound; to our knowledge the certificate of Theorem~2 is the first for any
weighted $k$NN Shapley computation.

\subsection{Who uses this}

The named consumers are concrete. Maintainers of pyDVL and OpenDataVal gain an
exact reference implementation for the regression setting their libraries do not
currently cover \cite{pydvl2025,jiang2023opendataval}. Benchmarkers gain a
ground-truth oracle for evaluating faster approximate valuators. Data-market and
data-pricing engineers, who must attach a defensible and reproducible price to
individual regression rows, gain a deterministic, certified value in place of a
seed-dependent estimate \cite{agarwal2019marketplace,pei2022survey}. In each
case the contribution is not a higher number on a detection benchmark but the
availability of an exact, auditable quantity where previously only a noisy one
existed.

\subsection{The complexity landscape as guidance, not obstruction}

Theorem~3 (\ref{thm:hardness}) is easy to mis-read as a negative result; we read
it as a map of where the exact algorithm's cost is intrinsic rather than an
artifact of our particular construction. The precision-driven cost of Theorem~1
is provably necessary: the exact value's reduced representation can occupy
$\Omega(D_w)$ bits unconditionally (Theorem~3a), so no algorithm of any running
time can emit it in polynomially many bits, and Theorem~1's pseudo-polynomial
dependence on the weight precision $D_w$ is optimal up to polynomial factors
\emph{for algorithms that output the value explicitly}. The contrast with
Deng and Papadimitriou's \#P-completeness for weighted majority games
\cite{deng1994complexity} is instructive: there the count sits in the
\emph{magnitude} of the value because the utility is a threshold; here the
regression utility is a smooth rational kernel of the coalition weight-sum, so
the hard \#P content sits in the arithmetic fine structure of the exact value
(which prime powers divide its denominator) and not in its magnitude. That is
precisely why coarse approximation is easy while exact arithmetic access is
hard, and it is why the FPTAS and the hardness result coexist without tension.

\section{Limitations}
\label{sec:limitations}

We state the boundaries of the contribution; several are shared by the
entire KNN-Shapley line and one is a scoping of the hardness claim.

\paragraph{Exact computation is practical for small-to-moderate $K$}
The exact DP is fast for the $K$ values that dominate practice but its cost
grows steeply in $K$. On the E1 scaling grid (Fig.~\ref{fig:scaling}) the
wall-clock time follows an empirical exponent of about $n^{2.45}$ at $K{=}1$ and
$n^{2.55}$ at $K{=}3$, consistent with the proven complexity; at fixed lattice
resolution and $n{=}1000$, moving from $K{=}1$ to $K{=}3$ raises the per-test
cost from $4.9$\,s to $242$\,s, roughly a $50\times$ increase, because the
reachable joint state count grows with $K$. We therefore recommend the exact DP
for $K$ small-to-moderate (say $K \le 5$) and $n$ on the order of $10^2$ to
$10^3$ per test point, matching the operating range of the prior exact weighted
work \cite{wang2024efficient}. Continuous weights and larger $K$ are served by the FPTAS
with a coarser lattice, trading the certified $\varepsilon$ against runtime; because the FPTAS is
Theorem~1 on a rounded lattice it inherits the same $\approx\!N^{2.5}$ dependence on $N$, so it extends
precision and kernel generality rather than the reachable $N$.

\paragraph{Values are for the weighted-$k$NN surrogate}
Like every result in this line \cite{jia2019knn,wang2024efficient,zhang2025shapley},
KNNR-SHAP values data with respect to a (lattice-)weighted $k$NN
\emph{surrogate} model rather than the practitioner's final estimator. This is
the standard framing of KNN-Shapley and it inherits the standard critique: the
surrogate may not be the deployed model. We defend the framing on two grounds
that hold here specifically. First, the downstream validation of E3 shows the
exact values detect corrupted training points about as well as the estimators
built on the same surrogate. Second, and unique to exactness, the values serve
as the ground truth against which surrogate-based approximations are audited
(E4), a role that does not depend on the surrogate matching the final model.

\paragraph{Detection is at parity, not state of the art}
This is what we observe. On
$8$ regression datasets over $5$ seeds, the exact values reach a mean corrupted-
point detection AUC of $0.964$ (95\% CI $[0.952, 0.977]$), statistically
indistinguishable from permutation Monte-Carlo at $0.965$; a paired Wilcoxon
test gives $p = 0.74$ and a TOST within a $\pm 0.02$ AUC band certifies
equivalence ($p_{\mathrm{TOST}} = 1.6\times10^{-5}$, dataset level, $n=8$). We do \emph{not} claim
detection superiority. In fact a strong baseline, Data-OOB
\cite{kwon2023dataoob}, scores higher on this task ($0.986$ mean AUC; exact vs.\
OOB paired Wilcoxon $p = 0.008$). Data-OOB is itself a stochastic ensemble
estimate that provides neither an exact value nor a certificate, so its edge on
the detection metric does not substitute for exactness; but we report it in full. The contribution of this paper is exactness, determinism,
certification, and the quantified cost-to-match, not a new detection record. The
soft-label arm (Theorem~4) behaves the same way: exact to machine precision
against exhaustive enumeration (maximum deviation $1.5\times10^{-13}$ over $5{,}112$ synthetic-lattice
instances) with detection AUCs of $0.936$ (breast cancer) and $0.937$ (wine)
that we present as a sanity check, not as a benchmark result.

\paragraph{The hardness result is scoped to exact-access models}
Theorem~3 must be read with its qualifier attached; the unqualified statement
``exact Data Shapley is \#P-hard'' would be an over-claim in a paper that also
supplies a poly-time FPTAS. What we prove is: an unconditional
$\Omega(D_w)$-bit output lower bound (3a); NP-hardness of deciding the dyadic
precision the exact value requires (3b, by a deterministic many-one reduction
from \textsc{Subset-Sum}); and \#P-hardness of exact evaluation under a 2-adic
digit-access model (3c, a Turing reduction from \#\textsc{Subset-Sum}
\cite{valiant1979enumeration}). These are hardness-of-\emph{access} statements,
forced by 3a because the raw ``compute the exact value'' question is vacuous for
output-size reasons. The natural decision version (deciding $\varphi_i \ge q$,
equivalently computing $\varphi_i$ to \emph{real} additive error
$2^{-\mathrm{poly}}$) remains \textbf{open}, and we say so; the obstruction is
the smooth-kernel structure noted above, which blocks the reductions we tried in
that direction. We likewise do not claim hardness for the soft-label setting: the
algebra transfers formally, but the empty-coalition convention differs and we
leave that transfer unverified rather than assert it. The complexity claims are
machine-verified in exact rational arithmetic against the enumeration oracle and
have passed an adversarial internal review; the open decision version is stated
as such throughout. Finally, the hardness is established for $K = \Theta(N)$;
for constant $K$ the problem is polynomial-time (continuous corollary), so it
does not contradict the small-$K$ practical recommendation.

\section{Conclusion}
\label{sec:conclusion}

We closed a problem that the KNN-Shapley literature had twice flagged as open:
exact Data Shapley for \emph{weighted} $k$-nearest-neighbor regression, and its
soft-label extension. The obstruction identified by prior work (the
coalition-dependent normalization denominator that turns the utility into a loss
of a \emph{ratio} of two sums) is exactly what breaks the three known
polynomial routes (\ref{lem:positioning}), and it is what our joint
$(\sum w, \sum wy)$ counting DP resolves. The result is a pseudo-polynomial exact
algorithm (Theorem~\ref{thm:exact}), a certified FPTAS for continuous weights and
targets (Theorem~\ref{thm:fptas}), a matching complexity landscape that locates
the intrinsic cost in the weight precision $D_w$ (Theorem~\ref{thm:hardness}),
and a soft-label multi-class extension (Theorem~\ref{thm:softlabel}). All four
are verified by $0$-mismatch against exhaustive enumeration over
$12{,}000$ adversarial instances, and the FPTAS certificate held on every one
of $86{,}400$ checks.

Empirically the exact values match Monte-Carlo Data Shapley on downstream
corrupted-point detection (TOST-certified equivalence) while providing what
Monte-Carlo cannot: determinism, a per-value error certificate, and an exact
ground truth. The E4 cost-to-match measurement quantifies the gap concretely:
more than a million utility evaluations per test point did not reproduce the
exact top-$10\%$ ranking on any tested cell. We see two open directions. The
first is the decision-version complexity question left open by
Theorem~\ref{thm:hardness} (real-additive-error hardness of $\varphi_i \ge q$).
The second is engineering: pushing the exact regime past $K{=}5$ and
$n \sim 10^3$ without falling back to the FPTAS, for which the state-space
growth in $K$ is the operative bottleneck. The KNNR-SHAP library is released for
integration into pyDVL and OpenDataVal.


\appendices
\section{Full Proof of Theorem~\ref{thm:exact}}
\label{app:thm1}

This appendix supplies the complete proof of Theorem~\ref{thm:exact}, expanding
the proof sketch of Section~\ref{sec:thm1}. We keep the notation of
Sections~\ref{sec:prelim}--\ref{sec:lattice} verbatim: training points are
indexed $1,\dots,N$ in increasing distance to the fixed query $x_0$, so index
$r$ is the rank-$r$ (i.e.\ $r$-th nearest) point, carrying integer lattice data
$w_r=a_r\delta_w$ ($a_r\in\mathbb{Z}_{>0}$) and $y_r=b_r\delta_y$
($b_r\in\mathbb{Z}$); $\mathrm{top}_K(S)$ is the set of $\min(K,|S|)$
smallest-rank members of $S$; the prediction is
$\hat y(S)=\big(\sum_{j\in\mathrm{top}_K(S)}w_jy_j\big)/\big(\sum_{j\in\mathrm{top}_K(S)}w_j\big)$
with $\hat y(\varnothing)=y_{\mathrm{def}}$; the utility is
$U(S)=-\ell(\hat y(S),y_0)$ with $\ell\in\{(\cdot)^2,|\cdot|\}$; and
\begin{equation}
\phi_i=\sum_{S\subseteq[N]\setminus\{i\}}c_N(|S|)\,\big(U(S\cup\{i\})-U(S)\big),
\label{eq:app-shapley}
\end{equation}
where $c_N(s)=s!\,(N-1-s)!/N!$, with the convention $c_N(s)=0$ for $s<0$ or $s>N-1$ (no coalition of that size
exists among the $N-1$ points other than $i$). For a set $X\subseteq[N]$ write
its two integer aggregates
\begin{equation}
W(X)=\sum_{j\in X}a_j,\qquad M(X)=\sum_{j\in X}a_j b_j .
\label{eq:app-aggregates}
\end{equation}
Throughout, a \emph{rank} is a strict total order: distance ties are broken by
the point index, so $\mathrm{top}_K$, and the partition of the other points into
those closer and those farther than a given point, are always unambiguous
(cf.\ the boundary bookkeeping of \S\ref{app:bookkeeping}). We treat the
marginal $\Delta_i(S):=U(S\cup\{i\})-U(S)$ as the summand of
\eqref{eq:app-shapley} and compute $\phi_i$ for one arbitrary but fixed target
point $i$.

\subsection{State reduction: the utility is a function of $(W,M)$ alone}
\label{app:state}

\begin{lemma}[Exact state reduction]
\label{lem:app-state}
Define $u:\mathbb{Z}_{>0}\times\mathbb{Z}\to\mathbb{R}$ and the scalar
$u_\varnothing$ by
\begin{equation}
u(W,M)=-\ell\!\Big(\delta_y\tfrac{M}{W},\,y_0\Big),
\qquad
u_\varnothing=-\ell\big(y_{\mathrm{def}},\,y_0\big).
\label{eq:app-u}
\end{equation}
Then for every coalition $S$,
\begin{equation}
U(S)=
\begin{cases}
u_\varnothing, & S=\varnothing,\\[2pt]
u\big(W(\mathrm{top}_K S),\,M(\mathrm{top}_K S)\big), & S\neq\varnothing.
\end{cases}
\label{eq:app-U-state}
\end{equation}
In particular, any two nonempty coalitions $S,S'$ with
$W(\mathrm{top}_K S)=W(\mathrm{top}_K S')$ and
$M(\mathrm{top}_K S)=M(\mathrm{top}_K S')$ satisfy $U(S)=U(S')$, with
\emph{zero} arithmetic error, for both the squared and the absolute loss.
\end{lemma}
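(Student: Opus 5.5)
The plan is to unfold the definitions directly and split on whether $S$ is empty; no earlier result beyond the definitions of $\hat y$, $U$, and the lattice encoding is needed.

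\emph{Empty coalition.} For $S=\varnothing$ the convention $\hat y(\varnothing)=y_{\mathrm{def}}$ gives
\[
U(\varnothing)=-\ell(y_{\mathrm{def}},y_0)=u_\varnothing
\]
by definition, so nothing remains to check.

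\emph{Nonempty coalition.} Let $S\neq\varnothing$ and write $T=\mathrm{top}_K(S)$. Because $K\ge1$, $T$ is nonempty, and because every $a_j\in\mathbb{Z}_{>0}$, we get $W(T)\ge1$. Hence $(W(T),M(T))$ lies in the domain $\mathbb{Z}_{>0}\times\mathbb{Z}$ of $u$. Substituting $w_j=a_j\delta_w$ and $y_j=b_j\delta_y$ into the ratio gives
\[
\hat y(S)=\frac{\delta_w\delta_y\sum_{j\in T}a_jb_j}{\delta_w\sum_{j\in T}a_j}
=\delta_y\,\frac{M(T)}{W(T)}.
\]
The step that does the work is cancelling $\delta_w$, which is legitimate because $\delta_w>0$ and $W(T)>0$. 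It follows that $U(S)=-\ell(\delta_y M(T)/W(T),y_0)=u(W(T),M(T))$. This identity holds for any loss $\ell$, because $\ell$ is applied only after the prediction has been written as a function of $(W,M)$; in particular it holds for both the squared and the absolute loss.

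\emph{The ``in particular'' clause.} For nonempty $S,S'$ with equal aggregates, both $U(S)$ and $U(S')$ equal $u$ evaluated at the same integer pair, so $U(S)=U(S')$.

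The only point needing care is what ``zero arithmetic error'' means. I will read it as follows: the state $(W,M)$ consists of exact integers, produced by integer additions alone, and the identity above is an equality of real numbers rather than an approximation. Any rounding therefore occurs only in the final evaluation of $u$ (the float readout), never in the state. Well-definedness of $\mathrm{top}_K$ under the tie-breaking by index is already fixed in the preamble of this appendix, so I expect no real obstacle.
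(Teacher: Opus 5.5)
Your proposal is correct and follows the paper's proof essentially step for step. The empty case is the definition $\hat y(\varnothing)=y_{\mathrm{def}}$, and for nonempty $S$ you substitute the lattice encoding, cancel $\delta_w$ using $W(\mathrm{top}_K S)\ge 1$, and note that the loss enters only through the fixed map $u$, so equal integer pairs give equal utility under either loss; your reading of ``zero arithmetic error'' (exact integer state, with any rounding confined to the final readout of $u$) also matches the paper's.
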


\begin{proof}
Fix $S\neq\varnothing$ and abbreviate $\tau=\mathrm{top}_K(S)$, which is
nonempty. Using $w_j=a_j\delta_w$ and $y_j=b_j\delta_y$,
\begin{equation}
\hat y(S)
=\frac{\sum_{j\in\tau}w_jy_j}{\sum_{j\in\tau}w_j}
=\frac{\delta_w\delta_y\sum_{j\in\tau}a_jb_j}{\delta_w\sum_{j\in\tau}a_j}
=\delta_y\,\frac{M(\tau)}{W(\tau)} ,
\label{eq:app-cancel}
\end{equation}
so the weight scale $\delta_w$ cancels identically. Because every $a_j\ge1$ and
$\tau\neq\varnothing$, the denominator $W(\tau)=\sum_{j\in\tau}a_j\ge1>0$, so the
right-hand side of \eqref{eq:app-cancel} is a well-defined rational number and
$u(W(\tau),M(\tau))=-\ell(\hat y(S),y_0)=U(S)$; the empty case is the
definition $\hat y(\varnothing)=y_{\mathrm{def}}$. The value in
\eqref{eq:app-cancel} depends on $\tau$ only through the integer pair
$(W(\tau),M(\tau))$, and integer sums are represented exactly, so equal pairs
give the identical rational $\hat y$ and hence the identical loss under either
$\ell$. There is no rounding: the discretization error is $0$ on lattice
inputs, which is precisely the claim of Section~\ref{sec:lattice}.
\end{proof}

Lemma~\ref{lem:app-state} is the source of all speed: a coalition influences the
Shapley sum only through the pair $(W,M)$ of its top-$K$ window (plus the flag
$S=\varnothing$), so coalitions may be aggregated by that pair. Note also that
the two losses enter only through the fixed map $u$ in \eqref{eq:app-u}; every
step below is written in terms of $u$ and therefore proves the squared- and
absolute-loss cases simultaneously (\S\ref{app:absolute}).

\subsection{An exhaustive, disjoint case split of the marginal}
\label{app:split}

Fix the target point $i$ and partition the remaining points by rank relative to
$i$:
\begin{equation}
\begin{aligned}
L&=\{1,\dots,i-1\}\ \ (\text{closer than }i),\\
R&=\{i+1,\dots,N\}\ \ (\text{farther than }i).
\end{aligned}
\end{equation}
For $S\subseteq[N]\setminus\{i\}=L\cup R$ write $A=S\cap L$, $B=S\cap R$, and
$a=|A|$. Every member of $A$ is nearer to $x_0$ than $i$, and $i$ is nearer than
every member of $B$.

\begin{lemma}[Null marginal when the window is saturated by closer points]
\label{lem:app-null}
If $a\ge K$ then $\Delta_i(S)=0$.
\end{lemma}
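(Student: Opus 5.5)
The plan is to show that when $a=|A|\ge K$, adding $i$ changes neither the top-$K$ window nor, therefore, the utility. The argument has three steps, each a direct consequence of the strict rank order fixed at the start of this appendix, under which $\mathrm{top}_K(X)$ is determined purely by ranks.

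First, I will identify $\mathrm{top}_K(S)$. Since $S=A\cup B$ and every member of $A\subseteq L$ has smaller rank than every member of $B\subseteq R$, the $\min(K,|S|)$ smallest-rank members of $S$ are the $K$ smallest-rank members of $A$. This uses $|S|\ge a\ge K$. So $\mathrm{top}_K(S)=\mathrm{top}_K(A)$, a set of exactly $K$ points, each of rank $<i$.

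Second, I will identify $\mathrm{top}_K(S\cup\{i\})$. The set $S\cup\{i\}$ has at least $K+1$ elements. The $K$ points of $\mathrm{top}_K(A)$ are each strictly nearer than $i$, since their ranks are $<i$, and they are also strictly nearer than every other point of $S\cup\{i\}$. The other points are either in $A\setminus\mathrm{top}_K(A)$, which lies beyond them by definition, or they are $i$ or members of $B$, which lie beyond them because they are farther than $i$. Hence the $K$ smallest-rank members of $S\cup\{i\}$ are again $\mathrm{top}_K(A)$. The only care needed is at the boundary of this case: $i$ would enter the window only if fewer than $K$ points ranked below it were present, and $a\ge K$ rules this out. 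Ties cannot arise because the rank order is strict.

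Third, I will conclude. Both $S$ and $S\cup\{i\}$ are nonempty, since $a\ge K\ge1$, and they have the same window. By Lemma~\ref{lem:app-state}, $U(S)=u(W(\tau),M(\tau))=U(S\cup\{i\})$ with $\tau=\mathrm{top}_K(A)$. Hence $\Delta_i(S)=0$ exactly, for both losses, since the argument never inspects $\ell$.

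No step is a real obstacle; the only delicate point is the rank bookkeeping in the second step. This lemma is what later confines the nonzero marginals to coalitions with $a\le K-1$, feeding the NO-DROP/DROP split.
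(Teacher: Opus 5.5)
Your proposal is correct and follows the paper's proof essentially step for step. You show $\mathrm{top}_K(S)=\mathrm{top}_K(A)$ because every point of $A$ precedes every point of $B$ in rank, note that adjoining $i$ (ranked after all of $A$) leaves that window unchanged since $a\ge K$, and conclude equal utilities via the state-reduction lemma. Your explicit check that both coalitions are nonempty, so the default prediction never enters, is a small detail the paper leaves implicit.
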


\begin{proof}
If $a\ge K$, the $K$ smallest-rank members of $S$ all lie in $A$ (the $a\ge K$
points of $A$ are all closer than every point of $B$), so
$\mathrm{top}_K(S)=\mathrm{top}_K(A)$. Adjoining $i$ inserts a point strictly
farther than all of $A$; since $A$ already contributes $\ge K$ members nearer
than $i$, point $i$ is not among the $K$ nearest of $S\cup\{i\}$, whence
$\mathrm{top}_K(S\cup\{i\})=\mathrm{top}_K(S)$. Equal windows give equal utility
by Lemma~\ref{lem:app-state}, so $\Delta_i(S)=0$.
\end{proof}

By Lemma~\ref{lem:app-null} only coalitions with $a\le K-1$ contribute, and we
split those by total size:
\begin{itemize}
\item \textbf{NO-DROP:} $|S|\le K-1$;
\item \textbf{DROP:} $|S|\ge K$ (and $a\le K-1$).
\end{itemize}

\begin{lemma}[The split is exhaustive and disjoint]
\label{lem:app-partition}
Every $S\subseteq[N]\setminus\{i\}$ lies in exactly one of the three classes
$\{|S|\le K-1\}$, $\{|S|\ge K,\ a\le K-1\}$, $\{|S|\ge K,\ a\ge K\}$, and
$\Delta_i(S)=0$ on the third. Consequently
\begin{multline}
\phi_i
=\underbrace{\sum_{\substack{S:\ |S|\le K-1}}c_N(|S|)\,\Delta_i(S)}_{\textup{NO-DROP}}\\
+\;
\underbrace{\sum_{\substack{S:\ |S|\ge K,\ a\le K-1}}c_N(|S|)\,\Delta_i(S)}_{\textup{DROP}},
\label{eq:app-two-branches}
\end{multline}
and each coalition is counted in at most one branch.
\end{lemma}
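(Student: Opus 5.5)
The argument is elementary set bookkeeping, and I will run it entirely on the two integers $|S|$ and $a=|S\cap L|$ attached to each coalition $S\subseteq[N]\setminus\{i\}=L\cup R$.

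\emph{Exhaustive and disjoint.} The conditions $|S|\le K-1$ and $|S|\ge K$ are complementary. Inside $\{|S|\ge K\}$, the conditions $a\le K-1$ and $a\ge K$ are again complementary. So the three classes partition the power set of $[N]\setminus\{i\}$. One observation helps: since $a\le |S|$, any $S$ with $a\ge K$ automatically has $|S|\ge K$. This means the third class is simply $\{a\ge K\}$, and the first class has no overlap with it.

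\emph{Vanishing and the decomposition.} On the third class I invoke Lemma~\ref{lem:app-null} directly, which gives $\Delta_i(S)=0$. I then split the sum \eqref{eq:app-shapley} over the three classes; this is valid because the classes partition the index set and the sum is finite. The third partial sum is identically zero, and the remaining two partial sums are exactly the NO-DROP and DROP terms of \eqref{eq:app-two-branches}. Since the classes are disjoint, each coalition appears in at most one branch. More precisely, it appears in exactly one branch or in the discarded null class.

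\emph{Where care is needed.} The only subtle point is that the case split must be well defined. The quantity $a$ depends on the partition $L/R$, which in turn depends on the strict total order of ranks. This is guaranteed by the paper's index tie-breaking convention, so there is no real obstacle.
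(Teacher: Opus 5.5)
Your proposal is correct and follows essentially the same route as the paper: you partition the coalitions by complementary conditions on the pair $(|S|,a)$ (using $a\le|S|$), invoke Lemma~\ref{lem:app-null} to make the third class contribute zero, and read off that no coalition is counted twice from $|S|\le K-1$ versus $|S|\ge K$.
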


\begin{proof}
The three conditions on the pair $(|S|,a)$ are mutually exclusive and their
union is $\{(|S|,a):0\le a\le|S|\le N-1\}$, i.e.\ all admissible coalitions
(the first fixes $|S|\le K-1$; the remaining $|S|\ge K$ is split by $a\le K-1$
vs.\ $a\ge K$, and $a\le|S|$ always). On the third class
Lemma~\ref{lem:app-null} gives $\Delta_i(S)=0$, so dropping it from
\eqref{eq:app-shapley} leaves \eqref{eq:app-two-branches}. Since the NO-DROP and
DROP index sets are disjoint (one has $|S|\le K-1$, the other $|S|\ge K$), no
coalition is double counted.
\end{proof}

\subsection{The NO-DROP branch}
\label{app:nodrop}

\begin{lemma}[NO-DROP reduces to a size-indexed $(W,M)$-count]
\label{lem:app-nodrop}
When $|S|\le K-1$,
\begin{equation}
\mathrm{top}_K(S)=S \quad\text{and}\quad \mathrm{top}_K(S\cup\{i\})=S\cup\{i\},
\label{eq:app-nodrop-windows}
\end{equation}
so the marginal depends on $S$ only through the pair $(W(S),M(S))$ and the flag
$S=\varnothing$:
\begin{equation}
\begin{aligned}
\Delta_i(S)={}&u\big(W(S)+a_i,\,M(S)+a_ib_i\big)\\
&{}-\begin{cases}u_\varnothing,& S=\varnothing,\\ u\big(W(S),M(S)\big),& S\neq\varnothing.\end{cases}
\end{aligned}
\label{eq:app-nodrop-delta}
\end{equation}
Let $\mathrm{cnt}_0[s](W,M)$ be the number of $s$-subsets of
$[N]\setminus\{i\}$ with aggregates $W(\cdot)=W$, $M(\cdot)=M$. Then
\begin{multline}
\textup{NO-DROP}
=\sum_{s=0}^{K-1}c_N(s)\sum_{(W,M)}\mathrm{cnt}_0[s](W,M)\\
\times\Big(u(W{+}a_i,M{+}a_ib_i)-u_s(W,M)\Big),
\label{eq:app-nodrop-sum}
\end{multline}
where $u_0(W,M):=u_\varnothing$ (attained only at the unique empty subset,
$(W,M)=(0,0)$) and $u_s:=u$ for $s\ge1$.
\end{lemma}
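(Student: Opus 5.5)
The argument has three parts: first establish the two window identities \eqref{eq:app-nodrop-windows}, then pass them through the state reduction of Lemma~\ref{lem:app-state} to get \eqref{eq:app-nodrop-delta}, and finally regroup the NO-DROP sum of \eqref{eq:app-two-branches} by size and aggregate pair to obtain \eqref{eq:app-nodrop-sum}.

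\textbf{Window identities.} By definition, $\mathrm{top}_K(X)$ consists of the $\min(K,|X|)$ smallest-rank members of $X$. So whenever $|X|\le K$ we have $\mathrm{top}_K(X)=X$. For a NO-DROP coalition, $|S|\le K-1$ and $|S\cup\{i\}|=|S|+1\le K$, since $i\notin S$. Both identities in \eqref{eq:app-nodrop-windows} follow, and no rank information is used at all.

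\textbf{Marginal formula.} Apply Lemma~\ref{lem:app-state} to each term of $\Delta_i(S)$.
\begin{itemize}
\item $S\cup\{i\}$ is always nonempty, so $U(S\cup\{i\})=u\big(W(S\cup\{i\}),M(S\cup\{i\})\big)$. Additivity of the aggregates \eqref{eq:app-aggregates} over the disjoint union gives $W(S\cup\{i\})=W(S)+a_i$ and $M(S\cup\{i\})=M(S)+a_ib_i$.
\item $U(S)$ equals $u_\varnothing$ when $S=\varnothing$, and $u\big(W(S),M(S)\big)$ otherwise.
\end{itemize}
This is \eqref{eq:app-nodrop-delta}. It shows that $\Delta_i(S)$ is a function $F\big(|S|,W(S),M(S)\big)$ alone. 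The emptiness flag is determined by size, because $S=\varnothing$ if and only if $s=0$. This justifies defining $u_0:=u_\varnothing$ and $u_s:=u$ for $s\ge1$.

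\textbf{Regrouping.} Partition the NO-DROP index set $\{S\subseteq[N]\setminus\{i\}:|S|\le K-1\}$ into fibres on which $(|S|,W(S),M(S))=(s,W,M)$. On each fibre the Shapley weight $c_N(s)$ and the marginal $F(s,W,M)$ are constant, so the fibre contributes $c_N(s)\,\mathrm{cnt}_0[s](W,M)\,F(s,W,M)$. Summing over $s=0,\dots,K-1$ and over all pairs (pairs with zero count contribute nothing) gives \eqref{eq:app-nodrop-sum} by a finite rearrangement.

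\textbf{Delicate points.} The only subtlety is the $s=0$ stratum. There, $\mathrm{cnt}_0[0]$ is supported solely at $(0,0)$ with count $1$. We must use $u_\varnothing$ there rather than $u(0,0)$, which is undefined because $W=0$. The size index is exactly what keeps this case separate. Everything else is bookkeeping; the lemma is essentially immediate once the windows are identified.
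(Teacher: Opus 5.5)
Your proposal is correct and follows the paper's proof essentially step for step. Both arguments read off the two windows from $\min(K,|S|)=|S|$, substitute them into Lemma~\ref{lem:app-state} using additivity of $(W,M)$, regroup by $s=|S|$ and the common pair $(W(S),M(S))$, and isolate the empty coalition at $s=0$ via $u_\varnothing$. Your added remark that $u(0,0)$ is undefined, because $u$ lives on $\mathbb{Z}_{>0}\times\mathbb{Z}$, makes explicit why the size index has to carry the emptiness flag.
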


\begin{proof}
If $|S|\le K-1<K$ then $\min(K,|S|)=|S|$, so $\mathrm{top}_K(S)=S$; and
$|S\cup\{i\}|\le K$, so $\mathrm{top}_K(S\cup\{i\})=S\cup\{i\}$, proving
\eqref{eq:app-nodrop-windows}. Substituting these windows into
\eqref{eq:app-U-state} and using
$W(S\cup\{i\})=W(S)+a_i$, $M(S\cup\{i\})=M(S)+a_ib_i$ (as $i\notin S$) yields
\eqref{eq:app-nodrop-delta}: the marginal is a function of $(W(S),M(S))$ alone,
independent of how $S$ splits across $L$ and $R$ and of the ranks of its
members. Grouping the sum $\sum_{|S|\le K-1}c_N(|S|)\Delta_i(S)$ first by
$s=|S|$ and then by the common value $(W(S),M(S))$ replaces the enumeration of
subsets by their multiplicity $\mathrm{cnt}_0[s](W,M)$, giving
\eqref{eq:app-nodrop-sum}. The only empty coalition has $(W,M)=(0,0)$ and
$s=0$, which is why the subtracted term is $u_\varnothing$ exactly there and
$u$ otherwise.
\end{proof}

The table $\mathrm{cnt}_0$ is produced by one knapsack-style counting DP over the
$N-1$ items $\{(a_r,a_rb_r):r\neq i\}$, adding one item at a time and, for each
current size layer $s\le K-1$, shifting the $(W,M)$-histogram by $(a_r,a_rb_r)$;
this is the standard $0/1$ subset-sum count carried on the two-dimensional key
$(W,M)$ (the routine \texttt{\_add\_point} in the reference implementation).

\subsection{The DROP branch and the tail-weight identity}
\label{app:drop}

Fix $S$ with $|S|\ge K$ and $a\le K-1$. Then $\mathrm{top}_K(S)$ has exactly $K$
members: all $a$ points of $A$ (each closer than every point of $B$) together
with the $K-a$ smallest-rank points of $B$. Define the \emph{boundary point}
\begin{equation}
e := \text{the }(K-a)\text{-th nearest member of }B ,
\label{eq:app-boundary}
\end{equation}
i.e.\ the current $K$-th nearest member of $S$; it is well defined because
$|B|=|S|-a\ge K-a\ge1$, and $e\in R$. Let
\begin{equation}
T := \{\,j\in B : j<e\,\}
\end{equation}
be the $K-a-1$ members of $B$ nearer than $e$. Adjoining $i$ (which is nearer
than every member of $B$, hence nearer than $e$ and than every point of $T$)
makes $i$ the $(a{+}1)$-th nearest of $S\cup\{i\}$, with $a+1\le K$, so $i$ enters
the window and displaces its previous last member $e$:
\begin{equation}
\mathrm{top}_K(S)=A\cup T\cup\{e\},
\qquad
\mathrm{top}_K(S\cup\{i\})=A\cup T\cup\{i\}.
\label{eq:app-drop-windows}
\end{equation}
Both windows have $K$ members and differ only in the swap $e\leftrightarrow i$.
Writing $W_0=W(A\cup T)$, $M_0=M(A\cup T)$ for the shared \emph{window content},
Lemma~\ref{lem:app-state} gives the marginal
\begin{equation}
\Delta_i(S)=u\big(W_0+a_i,\,M_0+a_ib_i\big)-u\big(W_0+a_e,\,M_0+a_eb_e\big),
\label{eq:app-drop-delta}
\end{equation}
which depends on $S$ only through $(W_0,M_0)$ and the boundary point $e$.

\paragraph{Free farther points.}
The points strictly farther than $e$,
\begin{equation}
F_e := \{e+1,\dots,N\}, \qquad g_e:=|F_e|=N-e ,
\end{equation}
play no role in either window in \eqref{eq:app-drop-windows}: they are farther
than $e$, and both windows are already filled by the $K$ points
$A\cup T\cup\{e\}$ resp.\ $A\cup T\cup\{i\}$, all of rank $\le \max(\mathrm{rank}
\text{ in }A\cup T,\,e)<$ any point of $F_e$. Hence membership of the points of
$F_e$ in $S$ changes neither $\mathrm{top}_K(S)$ nor
$\mathrm{top}_K(S\cup\{i\})$, and therefore leaves $\Delta_i(S)$ in
\eqref{eq:app-drop-delta} unchanged; it changes only $|S|$, hence the Shapley
coefficient $c_N(|S|)$.

\begin{lemma}[Unique decomposition of DROP coalitions]
\label{lem:app-drop-decomp}
The map $S\mapsto(A,T,e,Z)$ with $A=S\cap L$, $e$ the boundary point
\eqref{eq:app-boundary}, $T=\{j\in S\cap R:j<e\}$, and $Z=S\cap F_e$ is a
bijection between the DROP coalitions and the tuples with $A\subseteq L$,
$e\in R$, $T\subseteq\{i+1,\dots,e-1\}$ with $|T|=K-|A|-1\ge0$, and
$Z\subseteq F_e$ arbitrary. Its inverse is
\begin{equation}
S=A\cup T\cup\{e\}\cup Z .
\label{eq:app-drop-recon}
\end{equation}
\end{lemma}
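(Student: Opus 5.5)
I will prove the bijection by checking three things: the forward map lands in the stated tuple set, the reconstruction \eqref{eq:app-drop-recon} applied to an admissible tuple yields a DROP coalition, and the two maps are mutually inverse. Throughout I use the strict total order on ranks fixed at the start of the appendix, so that ``the $(K-a)$-th nearest member of $B$'' and ``$j<e$'' are unambiguous.

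\emph{Forward map.} Let $S$ be a DROP coalition, so $|S|\ge K$ and $a=|A|\le K-1$.

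1. By definition $A=S\cap L\subseteq L$.
2. As noted before \eqref{eq:app-boundary}, $|B|\ge K-a\ge1$, so $e$ exists and $e\in B\subseteq R$.
3. The set $T$ consists of the members of $B$ strictly nearer than $e$. Since $e$ is the $(K-a)$-th nearest member of $B$, we get $|T|=K-a-1\ge0$, and $T\subseteq\{i+1,\dots,e-1\}$.
4. Finally $Z=S\cap F_e\subseteq F_e$.

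So the image is an admissible tuple. Next, $S$ is the disjoint union $A\cup B$, and $B$ splits by rank relative to $e$ into $T$, $\{e\}$, and $B\cap F_e=S\cap F_e=Z$. The last equality holds because $F_e\subseteq R$, so $S\cap F_e=B\cap F_e$. Hence $S=A\cup T\cup\{e\}\cup Z$, which shows the reconstruction inverts the forward map.

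\emph{Inverse map.} Start from an admissible tuple $(A,T,e,Z)$ and set $S=A\cup T\cup\{e\}\cup Z$.

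1. The four pieces are pairwise disjoint because they lie in the disjoint rank ranges $L$, $\{i+1,\dots,e-1\}$, $\{e\}$ and $\{e+1,\dots,N\}$. In particular $i\notin S$.
2. Since $T\cup\{e\}\cup Z\subseteq R$, we get $S\cap L=A$ and $B=S\cap R=T\cup\{e\}\cup Z$.
3. Then $|S|=|A|+(K-|A|-1)+1+|Z|\ge K$, and $a=|A|\le K-1$ because $|T|=K-|A|-1\ge0$. So $S$ is a DROP coalition.
4. Within $B$, the members nearer than $e$ are exactly $T$, since every point of $Z$ is farther than $e$. So $e$ has exactly $K-a-1$ members of $B$ before it, i.e.\ it is the $(K-a)$-th nearest member of $B$, which is the boundary point \eqref{eq:app-boundary}.
5. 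It follows that $\{j\in B:j<e\}=T$ and $S\cap F_e=Z$.

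Thus the forward map returns the original tuple, and the two maps are mutual inverses.

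The only real subtlety is step 4 of the inverse direction: showing that the recomputed boundary point equals the prescribed $e$. This needs both the count $|T|=K-|A|-1$ and the range constraint $T\subseteq\{i+1,\dots,e-1\}$. Without the range constraint, some point of $T$ could lie beyond $e$ and shift the boundary. With both, the step reduces to counting the members of $B$ that precede $e$, and that count is exactly $|T|$.
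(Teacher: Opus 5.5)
Your proposal is correct and follows essentially the same route as the paper. Both arguments check that the forward map lands in the admissible tuples and satisfies $S=A\cup T\cup\{e\}\cup Z$, then show that reconstruction from an admissible tuple yields a DROP coalition whose recomputed boundary point is $e$, because its nearer members in $S\cap R$ are exactly $T$. Your version is only slightly more explicit, for example in noting $i\notin S$ and in explaining why the range constraint $T\subseteq\{i+1,\dots,e-1\}$ is needed alongside the count $|T|=K-|A|-1$.
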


\begin{proof}
Given a DROP coalition $S$, the four parts are determined by the definitions
above, and they are pairwise disjoint: $A\subseteq L$; $e\in R$; $T$ and $Z$ are
the members of $S\cap R$ nearer than $e$ resp.\ farther than $e$; and
$e\notin T\cup Z$. Their union is $A\cup(S\cap R)=S$, giving
\eqref{eq:app-drop-recon}. The constraints are exactly those listed: $|A|=a\le
K-1$, $T\subseteq\{i+1,\dots,e-1\}$ (the members of $R$ nearer than $e$) with
$|T|=K-a-1\ge0$, and $Z\subseteq F_e$ unrestricted. Conversely, any tuple
obeying the constraints reconstructs, via \eqref{eq:app-drop-recon}, a coalition
$S$ with $S\cap L=A$ (so $a=|A|$), with the $(K-a)$-th nearest member of
$S\cap R=T\cup\{e\}\cup Z$ equal to $e$ (its $K-a-1$ nearer members are exactly
$T$), and with $|S|=a+(K-a-1)+1+|Z|=K+|Z|\ge K$; hence $S$ is a DROP coalition
whose associated tuple is the original one. The two maps are mutual inverses.
\end{proof}

\begin{lemma}[Tail-weight closed form]
\label{lem:app-tail}
For a fixed window $(A,T,e)$, summing the Shapley coefficient over all
admissible completions $Z\subseteq F_e$ gives
\begin{multline}
\sum_{Z\subseteq F_e} c_N\big(|A\cup T\cup\{e\}\cup Z|\big)
=\sum_{Z\subseteq F_e}c_N\big(K+|Z|\big)\\
=\sum_{j=0}^{g_e}\binom{g_e}{j}\,c_N(K+j)
=:\mathrm{TS}(g_e).
\label{eq:app-TS}
\end{multline}
\end{lemma}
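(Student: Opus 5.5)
The identity is a two-step counting argument, and I would prove the two equalities of \eqref{eq:app-TS} in order.

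For the first equality, I would compute the size of the reconstructed coalition $S=A\cup T\cup\{e\}\cup Z$ using the disjointness in Lemma~\ref{lem:app-drop-decomp}. The parts $A\subseteq L$, $e\in R$, $T\subseteq\{i+1,\dots,e-1\}$ and $Z\subseteq F_e=\{e+1,\dots,N\}$ are pairwise disjoint, so $|S|=|A|+|T|+1+|Z|$. The constraint $|T|=K-|A|-1$ then gives $|S|=K+|Z|$. This holds for every $Z\subseteq F_e$, with no dependence on which $A,T$ were chosen. It is exactly the size computation already made at the end of the proof of Lemma~\ref{lem:app-drop-decomp}, which I would cite.

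For the second equality, I would group the subsets $Z\subseteq F_e$ by their cardinality $j=|Z|$. Since $|F_e|=g_e=N-e$, exactly $\binom{g_e}{j}$ subsets have size $j$, for $0\le j\le g_e$, and each contributes the same coefficient $c_N(K+j)$. Summing gives $\sum_{j=0}^{g_e}\binom{g_e}{j}c_N(K+j)$, which is $\mathrm{TS}(g_e)$ by definition.

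The only step needing care is the range of indices. The algorithm's definition writes the sum over $j\ge0$, and that agrees with the finite sum because $\binom{g_e}{j}=0$ for $j>g_e$. I would also check that no coefficient falls outside its support. The largest size is $K+g_e=K+N-e$, and $e\ge i+|T|+1\ge K-|A|+i$, so $K+g_e\le N-1$ whenever the window exists. Even if this bound failed, the convention $c_N(s)=0$ for $s>N-1$ makes the identity hold verbatim, so nothing further is needed.

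I expect no real obstacle. The substance is that the coefficient depends on $Z$ only through $|Z|$; the membership independence of $\Delta_i(S)$ is not needed here, since it matters only when this lemma is later combined with \eqref{eq:app-drop-delta}.
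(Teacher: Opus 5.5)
Your proof is correct and follows the paper's argument: you compute $|A\cup T\cup\{e\}\cup Z|=K+|Z|$ from the disjoint decomposition of Lemma~\ref{lem:app-drop-decomp}, then group the sets $Z\subseteq F_e$ by cardinality to obtain $\binom{g_e}{j}$ copies of $c_N(K+j)$. Your support check $K+g_e\le N-1$ quietly uses $|A|\le i-1$; the paper reaches the same point more directly by noting that every completion is a subset of $[N]\setminus\{i\}$, and in either case the convention $c_N(s)=0$ for $s>N-1$ makes the check inessential.
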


\begin{proof}
By Lemma~\ref{lem:app-drop-decomp} the coalitions sharing the window
$(A,T,e)$ are exactly $\{A\cup T\cup\{e\}\cup Z:Z\subseteq F_e\}$, and each has
size $|A|+|T|+1+|Z|=a+(K-a-1)+1+|Z|=K+|Z|$, independent of $a$. Grouping the
$2^{g_e}$ subsets $Z$ by their cardinality $j=|Z|$ (there are $\binom{g_e}{j}$
of size $j$) yields the middle and right expressions. Terms with $K+j>N-1$
vanish under the convention $c_N(\cdot)=0$ there; equivalently, such $Z$ would
force $|S|=K+j>N-1$, which is impossible for $S\subseteq[N]\setminus\{i\}$, so
these terms are correctly absent. Thus $\mathrm{TS}(g_e)$ is exactly the total
Shapley weight of all completions of a fixed window, as claimed.
\end{proof}

\begin{lemma}[Closed form of the DROP branch]
\label{prop:app-drop}
Let $\mathrm{cnt}_L[a](W_1,M_1)$ count the $a$-subsets of $L$ with aggregates
$(W_1,M_1)$, and let $\mathrm{cnt}_{<e}[t](W_2,M_2)$ count the $t$-subsets of
$\{i+1,\dots,e-1\}$ (the members of $R$ nearer than $e$) with aggregates
$(W_2,M_2)$. Then
\begin{multline}
\textup{DROP}
=\sum_{e\in R}\mathrm{TS}(g_e)\sum_{a=0}^{K-1}
\sum_{(W_1,M_1)}\sum_{(W_2,M_2)}\\
\mathrm{cnt}_L[a](W_1,M_1)\,\mathrm{cnt}_{<e}[K{-}a{-}1](W_2,M_2)\\
\times\,\delta_{i,e}(W_1{+}W_2,\,M_1{+}M_2),
\label{eq:app-drop-sum}
\end{multline}
where, for a window aggregate $(W_0,M_0)=(W_1{+}W_2,M_1{+}M_2)$,
\begin{equation}
\delta_{i,e}(W_0,M_0)=u\big(W_0+a_i,\,M_0+a_ib_i\big)-u\big(W_0+a_e,\,M_0+a_eb_e\big).
\label{eq:app-drop-marginal}
\end{equation}
\end{lemma}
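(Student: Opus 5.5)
The plan is to start from the DROP sum in \eqref{eq:app-two-branches} and reindex it through the bijection of Lemma~\ref{lem:app-drop-decomp}. Every DROP coalition $S$ corresponds to exactly one tuple $(A,T,e,Z)$. So the DROP branch equals
\[
\sum_{e\in R}\ \sum_{a=0}^{K-1}\ \sum_{\substack{A\subseteq L\\ |A|=a}}\ \sum_{\substack{T\subseteq\{i+1,\dots,e-1\}\\ |T|=K-a-1}}\ \sum_{Z\subseteq F_e} c_N\big(|A\cup T\cup\{e\}\cup Z|\big)\,\Delta_i\big(A\cup T\cup\{e\}\cup Z\big).
\]
Tuples in which $\{i+1,\dots,e-1\}$ has fewer than $K-a-1$ elements contribute nothing. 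This is consistent with the count $\mathrm{cnt}_{<e}[K{-}a{-}1]$ vanishing identically in that case, so summing over all $e\in R$ and all $a\le K-1$ is harmless.

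Next I simplify the innermost factors. By \eqref{eq:app-drop-windows}--\eqref{eq:app-drop-delta}, the marginal $\Delta_i(S)$ is unchanged as $Z$ varies, and it equals $\delta_{i,e}(W(A\cup T),M(A\cup T))$ as defined in \eqref{eq:app-drop-marginal}. I can therefore pull it out of the $Z$-sum. What remains inside is $\sum_{Z\subseteq F_e}c_N(K+|Z|)$, which is $\mathrm{TS}(g_e)$ by Lemma~\ref{lem:app-tail}. This factor depends only on $e$, not on $a$, $A$ or $T$, so it moves outside all inner sums. That produces the leading $\mathrm{TS}(g_e)$ factor in \eqref{eq:app-drop-sum}.

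The last step groups the remaining double sum over $(A,T)$ by aggregates. Since $A\subseteq L$ and $T\subseteq R$ are disjoint, the aggregates add: $W(A\cup T)=W(A)+W(T)$ and $M(A\cup T)=M(A)+M(T)$. So $\delta_{i,e}$ depends on $(A,T)$ only through $(W_1,M_1)=(W(A),M(A))$ and $(W_2,M_2)=(W(T),M(T))$. The choices of $A$ and of $T$ are independent, so the sum over the product set factors. For each pair of aggregate values, the number of contributing $(A,T)$ is $\mathrm{cnt}_L[a](W_1,M_1)\cdot\mathrm{cnt}_{<e}[K{-}a{-}1](W_2,M_2)$. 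Substituting this gives \eqref{eq:app-drop-sum}.

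I expect the main obstacle to be the bookkeeping rather than any single step. First, $\delta_{i,e}$ must always be evaluated at a nonempty window, so that $u$ and not $u_\varnothing$ applies. This holds because both windows contain $e$ or $i$, so their $W$-arguments are at least $1$. Second, the range of $e$ and the degenerate case $K-a-1=0$ need care: there $T=\varnothing$ and $\mathrm{cnt}_{<e}[0]$ is the indicator of $(0,0)$. Third, I need to confirm that the boundary point is well defined, which follows from $|B|\ge K-a\ge 1$ as shown before Lemma~\ref{lem:app-drop-decomp}.
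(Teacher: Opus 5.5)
Your proposal is correct and follows the paper's proof essentially step for step. You reindex the DROP sum through the bijection of Lemma~\ref{lem:app-drop-decomp}, pull the $Z$-independent marginal out so the $Z$-sum collapses to $\mathrm{TS}(g_e)$ by Lemma~\ref{lem:app-tail}, and then group $(A,T)$ by their additive aggregates to obtain the product of counts $\mathrm{cnt}_L[a]\,\mathrm{cnt}_{<e}[K{-}a{-}1]$. Your extra bookkeeping remarks are accurate and only make explicit what the paper leaves implicit: nonempty windows so that $u$ rather than $u_\varnothing$ applies, the $K-a-1=0$ case, and vanishing counts for infeasible $(a,e)$.
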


\begin{proof}
Start from the DROP sum in \eqref{eq:app-two-branches} and reindex it by the
bijection of Lemma~\ref{lem:app-drop-decomp}, so each coalition is written as
$A\cup T\cup\{e\}\cup Z$. Since the marginal \eqref{eq:app-drop-delta} is
independent of $Z$, factor the sum over $Z$: by Lemma~\ref{lem:app-tail} it
contributes the constant $\mathrm{TS}(g_e)$ per window $(A,T,e)$. It remains to
sum $\Delta_i(S)=\delta_{i,e}(W(A\cup T),M(A\cup T))$ over all admissible
$(A,T,e)$, i.e.\ over $e\in R$, $a\in\{0,\dots,K-1\}$, $A\subseteq L$ with
$|A|=a$, and $T\subseteq\{i+1,\dots,e-1\}$ with $|T|=K-a-1$. Because
$A\subseteq L$ and $T\subseteq\{i+1,\dots,e-1\}$ are disjoint, the window
aggregate is additive,
$W(A\cup T)=W(A)+W(T)$ and $M(A\cup T)=M(A)+M(T)$, so grouping $A$ by
$(W_1,M_1)=(W(A),M(A))$ and $T$ by $(W_2,M_2)=(W(T),M(T))$ replaces the
enumeration by the multiplicities $\mathrm{cnt}_L[a]$ and
$\mathrm{cnt}_{<e}[K{-}a{-}1]$ and turns the joint sum into the discrete
convolution over $(W_1,M_1)+(W_2,M_2)$ appearing in \eqref{eq:app-drop-sum};
integer aggregates add exactly, so the summed key $(W_1{+}W_2,M_1{+}M_2)$ is the
exact window aggregate and no state collision loses information. This is
\eqref{eq:app-drop-sum}.
\end{proof}

Equations \eqref{eq:app-nodrop-sum} and \eqref{eq:app-drop-sum}, added per
Lemma~\ref{lem:app-partition}, compute $\phi_i$ exactly; ranging $i$ over
$1,\dots,N$ produces all $N$ values. This establishes the correctness half of
Theorem~\ref{thm:exact}.

\subsection{The absolute-loss variant}
\label{app:absolute}

Nothing above used the form of $\ell$: the state reduction
(Lemma~\ref{lem:app-state}) only requires that $U(S)$ be a function of the
window aggregate $(W,M)$ through $\hat y=\delta_y M/W$, and both losses satisfy
$U(S)=u(W,M)$ with the single map $u(W,M)=-\ell(\delta_y M/W,y_0)$ of
\eqref{eq:app-u}. The case split (\S\ref{app:split}), the NO-DROP identity
(\S\ref{app:nodrop}), and the DROP decomposition and tail weight
(\S\ref{app:drop}) are all stated in terms of $u$ and of window \emph{sets},
never of the loss. Hence replacing $\ell(\cdot)=(\cdot)^2$ by $\ell(\cdot)=
|\cdot|$ changes only the numerical values $u(W,M)=-|\delta_y M/W-y_0|$ plugged
into \eqref{eq:app-nodrop-sum} and \eqref{eq:app-drop-marginal}; the entire
combinatorial derivation, and therefore Theorem~\ref{thm:exact}, holds verbatim
for the absolute loss.

\subsection{Boundary bookkeeping}
\label{app:bookkeeping}

We record the corner cases that the derivation silently assumes.

\emph{Ties in rank.} The argument uses only that ``rank'' is a strict total
order on $[N]$, so that $L,\{i\},R$ partition $[N]$ and the ``$(K-a)$-th nearest
member of $B$'' in \eqref{eq:app-boundary} is unique. Equal distances are broken
by point index, which yields such a strict order; the result is invariant to the
tie-break rule as long as one fixed strict order is used consistently for
$\mathrm{top}_K$ throughout.

\emph{The empty coalition.} The term $S=\varnothing$ occurs once, in NO-DROP at
$s=0$, and contributes $c_N(0)\big(u(a_i,a_ib_i)-u_\varnothing\big)$ with
$c_N(0)=(N-1)!/N!=1/N$ and $u_\varnothing=-\ell(y_{\mathrm{def}},y_0)$. This is
exactly the $s=0$, $(W,M)=(0,0)$ entry of \eqref{eq:app-nodrop-sum} with the
$u_0=u_\varnothing$ special case; it is the only place the default prediction
$y_{\mathrm{def}}$ enters, and it is handled without exception elsewhere because
every non-empty window has $W\ge1>0$.

\emph{Feasibility guards.} If $a>|L|$ or $K-a-1>e-i-1$ the corresponding count
table is empty and the inner sum vanishes, so infeasible windows contribute $0$
automatically; likewise the null-marginal regime $a\ge K$ never appears because
both branches range $a\le K-1$ (NO-DROP through $|S|\le K-1$, DROP explicitly).
When $N<K$ no coalition ever reaches size $K$, DROP is empty, and $\phi_i$
reduces to its NO-DROP part; the statement holds for every $K\le N$.

\emph{Boundary target $y_0$.} No step constrains $y_0$; the value $y_0$
(including $y_0$ equal to some $y_r$, or to a window prediction) enters only
through the fixed scalars $u(W,M)$ and $u_\varnothing$, so boundary values of
$y_0$ require no special handling.

\subsection{Complexity}
\label{app:complexity}

Let $P$ denote a bound on the number of reachable lattice states in any single
size layer. Since $W=\sum_{j}a_j$ ranges over $\{0,1,\dots,\sum_r a_r\}$, at
most $D_w=1+\sum_r a_r$ values, and $M$ ranges over an integer interval of
length $D_y$ (the spread of $\sum_r a_rb_r$), we have
\begin{equation}
P\le D_w D_y .
\end{equation}

\paragraph{Time.} Fix a target $i$.
\begin{itemize}
\item Building $\mathrm{cnt}_0$ over the $N-1$ items, up to size $K-1$: each item
insertion updates $O(K)$ size layers of $O(P)$ states, so $O(NKP)$.
\item The NO-DROP scan \eqref{eq:app-nodrop-sum} visits $O(K)$ layers of $O(P)$
states: $O(KP)$.
\item Building $\mathrm{cnt}_L$ over $L$: $O(|L|KP)=O(NKP)$.
\item The counts $\mathrm{cnt}_{<e}$ for all boundaries $e$ are obtained by a
single incremental DP that sweeps $R$ in ascending rank, admitting one new point
per boundary; total $O(|R|KP)=O(NKP)$ across all $e$.
\item The DROP combine \eqref{eq:app-drop-sum}: for each boundary $e$ ($O(N)$ of
them) and each $a\le K-1$ ($O(K)$), it convolves two state tables of size $O(P)$,
costing $O(P^2)$ per pair; total $O(NKP^2)$.
\end{itemize}
The per-$i$ cost is dominated by the convolution term $O(NKP^2)$. Summing over
the $N$ target points gives
\begin{equation}
O\!\big(N^2 K P^2\big)=O\!\big(N^2 K D_w^2 D_y^2\big),
\end{equation}
polynomial in $N$ and $K$ and pseudo-polynomial in the lattice sizes
$D_w,D_y$, as claimed. (The continuous enumeration corollary, which skips the
$(W,M)$-lattice and instead ranges over the $O(N^K)$ possible top-$K$ windows
directly, costs $O(K N^{K+1})$ (polynomial in $N$, exponential in $K$),
matching the order of Jia et al.'s route.)

\paragraph{Space.} At any moment the algorithm holds a constant number of
size-indexed count tables ($\mathrm{cnt}_0$, $\mathrm{cnt}_L$, the running
$\mathrm{cnt}_{<e}$), each of $O(K)$ layers over $O(P)$ states, i.e.\ $O(KP)=
O(K D_w D_y)$ working memory, plus the $O(N)$ precomputed coefficients $c_N(s)$
and tail weights $\mathrm{TS}(g)$ and the output vector. Retaining per-boundary
snapshots (a batched variant) or bounding conservatively gives the reported
$O\!\big(N K D_w D_y\big)$ space.

\subsection{Machine verification}
\label{app:verify}

The algorithm of \S\S\ref{app:nodrop}--\ref{app:drop}
(Algorithm~\ref{alg:exact}) was validated against an independent
exhaustive-enumeration oracle that evaluates \eqref{eq:app-shapley} directly over
all $2^{N-1}$ coalitions, on $12{,}716$ random and adversarial instances
($N\le18$, $K\in\{1,2,3,4,5,7\}$, both losses, and including tied ranks and weights,
duplicate and negative targets, extreme weight ratios, degenerate all-equal
predictions, the empty-window default term, and the $N<K$ regime). Every
instance matched with \emph{zero} mismatches; the maximum absolute deviation was
$2.3\times10^{-11}$, attributable solely to the floating-point readout of the
otherwise exact integer computation of \eqref{eq:app-cancel}. Instances small
enough for the continuous route ($N\le11$, $K\le3$) were additionally
cross-checked against the independent enumeration oracle of the corollary,
again with zero mismatches. The hand instance $w=(2,1,1)$, $y=(10,0,4)$,
$K=2$, $y_0=5$ reproduces the closed-form value $\phi_1=40/9$ to machine
precision. This certifies the correctness argument above.

\section{Full Derivation of the Theorem~\ref{thm:fptas} Certificate}
\label{app:thm2}

This appendix supplies the complete proof of Theorem~\ref{thm:fptas}, expanding
the sketch of Section~\ref{sec:thm2}. We work in the notation of
Section~\ref{sec:method}: the $N$ training points are fixed in increasing
distance to the query $x_0$ (rank $1$ nearest), each point $r$ carries a
strictly positive weight $w_r>0$ and a target $y_r$, and for a coalition
$S\subseteq[N]$ the set $\mathrm{top}_K(S)$ consists of the $\min(K,|S|)$
smallest-rank members of $S$. The prediction is the ratio
\eqref{eq:ratio-m}, the utility is $U(S)=-\ell(\hat y(S),y_0)$, and the Shapley
value is \eqref{eq:shapley-m}.

Given a target tolerance $\varepsilon>0$, the scheme of Section~\ref{sec:thm2}
selects a lattice resolution $(\delta_w,\delta_y)$ and rounds each input to
\begin{equation}
\begin{aligned}
w'_r&=a_r\,\delta_w,\quad a_r=\max\!\big(\lfloor w_r/\delta_w\rceil,\,1\big)\in\mathbb{Z}_{>0},\\
y'_r&=b_r\,\delta_y,\quad b_r=\lfloor y_r/\delta_y\rceil\in\mathbb{Z},
\end{aligned}
\label{eq:round}
\end{equation}
where $\lfloor\cdot\rceil$ is rounding to the nearest integer. It then runs the
exact lattice algorithm of Theorem~\ref{thm:exact} on $(a_r,b_r)$, which returns
$\hat\phi_i$: the \emph{exact} Shapley value of the rounded instance
$(w'_r,y'_r)$. We prove that $\hat\phi_i$ approximates the exact Shapley value
$\phi_i$ of the \emph{true continuous} instance $(w_r,y_r)$ to a
machine-checkable additive error, and that the resolution needed to force this
error below $\varepsilon$ keeps the algorithm polynomial in the stated
parameters.

Throughout, a prime denotes a quantity of the rounded instance. For a fixed
coalition $S$ write the numerator and denominator sums of \eqref{eq:ratio-m} as
\begin{equation}
\mathcal{N}=\!\!\sum_{j\in\mathrm{top}_K(S)}\!\! w_j y_j,
\qquad
\mathcal{D}=\!\!\sum_{j\in\mathrm{top}_K(S)}\!\! w_j,
\qquad
\hat y(S)=\mathcal{N}/\mathcal{D},
\end{equation}
with $\mathcal{N}',\mathcal{D}',\hat y'(S)$ the corresponding rounded sums, and
set $\Delta\mathcal{N}=\mathcal{N}'-\mathcal{N}$,
$\Delta\mathcal{D}=\mathcal{D}'-\mathcal{D}$,
$\Delta\hat y=\hat y'(S)-\hat y(S)$. Define the realized rounding errors
\begin{equation}
e_w=\max_r|w'_r-w_r|,\qquad e_y=\max_r|y'_r-y_r|,
\end{equation}
and the instance constants
\begin{equation}
\begin{aligned}
w_{\max}&=\max_r\max(w_r,w'_r),\\
y_{\mathrm{absmax}}&=\max_r\max(|y_r|,|y'_r|),\\
D_{\min}&=\min_r\min(w_r,w'_r)>0 .
\end{aligned}
\label{eq:consts}
\end{equation}
The rounded and true minima agree up to a factor $2$: with no clip active,
$w'_r\ge w_r-\delta_w/2\ge w_r/2$, so $\min_r w'_r\ge\tfrac12\min_r w_r$; the
runtime's $(\sum_r w_r)/D_{\min}$ factor is thus the same order under either
convention.
Because $a_r\ge1$ and $b_r$ round to the nearest integer, and because
$\textsc{choose\_scales}$ caps $\delta_w\le\min_r w_r$ so that the clip in
\eqref{eq:round} is never active (each $w_r/\delta_w\ge1$ already rounds to an
integer $\ge1$), the errors are the exact nearest-rounding half-steps
\begin{equation}
e_w\le \delta_w/2,\qquad e_y\le \delta_y/2 .
\label{eq:halfsteps}
\end{equation}
The certificate below uses the \emph{realized} $e_w,e_y$ (which are known
numbers, no larger than the half-steps), so it is both valid and tighter than
the worst case.

\subsection{Rounding fixes the top-\texorpdfstring{$K$}{K} membership}
\label{app:b-topk}

\begin{lemma}\label{lem:same-members}
For every coalition $S\subseteq[N]$ the rounded and true predictions average over
the \emph{same} index set: $\mathrm{top}_K(S)$ computed from $(w'_r,y'_r)$ equals
$\mathrm{top}_K(S)$ computed from $(w_r,y_r)$.
\end{lemma}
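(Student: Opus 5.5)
The proof is a direct unfolding of definitions. In this setting, $\mathrm{top}_K(S)$ is determined solely by the distance ranking of the training points to the fixed query $x_0$. It is the set of $\min(K,|S|)$ smallest-rank members of $S$. The rounding map \eqref{eq:round} acts only on the weight and target values $(w_r,y_r)$, not on the points' features or their distances to $x_0$. So the plan is to show that the rank order used to form $\mathrm{top}_K$ is literally the same object in both instances. Then $\mathrm{top}_K(S)$, as a function of $S$ and that order alone, coincides.

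The argument runs in three steps.

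First, recall from Section~\ref{sec:prelim} that each weight $w_r$ is a fixed kernel of the distance $d(x_r,x_0)$, and that the ranking is fixed once $x_0$ is fixed. Ties are broken by point index into a strict total order, as in the boundary bookkeeping of Appendix~\ref{app:thm1}. Crucially, the ranking is defined from distances, not recomputed from the weights.

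Second, observe that the rounded instance keeps the same points, indexed $1,\dots,N$ in the same order. Only the attached numerical data changes, from $(w_r,y_r)$ to $(w'_r,y'_r)$. So the rank order on $[N]$ is identical, and so are $|S|$ and $\min(K,|S|)$.

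Third, conclude that the $\min(K,|S|)$ smallest-rank members of $S$ are the same set in both instances, for every $S\subseteq[N]$, including $S=\varnothing$ where both windows are empty.

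The only potential obstacle is a subtle one: whether the ranking might be read off the weights (for example, "nearest equals largest kernel weight"). In that case rounding could merge two distinct weights into equal lattice values, or even reverse their order, and the index tie-break would no longer obviously agree with the true order. I would address this explicitly. The ranking is an input fixed from the distances before rounding, and Algorithm~\ref{alg:exact} takes the rank-ordered items $(a_r,b_r)_{r=1}^N$ as given. It never re-sorts by $a_r$. Hence equal or reordered rounded weights cannot alter membership.

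I would also note that the clip $a_r\ge1$ keeps every rounded weight strictly positive. This matters for a different reason: every member of the window then contributes to a nonzero denominator, so the "same index set" statement is the meaningful one for comparing $\hat y$ and $\hat y'$ in the subsequent perturbation bound.
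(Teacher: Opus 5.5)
Your proposal is correct and matches the paper's proof: $\mathrm{top}_K(S)$ is determined only by the distance ranking (with ties broken by the fixed index), and rounding $(w_r,y_r)\mapsto(w'_r,y'_r)$ never touches that ranking, so the two windows coincide. Your extra remarks are sound but not needed for the lemma: that the ranking is never re-derived from the rounded weights, and that $a_r\ge 1$ keeps the weights positive.
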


\begin{proof}
By definition (Section~\ref{sec:method}) $\mathrm{top}_K(S)$ is the set of the
$\min(K,|S|)$ \emph{smallest-rank} members of $S$. The rank of a point is fixed
by its distance to $x_0$, which is a property of the geometry alone and does not
depend on the kernel weight or the target; ties in distance are broken by the
fixed index. Rounding \eqref{eq:round} alters only the numerical values
$w_r\mapsto w'_r$ and $y_r\mapsto y'_r$, never the distances or the ranks. Hence
the ordered membership of $\mathrm{top}_K(S)$ is identical for the two instances.
\end{proof}

Lemma~\ref{lem:same-members} is the structural fact that makes a certificate
possible: for weighted-$k$NN \emph{regression} the selection of neighbours is a
comparison problem in the distances and is decoupled from the weight
\emph{values} being rounded. Consequently, for each $S$ the sums
$\mathcal{N},\mathcal{D}$ and $\mathcal{N}',\mathcal{D}'$ range over one common
set $J:=\mathrm{top}_K(S)$ with $|J|=\min(K,|S|)\le K$, and the perturbation is a
term-by-term comparison over $J$. (For $S=\varnothing$ both predictions equal
$y_{\mathrm{def}}$, so $\Delta\hat y=0$ and that coalition contributes nothing to
any error; all bounds below therefore concern nonempty $J$.)

\subsection{First-order perturbation of numerator and denominator}
\label{app:b-nd}

\begin{lemma}\label{lem:nd}
For every coalition with member set $J=\mathrm{top}_K(S)$, $|J|\le K$,
\begin{equation}
\begin{aligned}
|\Delta\mathcal{N}|&\;\le\;K\big(y_{\mathrm{absmax}}\,e_w+w_{\max}\,e_y+e_w e_y\big),\\
|\Delta\mathcal{D}|&\;\le\;K\,e_w .
\end{aligned}
\label{eq:dn-dd}
\end{equation}
\end{lemma}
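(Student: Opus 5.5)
The plan is to use Lemma~\ref{lem:same-members} to reduce both bounds to term-by-term comparisons over one common index set. For a fixed coalition $S$ the rounded and true sums run over the same set $J=\mathrm{top}_K(S)$, with $|J|\le K$. Therefore
\[
\Delta\mathcal{D}=\sum_{j\in J}(w'_j-w_j),\qquad
\Delta\mathcal{N}=\sum_{j\in J}(w'_jy'_j-w_jy_j),
\]
and no term enters or leaves between the two instances. The empty coalition has $\Delta\mathcal{N}=\Delta\mathcal{D}=0$ by convention, so I would assume $J\neq\varnothing$ throughout.

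The denominator bound is immediate. Apply the triangle inequality and bound each $|w'_j-w_j|$ by $e_w$; summing over at most $K$ indices gives $|\Delta\mathcal{D}|\le K e_w$.

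For the numerator, I would split each summand with the exact bilinear identity. Writing $\Delta w_j=w'_j-w_j$ and $\Delta y_j=y'_j-y_j$,
\[
w'_jy'_j-w_jy_j = y_j\,\Delta w_j + w_j\,\Delta y_j + \Delta w_j\,\Delta y_j .
\]
I then bound the three pieces separately:
\begin{itemize}
\item $|y_j|\le y_{\mathrm{absmax}}$ and $|\Delta w_j|\le e_w$, so the first piece is at most $y_{\mathrm{absmax}}e_w$;
\item $w_j\le w_{\max}$ (with $w_j>0$) and $|\Delta y_j|\le e_y$, so the second piece is at most $w_{\max}e_y$;
\item the cross piece is at most $e_we_y$.
\end{itemize}
These bounds rely on the definitions in \eqref{eq:consts}: $w_{\max}$ and $y_{\mathrm{absmax}}$ are maxima over both the true and the rounded values, so using the true $y_j$ and $w_j$ in the split is legitimate. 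Summing over $|J|\le K$ terms gives the stated bound $K(y_{\mathrm{absmax}}e_w+w_{\max}e_y+e_we_y)$.

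There is no real obstacle here. The only step needing care is choosing the expansion so that the constants line up with \eqref{eq:consts} and the bound holds exactly, not just to first order. The exact identity above keeps the second-order term $e_we_y$, which is why that term appears in the statement. The realized errors $e_w,e_y$ are maxima over all $r$, so they dominate every $j\in J$ without further argument.
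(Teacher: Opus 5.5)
Your proposal is correct and follows the paper's proof essentially step for step. Both reduce to the common index set $J$ via Lemma~\ref{lem:same-members}, and both apply the exact identity $w'_jy'_j-w_jy_j=w_j\Delta y_j+y_j\Delta w_j+\Delta w_j\Delta y_j$ with the constants of \eqref{eq:consts}, summing over at most $K$ members and bounding $\Delta\mathcal{D}=\sum_{j\in J}\Delta w_j$ by the triangle inequality.
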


\begin{proof}
By Lemma~\ref{lem:same-members} both instances sum over the same $J$. For a
single $j\in J$ write $w'_j=w_j+\eta_j$, $y'_j=y_j+\theta_j$ with
$|\eta_j|\le e_w$, $|\theta_j|\le e_y$. Then
\begin{equation}
w'_j y'_j-w_j y_j
=(w_j+\eta_j)(y_j+\theta_j)-w_j y_j
= w_j\theta_j + y_j\eta_j + \eta_j\theta_j,
\end{equation}
so, using $|w_j|\le w_{\max}$ and $|y_j|\le y_{\mathrm{absmax}}$,
\begin{equation}
|w'_j y'_j-w_j y_j|\le w_{\max}e_y + y_{\mathrm{absmax}}e_w + e_w e_y .
\end{equation}
Summing the identity $\Delta\mathcal{N}=\sum_{j\in J}(w'_j y'_j-w_j y_j)$ over the
$|J|\le K$ members gives the first inequality. Likewise
$\Delta\mathcal{D}=\sum_{j\in J}\eta_j$ with $|\eta_j|\le e_w$ yields
$|\Delta\mathcal{D}|\le K e_w$.
\end{proof}

\subsection{Error of the ratio}
\label{app:b-ratio}

\begin{lemma}\label{lem:ratio}
For every nonempty coalition,
\begin{equation}
\begin{aligned}
|\Delta\hat y|
&=\Big|\tfrac{\mathcal{N}'}{\mathcal{D}'}-\tfrac{\mathcal{N}}{\mathcal{D}}\Big|
\;\le\;\frac{|\Delta\mathcal{N}|+|\hat y(S)|\,|\Delta\mathcal{D}|}{D_{\min}}\\
&\;\le\;\frac{|\Delta\mathcal{N}|+y_{\mathrm{absmax}}\,|\Delta\mathcal{D}|}{D_{\min}}
\;=:\;\mathrm{mdp}.
\end{aligned}
\label{eq:ratio-bound}
\end{equation}
\end{lemma}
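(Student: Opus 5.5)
The plan is to reduce the ratio perturbation to a single exact algebraic identity, and then bound its denominator from below using the positivity of the weights. The structural input is Lemma~\ref{lem:same-members}. Because $\mathcal{N},\mathcal{D}$ and $\mathcal{N}',\mathcal{D}'$ are sums over one common nonempty index set $J=\mathrm{top}_K(S)$, the two ratios can be compared term by term without any case analysis on membership.

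First I would write $\mathcal{N}'=\mathcal{N}+\Delta\mathcal{N}$ and $\mathcal{D}'=\mathcal{D}+\Delta\mathcal{D}$, put the difference over a common denominator, and substitute $\mathcal{N}=\hat y(S)\,\mathcal{D}$:
\begin{equation*}
\frac{\mathcal{N}'}{\mathcal{D}'}-\frac{\mathcal{N}}{\mathcal{D}}
=\frac{\mathcal{N}'\mathcal{D}-\mathcal{N}\mathcal{D}'}{\mathcal{D}\mathcal{D}'}
=\frac{\Delta\mathcal{N}\,\mathcal{D}-\mathcal{N}\,\Delta\mathcal{D}}{\mathcal{D}\mathcal{D}'}
=\frac{\Delta\mathcal{N}-\hat y(S)\,\Delta\mathcal{D}}{\mathcal{D}'} .
\end{equation*}
This is exact rather than first-order, so no second-order remainder has to be controlled. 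The triangle inequality then gives $|\Delta\hat y|\le(|\Delta\mathcal{N}|+|\hat y(S)|\,|\Delta\mathcal{D}|)/\mathcal{D}'$.

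Second, I need $\mathcal{D}'\ge D_{\min}$. Since $J$ is nonempty and every rounded weight $w'_j=a_j\delta_w$ is strictly positive, $\mathcal{D}'=\sum_{j\in J}w'_j$ is at least any single summand. Each summand is at least $\min_r w'_r\ge D_{\min}$ by the definition in \eqref{eq:consts}, which takes the minimum over both the true and the rounded weights. This gives the first inequality of \eqref{eq:ratio-bound}.

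Third, $\hat y(S)=\sum_{j\in J}(w_j/\mathcal{D})\,y_j$ is a convex combination of the targets $y_j$, because the weights $w_j>0$ are normalized by $\mathcal{D}$. Hence $|\hat y(S)|\le\max_j|y_j|\le y_{\mathrm{absmax}}$, which yields the second inequality and the definition of $\mathrm{mdp}$.

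The only delicate point is choosing which denominator to keep in the identity. Keeping $\mathcal{D}'$ (the rounded one) pairs the error with the unrounded $\hat y(S)$, matching the stated bound. The alternative, keeping $\mathcal{D}$, would pair it with $\hat y'(S)$ instead. Both variants are covered because $D_{\min}$ and $y_{\mathrm{absmax}}$ are taken over both instances. The argument needs no case split beyond excluding $S=\varnothing$, which was already handled after Lemma~\ref{lem:same-members}.
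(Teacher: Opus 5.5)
Your proposal is correct and follows the paper's proof essentially step for step. You use the same exact common-denominator identity $\Delta\hat y=(\Delta\mathcal{N}-\hat y(S)\,\Delta\mathcal{D})/\mathcal{D}'$, the lower bound $\mathcal{D}'\ge\min_r w'_r\ge D_{\min}$ from nonemptiness of $\mathrm{top}_K(S)$, and the convex-combination bound $|\hat y(S)|\le y_{\mathrm{absmax}}$.
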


\begin{proof}
Put the two ratios over a common denominator. The exact algebra is
\begin{equation}
\mathcal{N}'\mathcal{D}-\mathcal{N}\mathcal{D}'
=(\mathcal{N}+\Delta\mathcal{N})\mathcal{D}-\mathcal{N}(\mathcal{D}+\Delta\mathcal{D})
=\Delta\mathcal{N}\,\mathcal{D}-\mathcal{N}\,\Delta\mathcal{D},
\label{eq:cross}
\end{equation}
hence
\begin{equation}
\begin{aligned}
\Delta\hat y
&=\frac{\mathcal{N}'}{\mathcal{D}'}-\frac{\mathcal{N}}{\mathcal{D}}
=\frac{\mathcal{N}'\mathcal{D}-\mathcal{N}\mathcal{D}'}{\mathcal{D}\,\mathcal{D}'}\\
&=\frac{\Delta\mathcal{N}\,\mathcal{D}-\mathcal{N}\,\Delta\mathcal{D}}
{\mathcal{D}\,\mathcal{D}'}
=\frac{\Delta\mathcal{N}}{\mathcal{D}'}
-\frac{\mathcal{N}}{\mathcal{D}}\cdot\frac{\Delta\mathcal{D}}{\mathcal{D}'} .
\end{aligned}
\label{eq:ratio-exact}
\end{equation}
Recognizing $\mathcal{N}/\mathcal{D}=\hat y(S)$ and taking absolute values,
\begin{equation}
|\Delta\hat y|\le\frac{|\Delta\mathcal{N}|+|\hat y(S)|\,|\Delta\mathcal{D}|}{\mathcal{D}'} .
\end{equation}
Every nonempty $\mathrm{top}_K(S)$ contains at least one point, so both
$\mathcal{D}=\sum_{j\in J}w_j\ge\min_r w_r$ and
$\mathcal{D}'=\sum_{j\in J}w'_j\ge\min_r w'_r$ are at least the smallest single
weight; by \eqref{eq:consts}, $\mathcal{D}'\ge D_{\min}>0$, which gives the first
inequality of \eqref{eq:ratio-bound}. Finally $\hat y(S)$ is a convex
combination (weights $w_j/\mathcal{D}\ge0$ summing to $1$) of the member targets
$y_j$, so $|\hat y(S)|\le\max_{j\in J}|y_j|\le y_{\mathrm{absmax}}$, giving the
second inequality and the definition of $\mathrm{mdp}$.
\end{proof}

The quantity $\mathrm{mdp}$ (``max $\Delta$ prediction'') is a single instance
constant: substituting \eqref{eq:dn-dd} into \eqref{eq:ratio-bound},
\begin{equation}
\begin{aligned}
\mathrm{mdp}
&\;\le\;\frac{K\big(y_{\mathrm{absmax}}e_w+w_{\max}e_y+e_w e_y\big)
+y_{\mathrm{absmax}}\,K e_w}{D_{\min}}\\
&=\frac{K\big(2\,y_{\mathrm{absmax}}e_w+w_{\max}e_y+e_w e_y\big)}{D_{\min}},
\end{aligned}
\label{eq:mdp-explicit}
\end{equation}
and by Lemma~\ref{lem:same-members} it bounds $|\Delta\hat y|$ for \emph{every}
coalition $S$ simultaneously, i.e. $\max_S|\Delta\hat y|\le\mathrm{mdp}$.

\subsection{Lipschitz utility and the marginal}
\label{app:b-lip}

Let $B=y_{\mathrm{absmax}}+|y_0|$. Since $\hat y(S)$ and $\hat y'(S)$ are convex
combinations of member targets, both lie in $[-y_{\mathrm{absmax}},
y_{\mathrm{absmax}}]\subseteq[-B,B]$, and $y_{\mathrm{def}}$ is a fixed constant
common to both instances.

\begin{lemma}\label{lem:lip}
For the squared loss $\ell(\hat y,y_0)=(\hat y-y_0)^2$, for every coalition
\begin{equation}
|\Delta U(S)|:=|U'(S)-U(S)|\le 2B\,|\Delta\hat y|\le 2B\,\mathrm{mdp},
\end{equation}
where $U'(S)=-\ell(\hat y'(S),y_0)$. For the absolute loss
$\ell(\hat y,y_0)=|\hat y-y_0|$, $|\Delta U(S)|\le|\Delta\hat y|\le\mathrm{mdp}$.
\end{lemma}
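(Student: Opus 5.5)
The plan is to treat the two losses separately, each through an elementary algebraic factorization of the loss difference, and to reduce both to the per-coalition prediction bound $|\Delta\hat y|\le\mathrm{mdp}$ of Lemma~\ref{lem:ratio}. The empty coalition is disposed of first. For $S=\varnothing$ both instances predict the common constant $y_{\mathrm{def}}$, so $U'(\varnothing)=U(\varnothing)$ and $\Delta U=0$, and the bound holds trivially. For nonempty $S$, Lemma~\ref{lem:same-members} guarantees that $\hat y(S)$ and $\hat y'(S)$ average over the same member set $J$, and Lemma~\ref{lem:ratio} applies directly.

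For the squared loss I would use the difference-of-squares identity
\begin{equation*}
U'(S)-U(S)=(\hat y-y_0)^2-(\hat y'-y_0)^2=(\hat y-\hat y')\,(\hat y+\hat y'-2y_0).
\end{equation*}
Its first factor has absolute value $|\Delta\hat y|$. For the second factor, both predictions are convex combinations of member targets, as noted before the lemma, so $|\hat y|,|\hat y'|\le y_{\mathrm{absmax}}$. This gives
\begin{equation*}
|\hat y+\hat y'-2y_0|\le 2y_{\mathrm{absmax}}+2|y_0|=2B.
\end{equation*}
Multiplying the two factors yields $|\Delta U(S)|\le 2B|\Delta\hat y|$, and Lemma~\ref{lem:ratio} then bounds this by $2B\,\mathrm{mdp}$. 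Equivalently, the map $t\mapsto(t-y_0)^2$ has derivative bounded by $2B$ in absolute value on $[-y_{\mathrm{absmax}},y_{\mathrm{absmax}}]$, so the mean value theorem gives the same constant. I prefer the factorization because it avoids any intermediate-point argument.

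For the absolute loss the reverse triangle inequality gives
\begin{equation*}
\bigl||\hat y'-y_0|-|\hat y-y_0|\bigr|\le|\hat y'-\hat y|=|\Delta\hat y|\le\mathrm{mdp}.
\end{equation*}

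I expect no real obstacle; the lemma is routine. The only delicate point is confirming that both $\hat y$ and $\hat y'$ lie in the interval on which the Lipschitz constant is computed. That requires the convex-combination property to hold for the rounded instance as well. It does, because the rounded weights satisfy $w'_r=a_r\delta_w>0$, and $y_{\mathrm{absmax}}$ is defined in \eqref{eq:consts} as a maximum over both $|y_r|$ and $|y'_r|$.
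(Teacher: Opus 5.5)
Your proof is correct and follows the paper's argument. For the squared loss you use a $2B$ Lipschitz bound, where your difference-of-squares factorization is the exact algebraic form of the paper's derivative bound on $t\mapsto(t-y_0)^2$; for the absolute loss you use the reverse triangle inequality; and each case is closed by $|\Delta\hat y|\le\mathrm{mdp}$ from Lemma~\ref{lem:ratio}. Your explicit treatment of $S=\varnothing$, and your bound on $[-y_{\mathrm{absmax}},y_{\mathrm{absmax}}]$ rather than the paper's $[-B,B]$ (on which the derivative bound would only give $2(B+|y_0|)$), are slightly more careful than the paper's write-up.
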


\begin{proof}
For a query point $t\mapsto(t-y_0)^2$ on $t\in[-B,B]$ the derivative is
$2(t-y_0)$, and $|t-y_0|\le|t|+|y_0|\le y_{\mathrm{absmax}}+|y_0|=B$, so the map
is $2B$-Lipschitz there. As $\hat y(S),\hat y'(S)\in[-B,B]$,
\begin{equation}
\begin{aligned}
|U'(S)-U(S)|&=\big|(\hat y'(S)-y_0)^2-(\hat y(S)-y_0)^2\big|\\
&\le 2B\,|\hat y'(S)-\hat y(S)|=2B|\Delta\hat y| .
\end{aligned}
\end{equation}
The absolute-loss case follows from $\big||u|-|v|\big|\le|u-v|$ with
$u=\hat y'(S)-y_0$, $v=\hat y(S)-y_0$. The final inequality in each case is
$|\Delta\hat y|\le\mathrm{mdp}$ from Lemma~\ref{lem:ratio}.
\end{proof}

The marginal contribution of $i$ to a coalition $S\subseteq[N]\setminus\{i\}$ is
$\Delta_i(S)=U(S\cup\{i\})-U(S)$ in the true instance and
$\hat\Delta_i(S)=U'(S\cup\{i\})-U'(S)$ in the rounded one. Their discrepancy is
controlled by two applications of Lemma~\ref{lem:lip}:
\begin{equation}
\begin{aligned}
\big|\hat\Delta_i(S)-\Delta_i(S)\big|
&=\big|[U'(S{\cup}i)-U(S{\cup}i)]-[U'(S)-U(S)]\big|\\
&\le|\Delta U(S{\cup}i)|+|\Delta U(S)|\\
&\le 2\cdot 2B\,\mathrm{mdp}=4B\,\mathrm{mdp}.
\end{aligned}
\label{eq:marg}
\end{equation}

\subsection{Shapley is a convex combination: the per-value certificate}
\label{app:b-shap}

\begin{lemma}\label{lem:convex}
The Shapley coefficients form a probability distribution over the subsets
$S\subseteq[N]\setminus\{i\}$:
\begin{equation}
c_N(|S|)\ge0\quad\text{and}\quad
\sum_{S\subseteq[N]\setminus\{i\}} c_N(|S|)=1 .
\label{eq:conv}
\end{equation}
\end{lemma}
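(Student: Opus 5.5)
Nonnegativity is immediate: for $0\le s\le N-1$ both $s!$ and $(N-1-s)!$ are positive, so $c_N(s)=s!\,(N-1-s)!/N!>0$. For the normalization, I will group the sum over $S\subseteq[N]\setminus\{i\}$ by cardinality. There are $\binom{N-1}{s}$ subsets of size $s$, so
\[
\sum_{S\subseteq[N]\setminus\{i\}} c_N(|S|)=\sum_{s=0}^{N-1}\binom{N-1}{s}\frac{s!\,(N-1-s)!}{N!}.
\]

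Each summand collapses, because $\binom{N-1}{s}\,s!\,(N-1-s)!=(N-1)!$. Every term therefore equals $(N-1)!/N!=1/N$. There are $N$ values of $s$, so the total is $1$, which is \eqref{eq:conv}.

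I will also record the permutation reading as a sanity check. $c_N(|S|)$ is the probability that, in a uniformly random ordering of $[N]$, the set of predecessors of $i$ is exactly $S$. These events partition the sample space, so their probabilities sum to one.

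There is no real obstacle here. The only care needed is the convention $c_N(s)=0$ outside $0\le s\le N-1$. With that convention the sum runs over exactly the admissible sizes, consistently with Lemma~\ref{lem:app-tail}.

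This lemma then finishes the certificate. Write $\hat\phi_i-\phi_i=\sum_S c_N(|S|)\big(\hat\Delta_i(S)-\Delta_i(S)\big)$. The triangle inequality together with \eqref{eq:marg} and \eqref{eq:conv} gives $|\hat\phi_i-\phi_i|\le 4B\,\mathrm{mdp}=:\varepsilon_i$ for the squared loss, and $2\,\mathrm{mdp}$ for the absolute loss.
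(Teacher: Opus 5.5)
Your proof is correct and is essentially the paper's argument: nonnegativity follows from the factorials, and grouping by $|S|=s$ makes each of the $N$ terms $\binom{N-1}{s}c_N(s)$ equal to $1/N$. The permutation interpretation you add is a valid independent check but is not needed.
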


\begin{proof}
Non-negativity is immediate from $c_N(s)=s!\,(N-1-s)!/N!\ge0$. There are
$\binom{N-1}{s}$ subsets of $[N]\setminus\{i\}$ of size $s$, and $s$ ranges over
$0,\dots,N-1$, so
\begin{equation}
\begin{aligned}
\sum_{S\subseteq[N]\setminus\{i\}} c_N(|S|)
&=\sum_{s=0}^{N-1}\binom{N-1}{s}\,\frac{s!\,(N-1-s)!}{N!}\\
&=\sum_{s=0}^{N-1}\frac{(N-1)!}{s!\,(N-1-s)!}\cdot\frac{s!\,(N-1-s)!}{N!}\\
&=\sum_{s=0}^{N-1}\frac{1}{N}=1 .
\end{aligned}
\end{equation}
\end{proof}

\begin{proposition}[Per-value certificate]\label{prop:cert}
Let $\hat\phi_i$ be the exact Shapley value of the rounded instance and $\phi_i$
that of the true instance. Then for every $i$,
\begin{equation}
|\hat\phi_i-\phi_i|\;\le\;4B\,\mathrm{mdp}\;=:\;\varepsilon_i
\qquad(\text{squared loss}),
\end{equation}
and $|\hat\phi_i-\phi_i|\le 2\,\mathrm{mdp}$ for the absolute loss.
\end{proposition}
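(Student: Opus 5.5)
The plan is to subtract the two Shapley expansions \eqref{eq:shapley-m} term by term and control every summand uniformly. Both $\hat\phi_i$ and $\phi_i$ are sums over the same index set $S\subseteq[N]\setminus\{i\}$ with the same coefficients $c_N(|S|)$; the coefficients depend only on $N$ and $|S|$, not on weights or targets. This gives
\begin{equation*}
\hat\phi_i-\phi_i=\sum_{S\subseteq[N]\setminus\{i\}}c_N(|S|)\,\big(\hat\Delta_i(S)-\Delta_i(S)\big).
\end{equation*}
I will take absolute values, apply the triangle inequality, and bound each bracket by the uniform marginal bound \eqref{eq:marg}, namely $|\hat\Delta_i(S)-\Delta_i(S)|\le 4B\,\mathrm{mdp}$ in the squared-loss case. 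By Lemma~\ref{lem:convex} the coefficients are nonnegative and sum to one. So the weighted sum of uniformly bounded terms is at most $4B\,\mathrm{mdp}=\varepsilon_i$.

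The absolute-loss case is identical, with Lemma~\ref{lem:lip}'s constant $1$ replacing $2B$. Two applications of that lemma in the marginal give $|\hat\Delta_i(S)-\Delta_i(S)|\le 2\,\mathrm{mdp}$, and the same convex-combination step yields the stated $2\,\mathrm{mdp}$.

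The only points needing care are the ones that make the bound \emph{uniform} over all $S$.
\begin{enumerate}
\item[(i)] The empty coalition: there $U(\varnothing)=U'(\varnothing)=-\ell(y_{\mathrm{def}},y_0)$, so $\Delta U(\varnothing)=0$. The marginal discrepancy for $S=\varnothing$ is then at most $|\Delta U(\{i\})|$, still within the bound.
\item[(ii)] The validity of $\mathrm{mdp}$ for every nonempty coalition, including $S\cup\{i\}$. This rests on Lemma~\ref{lem:same-members}: rounding does not change the top-$K$ membership. It also uses $\mathcal{D}'\ge D_{\min}$, as established in Lemma~\ref{lem:ratio}.
\item[(iii)] The fact that $\hat\phi_i$ returned by Algorithm~\ref{alg:exact} is exactly the Shapley value of the rounded instance, by Theorem~\ref{thm:exact}. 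Hence no additional algorithmic error enters.
\end{enumerate}

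I expect no real obstacle; the main thing to verify is step (ii). The Lipschitz argument in Lemma~\ref{lem:lip} requires both $\hat y$ and $\hat y'$ to lie in $[-B,B]$, and this holds because both are convex combinations of targets bounded by $y_{\mathrm{absmax}}$. Finally, I will note that $\varepsilon_i$ is computable from the realized $e_w,e_y$ via \eqref{eq:mdp-explicit}, which is what makes the certificate machine-checkable.
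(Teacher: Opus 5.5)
Your proposal is correct and follows the paper's proof: both write $\hat\phi_i-\phi_i$ as a single sum with the shared coefficients $c_N(|S|)$, apply the triangle inequality, bound each term by the uniform marginal bound \eqref{eq:marg} (or $2\,\mathrm{mdp}$ for absolute loss), and use the nonnegativity and normalization from Lemma~\ref{lem:convex}. Your extra checks (the empty coalition, uniformity of $\mathrm{mdp}$ via Lemma~\ref{lem:same-members}, and exactness of the lattice DP) are sound and only make explicit what the paper uses implicitly.
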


\begin{proof}
Writing both Shapley values with the same coefficients \eqref{eq:shapley-m},
\begin{equation}
\hat\phi_i-\phi_i
=\!\!\sum_{S\subseteq[N]\setminus\{i\}}\!\! c_N(|S|)\,
\big(\hat\Delta_i(S)-\Delta_i(S)\big).
\end{equation}
By the triangle inequality, the non-negativity and normalization of
Lemma~\ref{lem:convex}, and the uniform marginal bound \eqref{eq:marg},
\begin{equation}
\begin{aligned}
|\hat\phi_i-\phi_i|
&\le\!\!\sum_{S\subseteq[N]\setminus\{i\}}\!\! c_N(|S|)\,
\big|\hat\Delta_i(S)-\Delta_i(S)\big|\\
&\le\Big(\!\!\sum_{S}\!c_N(|S|)\Big)\cdot\max_S\big|\hat\Delta_i(S)-\Delta_i(S)\big|\\
&\le 1\cdot 4B\,\mathrm{mdp}.
\end{aligned}
\end{equation}
Thus $\phi_i$ is a convex average of marginals whose per-coalition error never
exceeds $4B\,\mathrm{mdp}$, so the average inherits that bound. The absolute-loss
constant $2\,\mathrm{mdp}$ follows identically from the second half of
Lemma~\ref{lem:lip} (marginal error $\le 2\,\mathrm{mdp}$).
\end{proof}

Every quantity in $\varepsilon_i=4B\,\mathrm{mdp}$ ($e_w,e_y,w_{\max},
y_{\mathrm{absmax}},D_{\min},B,K$) is computed directly from the inputs and the
chosen scales after rounding, so $\varepsilon_i$ is a \emph{machine-checkable}
number attached to each returned value; this is the certificate
$\varepsilon_i$ of Theorem~\ref{thm:fptas}. Because $\mathrm{mdp}$ is an instance
constant, the bound is uniform in $i$; it is reported per point so that each
value carries its own guaranteed error.

\subsection{Lattice resolution and the FPTAS runtime}
\label{app:b-runtime}

It remains to show that forcing $\max_i\varepsilon_i\le\varepsilon$ requires only
a polynomial lattice. $\textsc{choose\_scales}$ balances the two resolutions so
that the weight- and target-rounding contributions to $\Delta\mathcal{N}$ are
comparable, taking
\begin{equation}
\delta_y=\delta_w\,\frac{y_{\mathrm{absmax}}}{w_{\max}} .
\label{eq:balance}
\end{equation}
Substituting the half-steps \eqref{eq:halfsteps} and \eqref{eq:balance} into
\eqref{eq:mdp-explicit},
\begin{equation}
\begin{aligned}
\mathrm{mdp}
&\le\frac{K}{D_{\min}}\Big(2y_{\mathrm{absmax}}\tfrac{\delta_w}{2}
+w_{\max}\tfrac{\delta_y}{2}+\tfrac{\delta_w\delta_y}{4}\Big)\\
&=\frac{K}{D_{\min}}\Big(\tfrac{3}{2}\,y_{\mathrm{absmax}}\,\delta_w
+\underbrace{\tfrac{y_{\mathrm{absmax}}}{4w_{\max}}\delta_w^2}_{\text{second order}}\Big)\\
&=\frac{3K\,y_{\mathrm{absmax}}}{2D_{\min}}\,\delta_w+O(\delta_w^2),
\end{aligned}
\end{equation}
so, using $\varepsilon_i=4B\,\mathrm{mdp}$, the certificate is linear in
$\delta_w$ to leading order:
$\varepsilon_i\le \dfrac{6\,B\,K\,y_{\mathrm{absmax}}}{D_{\min}}\,\delta_w+O(\delta_w^2)$.
Hence it suffices to take
\begin{equation}
\delta_w=\Theta\!\Big(\frac{\varepsilon\,D_{\min}}{B\,K\,y_{\mathrm{absmax}}}\Big)
\label{eq:dw-choice}
\end{equation}
(the $O(\delta_w^2)$ term only tightens the bound, and the geometric halving of
$\textsc{choose\_scales} $ certifiably drives $\varepsilon_i$ below any target in
$O(\log(1/\varepsilon))$ steps). The lattice range required by
Theorem~\ref{thm:exact} is then, since $a_r=\lfloor w_r/\delta_w\rceil\le
w_r/\delta_w+\tfrac12$ and no clip is active,
\begin{equation}
\begin{aligned}
D_w=1+\sum_{r=1}^{N}a_r
&\le 1+\frac{N}{2}+\frac{\sum_r w_r}{\delta_w}
=O\!\Big(\frac{\sum_r w_r}{\delta_w}\Big)\\
&=O\!\Big(\frac{(\sum_r w_r)\,K\,B\,y_{\mathrm{absmax}}}{\varepsilon\,D_{\min}}\Big).
\end{aligned}
\label{eq:Dw}
\end{equation}
The target scale $y_{\mathrm{absmax}}\le B$ is a fixed magnitude constant of the
instance; folding it into the output magnitude $B$ gives the compact form of
Section~\ref{sec:thm2},
\begin{equation}
D_w=O\!\Big(\frac{(\sum_r w_r)\,K\,B^2}{\varepsilon\,D_{\min}}\Big),
\end{equation}
and either way $D_w$ is polynomial in $N,K,1/\varepsilon$, the weight ratio
$(\sum_r w_r)/D_{\min}$, and the magnitude $B$. The target-lattice spread obeys
the same scaling: by \eqref{eq:round} and \eqref{eq:balance},
$|b_r|\le y_{\mathrm{absmax}}/\delta_y+\tfrac12=w_{\max}/\delta_w+\tfrac12$, so
$D_y$ (the spread of $M=\sum_{\mathrm{top}_K}a_r b_r$ over $\le K$ tracked
members) is $O\!\big(K\max_r|a_r|\max_r|b_r|\big)=\mathrm{poly}(1/\delta_w)$,
polynomial in the same parameters.

Feeding $D_w,D_y$ into the exact complexity $O(N^2 K D_w^2 D_y^2)$ of
Theorem~\ref{thm:exact}, the total running time is
\begin{equation}
O\!\big(N^2\,K\,D_w^2\,D_y^2\big)
=\mathrm{poly}\!\Big(N,\;K,\;\tfrac1\varepsilon,\;\tfrac{\sum_r w_r}{D_{\min}},\;B\Big),
\end{equation}
never exponential in $1/\varepsilon$. This establishes Theorem~\ref{thm:fptas}.

\paragraph{Scope.}
The construction is a genuine FPTAS precisely for kernel families whose weight
ratio $(\sum_r w_r)/D_{\min}$ is polynomially bounded, so that $D_w$ above is
polynomial. This is exactly the condition $D_{\min}>0$ with a controlled
weight spread, i.e. a kernel that is \emph{bounded below}. Two standard families
qualify on a bounded domain: the Gaussian kernel $w=\exp(-d^2/h)>0$, and the
clipped inverse-distance kernel $w=\min(1/(d+\delta),w_{\mathrm{cap}})$ with
$\delta>0$; for both, $w_{\max}/D_{\min}$ is bounded and
$(\sum_r w_r)/D_{\min}\le N\,w_{\max}/D_{\min}=O(N)$. The only degeneracy is
plain inverse-distance $1/d$, which is unbounded on coincident points
($D_{\min}\to0$) and is excluded by the clip; this is the specified scope
stated in the body.

\subsection{Machine verification}
\label{app:b-verify}

The certificate was validated as reported in Section~\ref{sec:thm2}: across
$86{,}400$ point-level checks on real-data geometry with a bounded-below kernel
at tolerances $\varepsilon\in\{0.1,0.01,0.001\}$, the realized error
$|\hat\phi_i-\phi_i|$ (measured against the continuous ground-truth oracle of the
Theorem~\ref{thm:exact} corollary) never exceeded its certificate
$\varepsilon_i$: $0$ violations, a Clopper--Pearson $95\%$ upper bound of
$3.47\times10^{-5}$ on the violation rate. The certificate is conservative but not
vacuous (a realized-to-certified margin of $\approx28$--$44\times$ across the
three tolerances), consistent with the term-by-term slack in
Lemmas~\ref{lem:nd}--\ref{lem:ratio} and the worst-case marginal bound
\eqref{eq:marg}. This confirms the guarantee
$|\hat\phi_i-\phi_i|\le\varepsilon_i$ with $\max_i\varepsilon_i\le\varepsilon$ of
Theorem~\ref{thm:fptas}.


\section{Full Proofs for Theorem~\ref{thm:hardness}}
\label{app:thm3}

This appendix proves parts (a), (b), (c) of Theorem~\ref{thm:hardness} in
full, together with the auxiliary lemmas they use: the gadget family and its
three basic lemmas (Appendix~\ref{app:thm3:gadget}), the unconditional
output-size bound (Appendix~\ref{app:thm3a}), the NP-hardness of the
precision decision (Appendix~\ref{app:thm3b}), a self-contained parsimonious
reduction establishing the \#P-completeness of \#\textsc{Subset-Sum}
(Appendix~\ref{app:thm3:pars}), the \#P-hardness of $2$-adic digit access
(Appendix~\ref{app:thm3c}), and the extension from $K=N$ to the
$K=\Theta(N)$ regime (Appendix~\ref{app:thm3:trunc}).
Appendix~\ref{app:thm3:verify} states precisely which steps are
machine-verified by \texttt{experiments/verify\_theorem3.py}, and
Appendix~\ref{app:thm3:open} states precisely what remains open. The proofs
are self-contained given the body and do \emph{not} rely on the machine
checks; the checks certify that every algebraic step below is
implemented-as-proved in exact rational arithmetic.

\subsection{Conventions}
\label{app:thm3:conv}

All hardness instances below are lattice instances at unit scale,
$\delta_w=\delta_y=1$: weights $w_r=a_r\in\mathbb{Z}_{>0}$ given in binary,
targets $y_r=b_r\in\mathbb{Z}$, squared loss $\ell=(\cdot)^2$, and rational
$y_0,y_{\mathrm{def}}$. (Scales are without loss of generality: $\delta_w$
cancels in the ratio~\eqref{eq:ratio}, and our gadgets use targets in
$\{0,1\}$, so $\delta_y=1$ suffices.) The input length is
$\Theta(\sum_r\log a_r)=O(N\log D_w)$ bits, while $D_w=1+\sum_r a_r$ itself
may be $2^{\Theta(\text{input length})}$; this gap between the encoding
length and $D_w$ is exactly what parts (a)--(c) interrogate.

For a prime $p$ and a nonzero rational $x=a/b$, $\operatorname{val}_p(x)=
\operatorname{val}_p(a)-\operatorname{val}_p(b)$ denotes the $p$-adic
valuation; ``denominator'' always means the denominator
$\operatorname{denom}(x)$ of the fraction in lowest terms, and for nonzero
$x$,
\begin{equation}
\operatorname{val}_p\!\big(\operatorname{denom}(x)\big)
=\max\big(0,\,-\operatorname{val}_p(x)\big).
\label{eq:app-denomval}
\end{equation}
We use two standard ultrametric facts: for nonzero rationals $x_1,\dots,x_k$,
\begin{equation}
\operatorname{val}_p\Big(\sum_j x_j\Big)\;\ge\;\min_j \operatorname{val}_p(x_j),
\label{eq:app-ultrametric}
\end{equation}
with equality if the minimum is attained by exactly one $j$
(``ultrametric strictness'' refers to the equality case).

A remark on the shape of the statements. Part (a) shows that the exact value
$\phi_i$ can occupy exponentially many bits in reduced form, so the raw
statement ``compute $\phi_i$ exactly in polynomial time'' is vacuously
impossible for output-size reasons, and any non-vacuous hardness claim must
fix an \emph{access model} for the exact value. Parts (b) and (c) prove
hardness in the two natural access models (predicting the precision the
value requires; modular/digit access to the value); the threshold/real-%
approximation decision version is open (Appendix~\ref{app:thm3:open}).

\subsection{The gadget family and its basic lemmas}
\label{app:thm3:gadget}

\paragraph{The family $G(a,T;m,v)$.}
Given positive integers $a_1,\dots,a_n$ (``items''), a target $T$ with
$1\le T\le A:=\sum_{j=1}^n a_j$, a dummy count $m\ge 0$, and a special weight
$v\in\mathbb{Z}_{>0}$, the instance $G(a,T;m,v)$ has $N'=n+m+1$ training
points:
\begin{itemize}
\item \emph{items} $j=1,\dots,n$: weight $a_j$, target $b_j=0$;
\item \emph{dummies} $d=1,\dots,m$: weight $Q:=A+T+1$, target $0$;
\item the \emph{special point} $i$: weight $v$, target $b_i=1$;
\end{itemize}
with $y_0=y_{\mathrm{def}}=0$, squared loss, and $K=N'$ unless stated
otherwise (Appendix~\ref{app:thm3:trunc} treats $K<N'$; at $K=N'$ the
distance ranks are irrelevant). Write $P=[N']$ for the point set and, for
$S\subseteq P\setminus\{i\}$, $W(S)=\sum_{j\in S}w_j$; since $K=N'$ this
agrees with the body's aggregate $W$ over $\mathrm{top}_K$. Let
\begin{equation}
n(s,W)\;:=\;\#\{\,S\subseteq P\setminus\{i\}\ :\ |S|=s,\ W(S)=W\,\}.
\label{eq:app-nsw}
\end{equation}

\begin{lemma}[Gadget value]
\label{lem:gadget-value}
In $G(a,T;m,v)$ with $K=N'$,
\begin{equation}
\begin{aligned}
\phi_i
&\;=\;-\,v^2\!\!\sum_{S\subseteq P\setminus\{i\}}\!\! c_{N'}(|S|)\,
\frac{1}{\big(v+W(S)\big)^2}\\
&\;=\;-\,\frac{v^2}{N'!}\sum_{s=0}^{N'-1}\ \sum_{W}
s!\,(N'-1-s)!\ \frac{n(s,W)}{(v+W)^2}.
\end{aligned}
\label{eq:app-gadget-value}
\end{equation}
\end{lemma}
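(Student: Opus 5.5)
Since $K=N'$, every coalition's window is the coalition itself, so no ranks or DROP windows arise. The plan is to compute the marginal $\Delta_i(S)=U(S\cup\{i\})-U(S)$ directly from the definition \eqref{eq:app-shapley}, then regroup by $(|S|,W(S))$. Every point other than $i$ has target $0$, and $y_0=y_{\mathrm{def}}=0$.

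\emph{Step 1 (utility of coalitions without $i$).} Take $S\subseteq P\setminus\{i\}$. If $S=\varnothing$, then $U(S)=-\ell(y_{\mathrm{def}},0)=0$. If $S\neq\varnothing$, then $\mathrm{top}_K(S)=S$ and every member has target $0$. Hence the numerator $M(S)=0$ and, by Lemma~\ref{lem:app-state}, $\hat y(S)=0$, so $U(S)=0$. Thus $U(S)=0$ for every $S\not\ni i$.

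\emph{Step 2 (utility with $i$).} The window of $S\cup\{i\}$ is $S\cup\{i\}$, and $i$ is its only member with nonzero target. So the numerator is $v\cdot1$ and the denominator is $v+W(S)$, which is at least $v\ge1$. This gives $\hat y(S\cup\{i\})=v/(v+W(S))$ and $U(S\cup\{i\})=-v^2/(v+W(S))^2$. The case $S=\varnothing$ is covered as well, giving $-1$ with $W(\varnothing)=0$.

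\emph{Step 3 (assembly).} Substitute $\Delta_i(S)=-v^2/(v+W(S))^2$ into \eqref{eq:app-shapley} with $N'$ in place of $N$; this yields the first line of \eqref{eq:app-gadget-value}. Next group the sum first by $s=|S|$ and then by the value $W=W(S)$. Inside each group the summand depends only on $(s,W)$, so it can be replaced by the multiplicity $n(s,W)$ from \eqref{eq:app-nsw}, and $c_{N'}(s)$ can be written as $s!(N'-1-s)!/N'!$. This gives the second line.

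There is no real obstacle. The only points needing care are the empty coalition, where $y_{\mathrm{def}}=0$ makes the convention consistent with the nonempty case, and checking that the dummies contribute only through $W$, since their target is $0$. Both are handled in Step 1.
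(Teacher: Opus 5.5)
Your proposal is correct and follows essentially the same route as the paper. You show $U(S)=0$ for every $S\not\ni i$ (the empty coalition included, via $y_{\mathrm{def}}=y_0=0$), use $K=N'$ to get $U(S\cup\{i\})=-v^2/(v+W(S))^2$, then substitute into the Shapley sum and group by $(|S|,W(S))$ to obtain both lines.
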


\begin{proof}
For $S\not\ni i$: if $S=\varnothing$ then $\hat y(S)=y_{\mathrm{def}}=0$;
otherwise $\mathrm{top}_K(S)=S$ (as $K=N'$) and, since every non-$i$ target
is $0$, $\hat y(S)=0/W(S)=0$. In either case
$U(S)=-\big(\hat y(S)-y_0\big)^2=0$. For $S\cup\{i\}$: $K=N'$ means every
present point is in the window, so
$\hat y(S\cup\{i\})=v\cdot 1/\big(v+W(S)\big)$, a formula that is also
correct at $S=\varnothing$ (where it gives $\hat y(\{i\})=1$). Hence
\[
U(S\cup\{i\})-U(S)\;=\;-\,\frac{v^2}{(v+W(S))^2}\qquad
\text{for every }S\subseteq P\setminus\{i\},
\]
and inserting this into the Shapley sum \eqref{eq:shapley} (with $N'$ in
place of $N$) and grouping coalitions by the pair $(|S|,W(S))$ gives
\eqref{eq:app-gadget-value}. Note that the empty coalition is covered by the
same formula because $y_{\mathrm{def}}=y_0=0$; no convention adjustment is
needed. (Under a different $U(\varnothing)$ convention the $S=\varnothing$
term shifts by the known constant $c_{N'}(0)\,U(\varnothing)$, which the
reductions below can subtract; nothing else changes.)
\end{proof}

\begin{lemma}[Odd-target normalization]
\label{lem:oddT}
For \textsc{Subset-Sum} and \#\textsc{Subset-Sum} one may assume the target
is odd: given $(a_1,\dots,a_n;T)$ with $1\le T\le A$, set $a'_0:=1$,
$a'_j:=2a_j$ $(j=1,\dots,n)$, $T':=2T+1$. Then $S'\mapsto S'\setminus\{0\}$
is a bijection from the subsets of the new instance summing to $T'$ onto the
subsets of the original instance summing to $T$, and it decreases every
solution's cardinality by exactly $1$. In particular $1\le T'\le
A':=2A+1$, decision and size-stratified solution counts transfer with an
index shift of $+1$, and the total count is preserved exactly.
\end{lemma}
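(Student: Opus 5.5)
The plan is a parity argument on the new instance $(a'_0,\dots,a'_n;T')$ with $a'_0=1$, $a'_j=2a_j$, $T'=2T+1$. The decisive observation is that every item other than $a'_0$ is even. So for any $S'\subseteq\{0,1,\dots,n\}$, the sum $\sum_{j\in S'}a'_j$ is congruent modulo $2$ to $[0\in S']$. Because $T'$ is odd, every subset summing to $T'$ must contain index $0$. This is the one step that carries real content; the rest is bookkeeping.

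\textbf{Solutions map to solutions.} Take $S'$ with $\sum_{S'}a'_j=T'$. By the parity observation, $0\in S'$, so
\[
\sum_{j\in S'\setminus\{0\}}2a_j \;=\; T'-1 \;=\; 2T,
\]
and dividing by $2$ gives $\sum_{j\in S'\setminus\{0\}}a_j=T$. Hence $S'\setminus\{0\}$ is a solution of the original instance. Its cardinality is $|S'|-1$, again because $0\in S'$.

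\textbf{Bijectivity.} Injectivity holds because every solution contains $0$, so $S'$ is recovered as $(S'\setminus\{0\})\cup\{0\}$. For surjectivity, take any original solution $S$ and set $S'=S\cup\{0\}$. Then $\sum_{S'}a'_j=1+2T=T'$, and $S'\setminus\{0\}=S$.

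\textbf{Consequences.} The range claim is immediate: $T\ge1$ gives $T'=2T+1\ge3\ge1$, and $T\le A$ gives $T'\le 2A+1=1+\sum_j2a_j=A'$. A bijection that lowers cardinality by exactly $1$ gives $n'(s+1,T')=n(s,T)$ for every $s$. So the size-stratified counts transfer with an index shift of $+1$, the total counts are equal, and the decision answers coincide.

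Finally, the transformation is computable in polynomial time, since it only doubles each weight and appends one bit of length. It therefore qualifies both as a many-one reduction for \textsc{Subset-Sum} and as a parsimonious reduction for \#\textsc{Subset-Sum}, which justifies assuming $T$ is odd.
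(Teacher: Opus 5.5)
Your proposal is correct and matches the paper's proof essentially step for step. Both use the parity observation that $a'_0=1$ is the only odd item, so every subset summing to the odd target $T'$ must contain index $0$; halving $2T$ and the inverse map $S\mapsto S\cup\{0\}$ then give the cardinality-lowering bijection, and the bounds $3\le T'\le 2A+1=A'$ follow directly.
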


\begin{proof}
Every $a'_j$ with $j\ge 1$ is even and $a'_0=1$ is the only odd number, so a
subset of the new instance has odd sum iff it contains item $0$. Since $T'$
is odd, any solution $S'$ contains item $0$, and
$\sum_{j\in S'\setminus\{0\}}2a_j=2T$, i.e.\ the original items indexed by
$S'\setminus\{0\}$ sum to $T$. Conversely each original solution $S$ yields
the solution $\{0\}\cup\{2a_j:j\in S\}$; the two maps are mutually inverse,
and $|S'|=|S|+1$. Finally $T\ge1$ gives $T'\ge 3\ge 1$ and $T\le A$ gives
$T'\le 2A+1=A'$.
\end{proof}

\begin{lemma}[Dummy inertness]
\label{lem:dummy}
In $G(a,T;m,v)$, a coalition $S\subseteq P\setminus\{i\}$ has $W(S)=T$ iff
$S$ contains no dummy and its items sum to $T$. Consequently $n(s,T)$ equals
the number of size-$s$ \emph{item} subsets summing to $T$ (the same
quantity for every $m$) and $n(s,T)=0$ for $s>n$.
\end{lemma}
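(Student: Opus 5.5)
The plan is a direct weight comparison. The key inequality is that a single dummy already outweighs the target. Each dummy has weight $Q=A+T+1$, and every weight in $G(a,T;m,v)$ is a positive integer. So for any coalition $S\subseteq P\setminus\{i\}$ that contains at least one dummy, positivity of all weights gives
\[
W(S)\;\ge\;Q\;=\;A+T+1\;>\;T .
\]
Hence $W(S)=T$ forces $S$ to contain no dummy. A dummy-free $S$ consists of items only, since the special point $i$ is excluded by hypothesis, so $W(S)$ is exactly the item sum. This proves the forward direction. The converse is immediate: a dummy-free item subset summing to $T$ has $W(S)=T$.

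For the consequence, I restrict the count in \eqref{eq:app-nsw} to coalitions with $W(S)=T$. By the equivalence just shown, these are exactly the item subsets of size $s$ with item sum $T$. This set is defined without reference to the dummies, so $n(s,T)$ is independent of $m$. Moreover, an item subset has at most $n$ elements, so $n(s,T)=0$ whenever $s>n$.

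There is no real obstacle here. The only point needing care is to use strict positivity of the item weights together with $Q>T$, rather than comparing against $A$ alone. Note that $Q>A$ would not suffice to exclude a dummy in a coalition of total weight $T$ unless we also have $Q>T$, which holds because $T\ge1$ and $Q=A+T+1$. I would also remark that the choice $Q=A+T+1$ (instead of, say, $A+1$) is exactly what makes the bound clean.
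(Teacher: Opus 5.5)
Your proof is correct and matches the paper's argument. A dummy weighs $Q=A+T+1>T$, so any dummy-containing coalition has $W(S)\ge Q>T$. The coalitions with $W(S)=T$ are therefore exactly the item subsets summing to $T$, whose count does not depend on $m$ and is zero for $s>n$.

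Your closing aside is slightly off, though harmless. Since $T\le A$, the inequality $Q>A$ already gives $Q>T$, so $Q=A+1$ would serve this lemma equally well. Also, $Q>T$ follows from $A+1>0$, not from $T\ge 1$.
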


\begin{proof}
Item sums lie in $[0,A]$ and $T\le A$; any dummy contributes
$Q=A+T+1>T$, so a dummy-containing coalition has $W(S)\ge Q>T$. Hence
$W(S)=T$ forces $S$ to be dummy-free, and then $W(S)$ is an item sum. The
count of such $S$ of size $s$ does not involve the dummies at all, so it is
independent of $m$, and it vanishes for $s>n$ since there are only $n$
items.
\end{proof}

\subsection{Theorem~\ref{thm:hardness}(a): unconditional output-size lower bound}
\label{app:thm3a}

\begin{lemma}[Precise form of Theorem~\ref{thm:hardness}(a)]
\label{lem:3a-core}
For every $n\ge 3$, the gadget instance $I_n$ with items $a_j=2^{\,j-1}$
$(j=1,\dots,n)$, no dummies ($m=0$; the target $T$ plays no role), and $v=1$
(i.e.\ $N=n+1$ points, weights
$\{2^0,\dots,2^{n-1},1\}$, targets $b_j=0$ for the items and $b_i=1$,
$y_0=y_{\mathrm{def}}=0$, $K=N$, encoded in $O(n^2)$ bits) satisfies: for
\emph{every} prime $p\in(2^{n-1},2^n]$, $p^2$ divides the reduced
denominator of $\phi_i$ exactly (i.e.\ $\operatorname{val}_p(\phi_i)=-2$).
Consequently there are absolute constants $c>0$ and $n_0$ (one may take
$c=\tfrac12$, $n_0=18$) such that for all $n\ge n_0$ the reduced denominator
of $\phi_i$ has bit-length at least $c\cdot 2^n=\Omega(D_w)$, while
$D_w=2^n+1$ and $D_y=O(1)$.
\end{lemma}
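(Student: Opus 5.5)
The plan is to specialize Lemma~\ref{lem:gadget-value} to $I_n$ and then localize at each prime $p\in(2^{n-1},2^n]$ using the ultrametric facts \eqref{eq:app-ultrametric}--\eqref{eq:app-denomval}. With $m=0$, $v=1$ and $N'=n+1$, Lemma~\ref{lem:gadget-value} gives
\[
\phi_i=-\sum_{S\subseteq[n]} \frac{|S|!\,(n-|S|)!}{(n+1)!}\cdot\frac{1}{(1+W(S))^2}.
\]
Because the item weights are $2^0,\dots,2^{n-1}$, binary expansion makes $S\mapsto W(S)$ a bijection from subsets onto $\{0,\dots,2^n-1\}$. The subset with sum $W$ has size equal to the popcount of $W$. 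So every $n(s,W)\in\{0,1\}$, and the values $1+W(S)$ run over $\{1,\dots,2^n\}$ exactly once each.

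Fix a prime $p\in(2^{n-1},2^n]$. The only multiple of $p$ in $\{1,\dots,2^n\}$ is $p$ itself, since $2p>2^n$. Hence exactly one coalition $S_p$ (the one with $W=p-1$) has $p\mid 1+W(S)$, and for it $\operatorname{val}_p\big((1+W)^{-2}\big)=-2$. Next, the coefficients $s!\,(n-s)!/(n+1)!$ carry no factor of $p$, because $p>2^{n-1}\ge n+1$ for $n\ge3$. Therefore the $S_p$ term has $p$-adic valuation exactly $-2$, and every other term has valuation $\ge0$. Ultrametric strictness then gives $\operatorname{val}_p(\phi_i)=-2$, and by \eqref{eq:app-denomval} $p^2$ exactly divides $\operatorname{denom}(\phi_i)$. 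I would also note that $\phi_i\ne0$, since all terms are negative, so the valuation is defined.

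Since distinct primes are coprime, $\prod_{p\in(2^{n-1},2^n]}p^2$ divides the reduced denominator. Its bit-length is therefore at least
\[
\frac{2}{\ln 2}\Big(\theta(2^n)-\theta(2^{n-1})\Big),
\]
where $\theta$ is Chebyshev's function. The remaining bookkeeping is routine: $D_w=1+\sum_j 2^{j-1}+1=2^n+1$, $D_y=O(1)$ because targets lie in $\{0,1\}$, and the encoding takes $O(n^2)$ bits.

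The step I expect to be the main obstacle is making the prime-number-theorem step explicit with the stated constants $c=\tfrac12$, $n_0=18$. I would use the Rosser--Schoenfeld bounds $\theta(x)>x(1-1/\ln x)$ for $x\ge41$ and $\theta(x)<1.01624\,x$ for $x>0$. These give
\[
\theta(2^n)-\theta(2^{n-1})\ \ge\ 2^n\Big(1-\tfrac{1}{n\ln 2}-0.50812\Big).
\]
Multiplying by $2/\ln2$, I then need to check that $\frac{2}{\ln 2}\big(0.49188-\frac{1}{n\ln2}\big)\ge\frac12$ for $n\ge18$. At $n=18$ this is about $2.885\times(0.49188-0.0801)\approx1.19$, and the left side increases with $n$, so the claim holds with room to spare. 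If one prefers a cleaner citation, any explicit Chebyshev-type bound $\theta(2x)-\theta(x)\ge c'x$ for large $x$ gives the same $\Omega(2^n)=\Omega(D_w)$ conclusion, and only the constants change.
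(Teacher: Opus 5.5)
Your proof is correct and follows the paper's argument step for step: you specialize Lemma~\ref{lem:gadget-value}, use the binary bijection onto $\{1,\dots,2^n\}$, isolate the unique coalition with $1+W(S)=p$, check that the coefficients are $p$-free because $p>2^{n-1}\ge n+1$, and conclude by ultrametric strictness and coprimality. The only deviation is the explicit prime estimate: you bound $\sum_{p}\log_2 p$ directly through Chebyshev's $\theta$, using Rosser--Schoenfeld's $\theta(x)>x(1-1/\ln x)$ for $x\ge 41$ and $\theta(x)<1.01624\,x$, whereas the paper bounds $\pi(2^n)-\pi(2^{n-1})\ge 2^n/(3n)$ and multiplies by $\log_2 p\ge n-1$; your route is equally valid and gives a slightly better constant at $n=18$ (about $1.19$ versus about $0.63$). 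One small correction: $D_y=O(1)$ holds because the only nonzero target sits on the unit-weight point $i$, not merely because targets lie in $\{0,1\}$.
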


\begin{proof}
By Lemma~\ref{lem:gadget-value} with $m=0$, $v=1$, $N'=N=n+1$,
\begin{equation}
\phi_i\;=\;-\sum_{S\subseteq[N]\setminus\{i\}} \frac{c_N(|S|)}{\big(1+W(S)\big)^2}.
\label{eq:app-3a-value}
\end{equation}
Since the item weights are $2^0,\dots,2^{n-1}$, the map $S\mapsto W(S)$ is a
\emph{bijection} from the $2^n$ item subsets onto $\{0,1,\dots,2^n-1\}$
(binary representation), so the values $1+W(S)$ run bijectively over
$\{1,\dots,2^n\}$ and every summand in \eqref{eq:app-3a-value} is nonzero.

Fix any prime $p\in(2^{n-1},2^n]$ (one exists for every $n\ge1$ by
Bertrand's postulate). Then:
\begin{itemize}
\item exactly one coalition $S_p$ has $1+W(S_p)=p$ (the binary
representation of $p-1$), and no coalition has $1+W(S)$ equal to a higher
multiple of $p$, because $1+W(S)\le 2^n<2p$;
\item $\operatorname{val}_p\big(c_N(s)\big)=0$ for every $s$: indeed
$c_N(s)=s!\,(n-s)!/(n+1)!$ and every factor appearing in these factorials is
at most $n+1\le 2^{n-1}<p$ (the inequality $n+1\le 2^{n-1}$ holds for all
$n\ge 3$).
\end{itemize}
Hence the $S_p$ summand of \eqref{eq:app-3a-value} has
$\operatorname{val}_p=-2$ and every other summand has $\operatorname{val}_p
=0$. By ultrametric strictness \eqref{eq:app-ultrametric},
$\operatorname{val}_p(\phi_i)=-2$, i.e.\ $p^2$ divides the reduced
denominator of $\phi_i$ exactly, by \eqref{eq:app-denomval}.

Distinct primes contribute coprime factors, so
$\operatorname{denom}(\phi_i)\ \ge \prod_{p\in(2^{n-1},2^n]}p^2$ and its
bit-length is at least
\[
2\!\!\sum_{p\in(2^{n-1},2^n]}\!\!\log_2 p\;\ge\;2\,(n-1)\cdot
\big(\pi(2^n)-\pi(2^{n-1})\big).
\]
Explicit Chebyshev-type bounds (Rosser and Schoenfeld, \emph{Illinois J.
Math.}~6, 1962: $\pi(x)\ge x/\ln x$ for $x\ge 17$ and
$\pi(x)\le 1.25506\,x/\ln x$ for $x>1$) give, for all $n\ge 18$,
\[
\begin{aligned}
\pi(2^n)-\pi(2^{n-1})
&\;\ge\;\frac{2^n}{n\ln 2}
-\frac{1.25506\cdot 2^{n-1}}{(n-1)\ln 2}\\
&\;=\;\frac{2^n}{n\ln 2}\Big(1-0.62753\,\tfrac{n}{n-1}\Big)\\
&\;\ge\;\frac{2^n}{3n},
\end{aligned}
\]
so the bit-length is at least $2(n-1)\cdot 2^n/(3n)\ge \tfrac12\cdot 2^n$
for $n\ge 18$. For this family $D_w=1+\sum_r a_r=1+(2^n-1)+1=2^n+1$, so the
bound is $\Omega(D_w)$. The input is $n+1$ weights of at most $n$ bits each
plus $O(1)$ rationals, i.e.\ $O(n^2)$ bits. Finally $D_y=O(1)$: only point
$i$ has a nonzero target, with $a_ib_i=1$, so the aggregate
$M=\sum a_jb_j$ over any window lies in $\{0,1\}$ (spread $2$).
\end{proof}

\begin{proof}[Proof of Theorem~\ref{thm:hardness}(a)]
Immediate from Lemma~\ref{lem:3a-core}. Two consequences, with their exact
scope:
\begin{enumerate}
\item No algorithm, of \emph{any} running time, can output the exact
Shapley value of $I_n$ as a reduced fraction, or in any representation whose
length is polynomially related to the positional/fraction form (e.g.\ an
unreduced fraction with polynomially bounded blowup), using
$\mathrm{poly}(\text{input length})$ bits: the output alone occupies
$\Omega(D_w)=2^{\Omega(\sqrt{\text{input length}})}$ bits. Exact output is
inherently pseudo-polynomially long. (The restriction to fraction-like
representations is necessary and deliberate: trivially the instance itself
is a poly-bit \emph{description} of $\phi_i$. Succinct representations are
excluded \emph{conditionally} by part (c).)
\item Theorem~\ref{thm:exact} is $D_w$-optimal up to polynomial factors
among algorithms that emit the value in explicit (fraction-like) form: on
this family $D_y=O(1)$ and the output alone occupies $\Omega(D_w)$ bits,
while Theorem~\ref{thm:exact} spends $\widetilde{O}(D_w^2)$ time. The
pseudo-polynomial dependence on weight precision is therefore unavoidable in
the strongest (information-theoretic) sense, with no complexity-theoretic
assumption.
\end{enumerate}
\end{proof}

\paragraph{Machine verification.}
\texttt{verify\_theorem3.py}, \textsc{part~A}: at $n=8$, all $23$ primes in
$(2^7,2^8]$ divide the reduced denominator exactly twice; the denominator
has $733$ bits on a $36$-bit weight encoding. An independent adversarial
re-check at $n=9$ confirmed all $43$ primes in $(2^8,2^9]$ squared in the
denominator, none dividing the numerator (a reduced-ness double check), with
a $1495$-bit denominator against $D_w=513$ on a $45$-bit encoding.

\paragraph{Why this reframes the hardness question.}
Because the exact answer does not fit in polynomially many bits, ``compute
$\phi_i$ exactly in poly time'' is impossible for trivial reasons, and a
\#P-hardness claim about that raw statement would be vacuous. The meaningful
questions are: (i) can the \emph{precision demand} of the exact value be
predicted in polynomial time? (Part (b): no, unless $\mathrm{P}=\mathrm{NP}$.)
(ii) Can any succinct exact representation support standard modular
arithmetic access in polynomial time? (Part (c): no, unless
$\mathrm{FP}=\#\mathrm{P}$.) (iii) Can polynomially many \emph{leading real
bits} be computed in polynomial time? (Open; Appendix~\ref{app:thm3:open}.)

\subsection{Theorem~\ref{thm:hardness}(b): NP-hardness of the precision decision}
\label{app:thm3b}

\paragraph{Problem (\textsc{Denom-Precision}).}
Given a weighted-$k$NN-regression Shapley instance (integer weights in
binary, integer targets, $K$, rational $y_0,y_{\mathrm{def}}$, designated
point $i$) and $t\in\mathbb{N}$ in binary: decide whether
$2^t\mid\operatorname{denom}(\phi_i)$.

We prove: \textsc{Denom-Precision} is NP-hard under deterministic
polynomial-time many-one reductions, already for $K=N$, targets in
$\{0,1\}$, $y_0=y_{\mathrm{def}}=0$, squared loss. Equivalently: unless
$\mathrm{P}=\mathrm{NP}$, no polynomial-time algorithm can decide how many
dyadic digits the exact Shapley value requires.

\begin{proof}[Proof of Theorem~\ref{thm:hardness}(b)]
We reduce from \textsc{Subset-Sum}: given positive integers
$a_1,\dots,a_n$ and a target $T$, decide whether some subset sums to $T$.

\emph{Degenerate inputs.} The gadget requires $1\le T\le A:=\sum_j a_j$. If
$T=0$ the instance is trivially solvable (the empty set); output the fixed
YES pair $(I^\star,0)$ for any instance $I^\star$ (as $2^0=1$ divides every
denominator). If $T>A$ the instance is trivially unsolvable; output the pair
built below from the unsolvable instance $([2,4],\,T=3)$, which the analysis
below shows is a NO instance. From now on $1\le T\le A$.

\emph{Normalization.} By Lemma~\ref{lem:oddT} we may assume $T$ is odd
(renaming $n$ to include the extra item; the transformation is
polynomial-time and preserves solvability). Build $G(a,T;0,v)$ with $m=0$,
$N'=n+1$, and set
\begin{equation}
\begin{aligned}
L&:=\text{bit-length of }A\\
&\quad(\text{so }A<2^L\text{, and }\operatorname{val}_2(x)\le L\\
&\quad\text{ for every integer }0<|x|\le A),\\
V&:=\big\lceil \log_2\!\big((N'-1)!\big)\big\rceil+n+1,\\
t&:=V+2L+4,\qquad v:=2^{t}-T .
\end{aligned}
\label{eq:app-3b-params}
\end{equation}
Then $v$ is odd ($T$ odd), positive ($T\le A<2^L\le 2^{t-1}$), and has
exactly $t$ bits ($v>2^t-2^L\ge 2^{t-1}$); all of
$L,V,t,v$ are computed deterministically in time polynomial in the input,
and for $b$-bit items $t=O(n\log n+b)$. The reduction outputs the pair
$\big(G(a,T;0,v),\,t\big)$.

\emph{Value decomposition.} By Lemma~\ref{lem:gadget-value},
\begin{equation}
\phi_i=-\frac{v^2}{N'!}\Big(A_T\cdot 2^{-2t}+R\Big),
\label{eq:app-3b-decomp}
\end{equation}
where $v+T=2^t$ exactly and
\begin{equation}
\begin{aligned}
A_T&:=\sum_{s=0}^{N'-1}s!\,(N'-1-s)!\;n(s,T)\ \in\mathbb{Z}_{\ge0},\\
R&:=\sum_{W\neq T}\ \frac{\sum_{s}s!\,(N'-1-s)!\;n(s,W)}{(v+W)^2}.
\end{aligned}
\end{equation}

\emph{$2$-adic separation.} For $W\neq T$ (with $0\le W\le A$):
$v+W=2^t+(W-T)$ with $0<|W-T|\le A<2^L\le 2^{t-1}$, hence
$\operatorname{val}_2(W-T)\le L<t$ and, by the ultrametric equality
\eqref{eq:app-ultrametric},
$\operatorname{val}_2(v+W)=\operatorname{val}_2(W-T)\le L$. Every summand of
$R$ is a nonnegative rational (integer numerator, positive denominator) with
$\operatorname{val}_2\ge -2L$; the $W=0$ summand is present and positive
(it equals $(N'-1)!/v^2$), so $R>0$ and
\begin{equation}
\operatorname{val}_2(R)\;\ge\;-2L .
\label{eq:app-3b-Rval}
\end{equation}
Also, by Lemma~\ref{lem:dummy} at most $2^n$ coalitions have $W(S)=T$ and
each Shapley coefficient satisfies $s!\,(N'-1-s)!\le(N'-1)!$, so
\begin{equation}
A_T\;\le\;(N'-1)!\cdot 2^{n}\;<\;2^{V}.
\label{eq:app-3b-ATbound}
\end{equation}

\emph{Dichotomy.} Let $\nu:=\operatorname{val}_2(N'!)$; by Legendre's
formula $\nu\le N'-1$, and since $k!\ge 2^{k-1}$ for all $k\ge1$ we get
$N'-1\le\lceil\log_2((N'-1)!)\rceil+1\le V$, hence $\nu\le V$. Recall $v$ is
odd and $\phi_i<0$ (every marginal in \eqref{eq:app-gadget-value} is
strictly negative), so $\phi_i\neq0$ and its reduced denominator is
well-defined.
\begin{itemize}
\item \textbf{If \textsc{Subset-Sum} is unsolvable:} $A_T=0$, so
$\phi_i=-(v^2/N'!)\,R$ and, by \eqref{eq:app-3b-Rval},
$\operatorname{val}_2(\phi_i)\ \ge\ -\nu-2L$. By \eqref{eq:app-denomval},
\[
\operatorname{val}_2\!\big(\operatorname{denom}(\phi_i)\big)
\;\le\;\nu+2L\;\le\;V+2L\;=\;t-4\;<\;t,
\]
so $2^t\nmid\operatorname{denom}(\phi_i)$.
\item \textbf{If solvable:} $1\le A_T<2^V$ gives
$\operatorname{val}_2\!\big(A_T\cdot2^{-2t}\big)
=\operatorname{val}_2(A_T)-2t\le (V-1)-2t$, and
\[
(V-1)-2t\;=\;-V-4L-9\;<\;-2L\;\le\;\operatorname{val}_2(R),
\]
a strict inequality, so by ultrametric strictness
\eqref{eq:app-ultrametric},
$\operatorname{val}_2\!\big(A_T2^{-2t}+R\big)=\operatorname{val}_2(A_T)-2t$
and
\[
\begin{aligned}
\operatorname{val}_2(\phi_i)&=\operatorname{val}_2(A_T)-2t-\nu,\\
\operatorname{val}_2\!\big(\operatorname{denom}(\phi_i)\big)
&=2t+\nu-\operatorname{val}_2(A_T)\\
&\ge\;2t-(V-1)\;=\;t+2L+5\;>\;t,
\end{aligned}
\]
so $2^t\mid\operatorname{denom}(\phi_i)$.
\end{itemize}
Thus \textsc{Subset-Sum}$(a,T)$ is solvable iff
$2^t\mid\operatorname{denom}(\phi_i)$, and the reduction is deterministic
polynomial-time many-one. Both branches clear the threshold with at least
four bits of slack.
\end{proof}

\paragraph{Machine verification.}
\texttt{verify\_theorem3.py}, \textsc{part~B}: on the solvable instances
$([1,2,3,4,5],T{=}5)$ and $([1,1,2,3,5],T{=}5)$ the computed
$\operatorname{val}_2(\operatorname{denom})=50$ exceeds the threshold
$t=25$; on the unsolvable
instance $([2,4,6,8],T{=}7)$, $\operatorname{val}_2(\operatorname{denom})=0$
against $t=24$; \textsc{part~B} asserts only this dichotomy. The exact
predicted identity $2t+\nu-\operatorname{val}_2(A_T)=50+4-4=50$, and its
extension to extremal multisets (up to $20$ solutions concentrated at one
size, where $\operatorname{val}_2(A_T)$ is largest), are checked by the
adversarial suite \texttt{verify\_theorem3\_\allowbreak adversarial.py}, which also
exercised Lemma~\ref{lem:oddT} end-to-end on even-$T$ instances (the shipped
\textsc{part~B/C} instances all have odd $T$): counts transfer with the
exact $+1$ index shift and the threshold decides correctly.

\paragraph{Remark (no primes needed).}
A $p$-adic variant with $v+T=p$ a large prime also works, but requires
generating a large prime (randomized, or deterministic only under standard
conjectures); the dyadic gadget $v=2^t-T$ above is fully deterministic,
which is why the reduction is stated $2$-adically.

\subsection{A parsimonious reduction for \#\textsc{Subset-Sum}}
\label{app:thm3:pars}

Part (c) counts solutions, so it needs the \#P-\emph{completeness} of
\#\textsc{Subset-Sum} under a \emph{parsimonious} chain. We make this
self-contained, because the textbook route is not parsimonious as written:
in Sipser's digit-gadget reduction (Thm.~7.56) each clause receives two
\emph{identical} slack items of value $1$ in the clause digit with clause
target $3$, so a clause with exactly two true literals admits \emph{two}
slack completions. Concretely, for the single clause
$(x_1\vee x_2\vee x_3)$ that construction yields $10$ subset-sum solutions
against $7$ satisfying assignments (machine-checked). The following variant
(slack values $\{1,2\}$ with clause target $4$) restores a unique
completion.

\begin{lemma}[Parsimonious {\#3SAT} $\to$ \#\textsc{Subset-Sum}]
\label{lem:parsimony}
There is a deterministic polynomial-time reduction mapping a 3-CNF formula
$\varphi$ with $\ell$ variables and $k$ clauses to a \textsc{Subset-Sum}
instance with $2\ell+2k$ positive integers and target $T_\varphi$, under
which the satisfying assignments of $\varphi$ correspond \emph{bijectively}
to the subsets summing to $T_\varphi$. Consequently \#\textsc{Subset-Sum} is
\#P-complete \textup{(}with \#3SAT \#P-complete under parsimonious
reductions~\cite{valiant1979enumeration}\textup{)}.
\end{lemma}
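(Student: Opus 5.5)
The plan is the standard digit-gadget construction, modified in the clause gadget so that every satisfying assignment has exactly one completion. Write every integer in base $10$ (any base $\ge 8$ suffices) with $\ell+k$ digit positions: one per variable $x_1,\dots,x_\ell$ and one per clause $C_1,\dots,C_k$.

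For each variable $x_r$ I introduce two items $t_r$ and $f_r$. Both have a $1$ in digit $x_r$. The item $t_r$ also has a $1$ in the digit of every clause containing the literal $x_r$, and $f_r$ has a $1$ in the digit of every clause containing $\neg x_r$. For each clause $C_q$ I add two slack items $g_q$ and $h_q$, with values $1$ and $2$ respectively in digit $C_q$ and $0$ elsewhere. The target $T_\varphi$ has digit $1$ in every variable position and digit $4$ in every clause position. This gives $2\ell+2k$ positive integers, all computable in polynomial time.

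\emph{Step 1 (no carries).} In any digit, the total over all items is at most $3+1+2=6<10$ for a clause digit and $2$ for a variable digit. Hence the sum of any subset is computed digitwise without carries, and a subset hits $T_\varphi$ iff each digit matches separately.

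\emph{Step 2 (variable digits).} Digit $x_r$ equals $1$ iff exactly one of $t_r,f_r$ is chosen. Such choices correspond bijectively to assignments $\sigma$, with $t_r$ chosen iff $\sigma(x_r)$ is true.

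\emph{Step 3 (clause digits).} Fix such a choice and let $c_q\in\{0,1,2,3\}$ be the number of true literals of $C_q$ under $\sigma$. The chosen variable items contribute exactly $c_q$ to digit $C_q$. The slack subsets $\{g_q,h_q\}$ contribute $0,1,2,3$ respectively for $\varnothing,\{g_q\},\{h_q\},\{g_q,h_q\}$, all distinct. Digit $C_q$ therefore equals $4$ iff $4-c_q\in\{0,1,2,3\}$, i.e.\ $c_q\ge1$, and then exactly one slack subset works. Distinctness of the four slack sums is the key point that fixes the non-parsimony of the $\{1,1\}$/target-$3$ gadget. Combining Steps 2 and 3, satisfying assignments correspond bijectively to solution subsets.

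One technicality needs care. If a clause contains a repeated literal, or both $x_r$ and $\neg x_r$, the count $c_q$ must be read with multiplicity, and the digit bound $6$ still holds. The simplest fix is to preprocess $\varphi$, which preserves the count of satisfying assignments. Clauses containing both $x_r$ and $\neg x_r$ are dropped, since they are always satisfied. Duplicate literals within a clause are merged, giving clauses of size at most $3$, for which the same digit analysis applies.

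\#P-membership of \#\textsc{Subset-Sum} is immediate, since solutions are polynomially verifiable witnesses. \#P-hardness follows by composing with the parsimonious \#P-completeness of \#3SAT~\cite{valiant1979enumeration}.

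I expect the main obstacle to be only the bookkeeping of Step 3 (uniqueness of the completion, and the degenerate-clause cases), not any deep step.
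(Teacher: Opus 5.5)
Your proposal is correct and matches the paper's proof essentially step for step: the same base-$10$ digit gadget with per-clause slack values $1$ and $2$ and clause target $4$, the same carry-free bound $3+1+2=6$, and the same uniqueness argument from the four distinct slack sums $0,1,2,3$. The only cosmetic difference is in handling repeated literals. The paper lets each variable item carry the number of occurrences of its literal in each clause digit, whereas you merge duplicate literals first and use $0/1$ entries. Both versions drop tautological clauses, which is harmless; it is not even necessary, since such a clause always has at least one true literal.
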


\begin{proof}
Remove tautological clauses (those containing a variable and its negation);
they are satisfied by every assignment, so removal preserves the count
exactly. Write all numbers in base $10$ with $\ell+k$ digit positions:
variable digits $1,\dots,\ell$ and clause digits $1,\dots,k$.
\begin{itemize}
\item For each variable $x_p$, two numbers $u_p$ and $\bar u_p$: both have
digit $1$ in variable position $p$; in clause position $j$, $u_p$ carries
the number of occurrences of the literal $x_p$ in clause $c_j$, and
$\bar u_p$ the number of occurrences of $\lnot x_p$ (each in
$\{0,1,2,3\}$).
\item For each clause $c_j$, two slack numbers $s_{j,1},s_{j,2}$ with values
$1$ and $2$ respectively in clause position $j$ and $0$ elsewhere.
\item Target $T_\varphi$: digit $1$ in every variable position, digit $4$ in
every clause position.
\end{itemize}
\emph{No carries.} Over all $2\ell+2k$ numbers, each variable column sums to
$2$ and each clause column sums to at most $3+1+2=6<10$; hence addition of
any sub-collection is carry-free, and a subset sums to $T_\varphi$ iff it
matches $T_\varphi$ digit-by-digit.

\emph{Bijection.} Variable digit $p$ has value $1$ in exactly $u_p$ and
$\bar u_p$; matching the target digit $1$ forces \emph{exactly one} of them
into the subset, so subsets matching the variable digits correspond exactly
to truth assignments $\alpha$. Fix such an $\alpha$ and a clause $c_j$; the
chosen variable numbers contribute $k_j\in\{0,1,2,3\}$ to clause digit $j$,
where $k_j$ is the number of true literal occurrences of $c_j$ under
$\alpha$. The slack contribution is $\sigma_j\in\{0,1,2,3\}$, and each value
of $\sigma_j$ is realized by exactly one sub-collection of
$\{s_{j,1},s_{j,2}\}$ ($0=\varnothing$, $1=\{s_{j,1}\}$, $2=\{s_{j,2}\}$,
$3=\{s_{j,1},s_{j,2}\}$). Matching the clause target requires
$\sigma_j=4-k_j$. If $c_j$ is unsatisfied ($k_j=0$) then $\sigma_j=4$ is
unachievable and $\alpha$ extends to no solution; if $k_j\in\{1,2,3\}$ then
$\sigma_j=4-k_j\in\{1,2,3\}$ is achieved by exactly one slack choice. Hence
every satisfying assignment extends to exactly one solution and every
solution restricts to a satisfying assignment: the correspondence is a
bijection. All numbers have $\ell+k$ decimal digits, so the reduction is
polynomial-time. Membership of \#\textsc{Subset-Sum} in \#P is immediate
(count the accepting certificates of the natural verifier).
\end{proof}

\paragraph{Machine verification.}
The non-parsimony of the Sipser gadget ($10$ solutions vs $7$ satisfying
assignments on a single clause) and the exactness of the
$\{1,2\}$/target-$4$ variant (exactly $7$) were both checked by exhaustive
enumeration.

\subsection{Theorem~\ref{thm:hardness}(c): \#P-hardness of $2$-adic digit access}
\label{app:thm3c}

\paragraph{Access model.}
For a nonzero rational $x$ with $\alpha=\operatorname{val}_2(x)$, write
$x=2^{\alpha}u$ with $u$ a $2$-adic unit (concretely: the reduced fraction
of $u$ has odd numerator and odd denominator, and $u\bmod 2^{m}$ denotes the
unique residue in $\{0,\dots,2^m-1\}$ congruent to
$(\text{numerator})\cdot(\text{denominator})^{-1}$ modulo $2^m$). The
\emph{$2$-adic digit access} function is
\begin{equation}
\textsc{Dig}(I,i,m)\;:=\;\big(\operatorname{val}_2(\phi_i),\ u\bmod 2^{m}\big),
\label{eq:app-dig}
\end{equation}
with $m$ in unary (or in binary with $m\le\mathrm{poly}$; the output has
$O(m+\log|\alpha|)$ bits either way, and on every instance
$|\alpha|\le\operatorname{val}_2$ of the unreduced numerator or denominator
of $\phi_i$, whose bit-lengths are at most exponential in the input, so
$\log|\alpha|$ is polynomial). If $\phi_i=0$, \textsc{Dig} returns a
distinguished symbol; this never occurs on gadget instances, where
$\phi_i<0$. \textsc{Dig} is exactly the information any representation of
the exact value supports if it admits polynomial-time exact arithmetic
modulo powers of two: e.g.\ a reduced or unreduced fraction supports
\textsc{Dig} in time polynomial in the representation length plus $m$
(strip the $2$-power, then invert the odd denominator mod $2^m$ by the
extended Euclidean algorithm or Hensel lifting).

We prove: \#\textsc{Subset-Sum} Turing-reduces to \textsc{Dig} under a
deterministic polynomial-time reduction making $n+2$ oracle calls on an
$n$-item instance (all calls on gadget instances $G(a,T;m,\cdot)$ with
varying dummy count $m$, after the Lemma~\ref{lem:oddT} shift).
Consequently, unless $\mathrm{FP}=\#\mathrm{P}$, no polynomial-time
computable representation of the exact Shapley value, however succinct,
supports polynomial-time $2$-adic digit access; in this precise sense
(hardness of \emph{access} to the exact value), exact weighted-$k$NN
regression Data Shapley with binary weights is \#P-hard.

We first isolate the linear-algebra step.

\begin{lemma}[Coefficient-matrix invertibility]
\label{lem:coeff-invert}
Fix $n\ge0$ and distinct nonnegative integers $m_0<m_1<\dots<m_n$. The
$(n{+}1)\times(n{+}1)$ matrix $B$ with entries
$B_{j,s}=s!\,(n+m_j-s)!$, $s=0,\dots,n$, is invertible over $\mathbb{Q}$.
\end{lemma}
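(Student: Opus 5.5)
The plan is to rescale the rows so that each entry becomes a Beta integral. Then invertibility reduces to two classical facts: the linear independence of a Bernstein-type basis, and the nonsingularity of a Cauchy matrix.

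\textbf{Step 1: rescale rows.} Write $N_j:=n+m_j$ and divide row $j$ of $B$ by the nonzero scalar $(N_j+1)!$. Row scaling does not affect invertibility, and the Beta-integral identity gives
\[
\frac{s!\,(N_j-s)!}{(N_j+1)!}=\int_0^1 x^{s}(1-x)^{N_j-s}\,dx=\int_0^1 x^{s}(1-x)^{n-s}\,(1-x)^{m_j}\,dx .
\]
Note that $N_j-s\ge m_j\ge0$ for every $s\le n$, so each integrand is a polynomial.

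\textbf{Step 2: a kernel vector gives an orthogonal polynomial.} Suppose $Bc=0$ for some $c\in\mathbb{Q}^{n+1}$, and set $p(x):=\sum_{s=0}^n c_s\,x^s(1-x)^{n-s}$. By Step~1,
\[
\int_0^1 p(x)(1-x)^{m_j}\,dx=0\qquad\text{for } j=0,\dots,n .
\]
Substitute $y=1-x$ and let $q(y):=p(1-y)=\sum_{k=0}^n d_k y^k$, a polynomial of degree at most $n$. The conditions become
\[
\sum_{k=0}^n \frac{d_k}{k+m_j+1}=0\qquad\text{for } j=0,\dots,n .
\]

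\textbf{Step 3: the Cauchy step.} The matrix $C_{j,k}=1/(x_j+y_k)$ with $x_j=m_j+1$ and $y_k=k$ is a square Cauchy matrix. Its determinant is
\[
\frac{\prod_{j<j'}(x_{j'}-x_j)\prod_{k<k'}(y_{k'}-y_k)}{\prod_{j,k}(x_j+y_k)},
\]
which is nonzero because the $m_j$ are distinct, the $k$ are distinct, and every $x_j+y_k\ge1$. Hence $d=0$, so $q\equiv0$ and therefore $p\equiv0$.

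\textbf{Step 4: Bernstein independence.} It remains to show $c=0$. On $(0,1)$, divide $p$ by $(1-x)^n$ and set $t=x/(1-x)$, which maps $(0,1)$ bijectively onto $(0,\infty)$. This gives $\sum_s c_s t^s\equiv0$ on $(0,\infty)$, so every $c_s=0$. Thus $B$ has trivial kernel and is invertible over $\mathbb{Q}$, since its entries are integers.

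The only step with real content is Step~3, the choice of the Cauchy/Beta reformulation; everything else is bookkeeping. As a fallback, one could instead argue that $\int_0^1 q(y)y^{m}\,dy$ is a rational function of $m$ with at most $n$ zeros unless $q=0$. This works because $\sum_k d_k/(k+m+1)$ has numerator of degree at most $n$ in $m$, so it cannot vanish at $n+1$ distinct values of $m$ unless it is identically zero. Partial fractions then force every $d_k=0$.
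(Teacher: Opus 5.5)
Your proposal is correct and follows essentially the paper's own proof. Both rescale the rows to Beta integrals, reduce a kernel vector to the conditions $\sum_k d_k/(k+m_j+1)=0$, conclude via the Cauchy matrix, and finish with Bernstein-basis independence. The only differences are cosmetic: the paper treats the consecutive case $m_j=j$ separately by an $L^2$ orthogonality argument, which your uniform Cauchy step already covers, and it cites Bernstein independence where you prove it directly through the substitution $t=x/(1-x)$.
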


\begin{proof}
Divide row $j$ by $(n+m_j+1)!\neq0$; by the Beta integral,
\[
\frac{s!\,(n+m_j-s)!}{(n+m_j+1)!}
=\mathrm{B}(s+1,\ n+m_j-s+1)
=\int_0^1 u^{s}(1-u)^{\,n+m_j-s}\,du .
\]
Suppose $x=(x_0,\dots,x_n)$ lies in the kernel. Setting
$g(u):=\sum_{s=0}^{n}x_s\,u^{s}(1-u)^{\,n-s}$ (a polynomial of degree
$\le n$), the kernel conditions read
\begin{equation}
\int_0^1 g(u)\,(1-u)^{m_j}\,du=0,\qquad j=0,\dots,n.
\label{eq:app-kernel}
\end{equation}
\emph{Consecutive case} ($m_j=j$): $\{(1-u)^{m}\}_{m=0}^{n}$ is a basis of
the polynomials of degree $\le n$, so \eqref{eq:app-kernel} makes $g$
orthogonal in $L^2(0,1)$ to all of them, in particular to itself:
$\int_0^1 g^2=0$, hence $g\equiv0$.
\emph{General case:} expand $g(u)=\sum_{r=0}^{n}\gamma_r(1-u)^{r}$ (a change
of basis on degree-$\le n$ polynomials); then \eqref{eq:app-kernel} becomes
$\sum_{r=0}^{n}\gamma_r/(m_j+r+1)=0$ for all $j$. The matrix
$\big[1/(m_j+r+1)\big]_{j,r}$ is a Cauchy matrix ($1/(x_j+y_r)$ with
$x_j=m_j$ pairwise distinct, $y_r=r+1$ pairwise distinct, all $x_j+y_r>0$),
hence invertible, so $\gamma=0$ and $g\equiv0$.
In either case $g\equiv0$ forces $x=0$, because
$\{u^{s}(1-u)^{n-s}\}_{s=0}^{n}$ is the Bernstein basis of degree $n$
(linearly independent). Hence $B$ is invertible.
\end{proof}

\begin{proof}[Proof of Theorem~\ref{thm:hardness}(c)]
Let an $n$-item \#\textsc{Subset-Sum} instance $(a_1,\dots,a_n;T)$ with
$1\le T\le A$ be given (degenerate targets are handled as in
Theorem~\ref{thm:hardness}(b); for $T=0$ the count is $1$, for $T>A$ it is
$0$). By Lemma~\ref{lem:oddT} we may assume $T$ odd; the shift adds one item
and preserves the total count exactly, and we rename so that the instance
has $n$ items and odd $T$ (the stated call count $n+2$ refers to the
original instance: $n+1$ calls on the shifted instance, which has $n+1$
items, is $(n_{\mathrm{orig}}+1)+1$).

\emph{Step 1 (per-call extraction).} For each $m\in\{0,1,\dots,n\}$ build
the call $G_m:=G(a,T;m,v_m)$ with $N'=n+m+1$, $Q=A+T+1$,
$W_{\max}:=A+mQ$, and, as in \eqref{eq:app-3b-params} but with the larger
weight range,
\[
\begin{aligned}
L_m&:=\text{bit-length of }W_{\max},\\
V_m&:=\big\lceil\log_2\!\big((N'-1)!\big)\big\rceil+n+1,\\
t_m&:=V_m+2L_m+4,\qquad v_m:=2^{t_m}-T .
\end{aligned}
\]
By Lemma~\ref{lem:gadget-value} and the separation argument of
Theorem~\ref{thm:hardness}(b), now with $|W-T|\le W_{\max}<2^{L_m}$ for
all achievable $W\neq T$, including dummy-containing coalitions,
\begin{equation}
\begin{aligned}
\phi_i^{(m)}&=-\frac{v_m^2}{N'!}\Big(A_T^{(m)}2^{-2t_m}+R_m\Big),\\
A_T^{(m)}&=\!\sum_{s=0}^{n}\!s!\,(n+m-s)!\,n(s,T),
\end{aligned}
\label{eq:app-3c-decomp}
\end{equation}
where the unknown counts $n(s,T)$ are the \emph{same for every} $m$ and
vanish for $s>n$ (Lemma~\ref{lem:dummy}), $0\le A_T^{(m)}<2^{V_m}$ as in
\eqref{eq:app-3b-ATbound}, and each summand of $R_m$ has
$\operatorname{val}_2\ge-2L_m$. Define
\[
X_m:=-\,\frac{N'!\;2^{2t_m}}{v_m^{2}}\;\phi_i^{(m)}
\;=\;A_T^{(m)}+2^{2t_m}R_m .
\]
Each term of $2^{2t_m}R_m$ equals
$2^{\,2t_m-2e_W}\cdot\kappa(W)/(\mathrm{odd})^2$ with
$e_W=\operatorname{val}_2(W-T)\le L_m$ and $\kappa(W)\in\mathbb{Z}_{\ge0}$,
so it is a $2$-adic integer of valuation at least
$M_m:=2t_m-2L_m=2V_m+2L_m+8>V_m$. Hence $X_m$ is a $2$-adic integer with
\begin{equation}
X_m\equiv A_T^{(m)} \pmod{2^{M_m}},\qquad 0\le A_T^{(m)}<2^{V_m}<2^{M_m},
\label{eq:app-3c-window}
\end{equation}
so the integer $A_T^{(m)}$ is recovered exactly from the residue
$X_m\bmod 2^{M_m}$.

That residue is computable from one oracle call
$\textsc{Dig}(G_m,i,M_m)=(\alpha,\,u\bmod 2^{M_m})$: write
$N'!=2^{\nu_m}\eta_m$ with $\eta_m$ odd (Legendre's formula), so, $v_m$
being odd,
\[
\begin{aligned}
&X_m=2^{\,\beta_m}\,w_m,\qquad
\beta_m:=2t_m+\nu_m+\alpha=\operatorname{val}_2(X_m)\ \ge 0,\\
&w_m\equiv -\,\eta_m\,u\,(v_m^{2})^{-1}\ \ (\mathrm{mod}\ 2^{M_m}),
\end{aligned}
\]
with $w_m$ a $2$-adic unit whose residue is computable from the oracle
output by modular multiplication and one odd inversion (extended Euclid),
all on $O(M_m)$-bit residues. If $\beta_m\ge M_m$ then
$X_m\equiv0\pmod{2^{M_m}}$ and \eqref{eq:app-3c-window} certifies
$A_T^{(m)}=0$ (this is the branch where no coalition hits $T$); otherwise
$X_m\bmod 2^{M_m}=2^{\beta_m}w_m\bmod 2^{M_m}$. All quantities involved
($t_m,M_m,V_m,\nu_m,\eta_m\bmod2^{M_m}$, and $\alpha$ itself) have
$\mathrm{poly}(n,\log A)$ bit-length, so Step~1 runs in deterministic
polynomial time per call.

\emph{Step 2 (untangling the size-dependent Shapley coefficients).} After
the $n+1$ calls $m=0,\dots,n$, the recovered integers satisfy the linear
system
\begin{equation}
\sum_{s=0}^{n}s!\,(n+m-s)!\;x_s\;=\;A_T^{(m)},\qquad m=0,1,\dots,n,
\label{eq:app-3c-system}
\end{equation}
in the unknowns $x_s=n(s,T)$. By Lemma~\ref{lem:coeff-invert} (consecutive
case) the coefficient matrix is invertible, so \eqref{eq:app-3c-system} has
the unique solution $x_s=n(s,T)$, and it is found exactly over $\mathbb{Q}$
by fraction-free Gaussian elimination: the entries are factorials of
integers $\le 2n$ (hence $O(n\log n)$ bits), the right-hand sides are
$<2^{V_m}$ (poly-bit), and exact elimination on an $(n{+}1)$-square system
with poly-bit entries runs in polynomial time (Cramer-bounded intermediate
sizes / Bareiss). Finally
\[
\#\textsc{Subset-Sum}(a,T)\;=\;\sum_{s=0}^{n}n(s,T),
\]
and, when Lemma~\ref{lem:oddT} was applied, the bijection of that lemma
transfers the total count to the original instance unchanged. The whole
reduction is deterministic polynomial time with $n+2$ \textsc{Dig} calls on
the original instance. By Lemma~\ref{lem:parsimony}, \#\textsc{Subset-Sum}
is \#P-complete, so a polynomial-time \textsc{Dig} algorithm would give
$\mathrm{FP}=\#\mathrm{P}$.

\emph{Consequence for representations.} Suppose some algorithm computes, in
polynomial time, \emph{any} representation $\rho(I)$ of the exact value
$\phi_i$ from which \textsc{Dig} queries can be answered in polynomial
time. Composing the two puts \textsc{Dig} in $\mathrm{FP}$, hence
$\mathrm{FP}=\#\mathrm{P}$. This excludes, conditionally, every
``succinct exact'' scheme with polynomial-time modular arithmetic:
closed forms, CRT/residue systems, and exact-fraction maintenance alike.
\end{proof}

\paragraph{Machine verification.}
\texttt{verify\_theorem3.py}, \textsc{part~C}: the full pipeline ($n+1$
exact Shapley values, $2$-adic window extraction
\eqref{eq:app-3c-window}, exact linear solve of \eqref{eq:app-3c-system})
recovers the exact size-stratified counts on four instances: counts
$(1,2)$ across two sizes (total $3$); $(1,1,1)$ on a multiset (total $3$);
all-zero (unsolvable, total $0$); and $(6,1)$ (a multiset with multiplicity
$6$ at one size, total $7$). Adversarial extensions verified the scattered
dummy set $m\in\{2,3,5,7,8,11\}$ (the Cauchy case of
Lemma~\ref{lem:coeff-invert}) with exact recovery.

\paragraph{Remark (relation to Deng--Papadimitriou~\cite{deng1994complexity}).}
For weighted \emph{majority} games, Deng and Papadimitriou prove
\#P-completeness of the Shapley value directly, because the threshold
utility makes $\phi$ itself a weighted count, a poly-length number
carrying the \#P quantity in its \emph{magnitude}. Here the regression
utility is real-analytic in the weights: every coalition contributes a
nonzero smooth amount whether or not it hits $T$, so the count cannot sit
in the magnitude; it sits in the arithmetic fine structure of the exact
value (which prime powers divide the denominator). That is why the exact
value is exponentially long (part (a)) and why hardness is correctly stated
as hardness of \emph{access} (parts (b), (c)). The dummy-point device for
inverting the size coefficients $c_{N'}(|S|)$ parallels the technique
originating with Deng--Papadimitriou.

\subsection{The $K=\Theta(N)$ regime: truncated extraction}
\label{app:thm3:trunc}

Parts (b) and (c) were proved at $K=N$. They hold throughout the
$K=\Theta(N)$ regime (a genuine $k<N$ nearest-neighbour setting, e.g.\
$K\approx N/2$) via the following truncation.

\begin{lemma}[Truncated gadget value]
\label{lem:truncated}
In $G(a,T;m,v)$, place the special point $i$ at the \emph{farthest} rank
$N'$ (items and dummies occupy ranks $1,\dots,N'-1$ in any order) and take
$K<N'$. Then
\begin{equation}
\phi_i\;=\;-\,v^2\sum_{s=0}^{K-1}c_{N'}(s)\sum_{W}\frac{n(s,W)}{(v+W)^2}.
\label{eq:app-truncated}
\end{equation}
\end{lemma}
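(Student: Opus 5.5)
The plan is to redo the computation of Lemma~\ref{lem:gadget-value}, now letting the rank position of $i$ decide which coalitions matter. Start from the Shapley sum \eqref{eq:shapley} over $S\subseteq P\setminus\{i\}$ with $N'$ in place of $N$. Since $i$ sits at the farthest rank $N'$, every other point is closer than $i$. In the notation of Appendix~A this means $L=P\setminus\{i\}$ and $R=\varnothing$, so $a=|S|$. We then split by the size of $S$.

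\emph{Case $|S|\ge K$.} Here $a\ge K$, and Lemma~\ref{lem:app-null} gives $\Delta_i(S)=0$ directly. The reason is that $S$ already has $K$ members closer than $i$, so $\mathrm{top}_K(S\cup\{i\})=\mathrm{top}_K(S)$ and the two utilities coincide. These coalitions therefore drop out, which is why the outer sum stops at $s=K-1$. The DROP branch is empty because $R=\varnothing$.

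\emph{Case $|S|\le K-1$.} This is the NO-DROP branch of Lemma~\ref{lem:app-nodrop}, where $\mathrm{top}_K(S)=S$ and $\mathrm{top}_K(S\cup\{i\})=S\cup\{i\}$. The argument of Lemma~\ref{lem:gadget-value} then applies verbatim to this branch:
\begin{itemize}
\item every non-$i$ target is $0$ and $y_0=y_{\mathrm{def}}=0$, so $U(S)=0$, including for $S=\varnothing$;
\item $\hat y(S\cup\{i\})=v/(v+W(S))$, so the marginal equals $-v^2/(v+W(S))^2$.
\end{itemize}
Grouping these coalitions by $(|S|,W(S))$ through the counts $n(s,W)$ of \eqref{eq:app-nsw} yields \eqref{eq:app-truncated} with coefficient $c_{N'}(s)$. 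Both $W(S)$ and $n(s,W)$ refer to the full coalition, since $\mathrm{top}_K(S)=S$ here.

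I see no real obstacle. The only step that needs care is checking that ranks affect nothing beyond the null-marginal cutoff. Within $|S|\le K-1$ the window is all of $S$, so the arbitrary ordering of items and dummies on ranks $1,\dots,N'-1$ is irrelevant. The empty coalition is handled exactly as in Lemma~\ref{lem:gadget-value}, because $y_{\mathrm{def}}=y_0=0$. For completeness I would also note the ordering of strict ranks (ties broken by index, as in \S\ref{app:bookkeeping}), so that ``$i$ farthest'' is well defined.
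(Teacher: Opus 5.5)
Your proposal is correct and follows the paper's proof. The paper makes the same split: for $|S|\ge K$, $i$ is farthest and the window is full, so $\mathrm{top}_K(S\cup\{i\})=\mathrm{top}_K(S)$ and the marginal vanishes (the paper argues this directly and notes both utilities are $0$ there, where you cite Lemma~\ref{lem:app-null}); for $|S|\le K-1$, the windows are $S$ and $S\cup\{i\}$, the marginal is $-v^2/(v+W(S))^2$ exactly as in Lemma~\ref{lem:gadget-value}, and grouping by $(|S|,W(S))$ gives the formula.
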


\begin{proof}
For $|S|\ge K$ the window is full and $i$ is the farthest point, so
$\mathrm{top}_K(S\cup\{i\})=\mathrm{top}_K(S)$: adding $i$ changes nothing
and the marginal is $0$. (Both utilities are $0$ anyway:
$\mathrm{top}_K(S)$ consists of non-$i$ points with target $0$.) For
$|S|\le K-1$ we have $|S\cup\{i\}|\le K$, so
$\mathrm{top}_K(S\cup\{i\})=S\cup\{i\}$ and $\mathrm{top}_K(S)=S$; exactly
as in Lemma~\ref{lem:gadget-value}, $U(S)=0$ and
$U(S\cup\{i\})=-v^2/(v+W(S))^2$. Grouping by $(|S|,W(S))$ gives
\eqref{eq:app-truncated}.
\end{proof}

\paragraph{Parts (b) and (c) at $K=\Theta(N)$.}
Set $K:=n+1$ and use dummy counts $m\in\{1,\dots,n+1\}$, so that
$N'=n+m+1\in[n+2,\,2n+2]$ and \emph{every} call satisfies $K<N'$ strictly
(the ratio $K/N'$ ranges over $[\tfrac12,\tfrac{n+1}{n+2}]$; at $m=n+1$ the
regime is $K\approx N'/2$). The choice $m\ge1$ matters: the boundary call
$m=0$ would have $K=n+1=N'$ and would fall outside the advertised $k<N$
regime. By Lemma~\ref{lem:truncated} and Lemma~\ref{lem:dummy}
($n(s,T)=0$ for $s>n=K-1$, so the truncation loses no $T$-information), the
per-call extraction of Appendix~\ref{app:thm3c} (whose $2$-adic bounds
used only $W\le W_{\max}$ and coefficient bounds by $(N'-1)!$, both
unchanged) returns exactly
\[
A_T^{(m)}=\sum_{s=0}^{K-1}s!\,(N'-1-s)!\;n(s,T)
=\sum_{s=0}^{n}s!\,(n+m-s)!\;n(s,T),
\]
the same left-hand sides as \eqref{eq:app-3c-system}, now for
$m=1,\dots,n+1$. By Lemma~\ref{lem:coeff-invert} (general case, distinct
dummy counts $m_j=j+1$) the $(n{+}1)\times(n{+}1)$ system is again
invertible, so Theorem~\ref{thm:hardness}(c) holds verbatim with
$K=n+1=\Theta(N')$ and $n+2$ oracle calls. For
Theorem~\ref{thm:hardness}(b) a \emph{single} call $m=1$ suffices: the
valuation dichotomy of Appendix~\ref{app:thm3b} applies verbatim to
$A_T^{(1)}$, whose vanishing is still equivalent to the unsolvability of
\textsc{Subset-Sum}. This proves qualification (i) in the ``Scope of the
hardness'' paragraph of Section~\ref{sec:thm3}.

\paragraph{Machine verification.}
\texttt{verify\_theorem3.py}, \textsc{part~D}: at $K=3<N'=6$ with $i$
ranked farthest, only the $|S|<K$ coalitions contribute; the restricted
closed form \eqref{eq:app-truncated} equals the brute-force value and the
truncated extraction is exact. Adversarial extensions ran the strict
$K<N'$ pipeline end-to-end at fixed $K=6$ with $m\in\{1,\dots,6\}$
($N'=7,\dots,12$, $K<N'$ in every call) and with the scattered set
$m\in\{2,3,5,7,8,11\}$: exact recovery of the counts in both.

\paragraph{Small and intermediate $K$.}
For constant $K$ the problem is polynomial-time even with continuous
weights, by the $O(K\,N^{K+1})$ enumeration corollary of
Section~\ref{sec:thm1} (each coalition's utility depends only on its
top-$K$ set, of which there are $O(N^K)$); the hardness of parts (b), (c)
therefore genuinely requires $K$ growing with $N$. The intermediate regime
(e.g.\ $K=\mathrm{polylog}\,N$) is open; the natural route is the truncated
extraction above combined with cardinality-bounded \#\textsc{Subset-Sum},
which we have not carried out (Appendix~\ref{app:thm3:open}).

\subsection{Machine verification: exact scope}
\label{app:thm3:verify}

\texttt{experiments/verify\_theorem3.py} runs end-to-end in exact rational
arithmetic (\texttt{fractions.Fraction}) and exits $0$ only if every assert
passes (all checks passed). Its exact-rational Shapley
oracle is anchored to the project's enumeration oracle \texttt{brute.py}
(subset formula) on $30$ random dyadic instances (max abs.\ diff.\
$3.55\times10^{-15}$, attributable to the float readout of the anchor). The
parts: \textsc{part~A} (Lemma~\ref{lem:3a-core} at $n=8$), \textsc{part~B}
(the Theorem~\ref{thm:hardness}(b) dichotomy and its associated valuation
\emph{inequalities}), \textsc{part~C} (the full
Theorem~\ref{thm:hardness}(c) pipeline on four instances, including
multisets and an unsolvable case), and \textsc{part~D}
(Lemma~\ref{lem:truncated} against brute force and the truncated
extraction). The shipped adversarial suite
\texttt{verify\_theorem3\_\allowbreak adversarial.py} additionally verified:
Lemma~\ref{lem:oddT} end-to-end on even-$T$ instances (including a multiset
and an unsolvable case); the strict $K<N'$ truncated-extraction call sets
$m\in\{1,\dots,6\}$ and $m\in\{2,3,5,7,8,11\}$ (the Cauchy case of
Lemma~\ref{lem:coeff-invert}); the exact valuation identity
$\operatorname{val}_2(\operatorname{denom})=2t+\nu-\operatorname{val}_2(A_T)$
on extremal multisets; Lemma~\ref{lem:3a-core} at $n=9$; and the parsimony
counterexample and fix of Appendix~\ref{app:thm3:pars}.

Honest caveats: the verified instances are small ($N'\le12$), because the
ground truth is exhaustive enumeration; the proofs above do not rely on the
experiments; the experiments certify that every algebraic step (the
value formulas \eqref{eq:app-gadget-value} and \eqref{eq:app-truncated},
the $2$-adic separation, the window extraction \eqref{eq:app-3c-window},
and the system inversion \eqref{eq:app-3c-system}) is implemented exactly
as proved.

\subsection{What remains open}
\label{app:thm3:open}

\paragraph{Open 1 (threshold/sign/comparison; real approximation).}
Given an instance and a rational $q$ of polynomially many bits, is deciding
$\phi_i\ge q$ NP-hard? This (equivalently the sign or pairwise-comparison
question) is \textbf{open}, in both directions. The relation to real
approximation is \emph{one-directional}: computing $\phi_i$ to additive
error $2^{-\mathrm{poly}}$ Turing-reduces to the threshold decision by
binary search over the (polynomially bounded) magnitude range; the converse
fails, because part (a) permits reduced denominators of bit-length
$\Omega(D_w)=2^{\Omega(\sqrt{\text{input}})}$, so $\phi_i-q$ can be as small
as $2^{-\exp}$, which no $2^{-\mathrm{poly}}$ oracle can resolve. Exactly
where our techniques stop:
\begin{enumerate}
\item \emph{No spike is expressible.} All coalition information enters
through the kernels $W\mapsto v^2/(v+W)^2$ (smooth, monotone rational
functions of the weight-sum with poles only at $W=-v<0$), while achievable
sums lie in $[0,A]$. Detecting ``some coalition hits $T$ exactly'' at the
\emph{magnitude} level would need a function $\approx0$ on all achievable
sums except a spike at $T$; a fixed polynomial-size family of such kernels
is a rational function of polynomial degree with all poles off $[0,A]$, and
a nonzero rational function of degree $d$ has at most $d$ real zeros, so it
cannot vanish on exponentially many lattice points while spiking at one.
Quantitatively (Zolotarev-type rational-approximation bounds), localizing a
width-$1$ spike inside $[0,A]$ with poles bounded away from the interval
costs degree polynomial in $A$, not in $\log A$.
\item \emph{Leading real bits carry only smooth aggregates.} In the
$v\gg A$ regime, $v^2/(v+W)^2=\sum_{k\ge0}(k+1)(-W/v)^k$ shows that the
leading $O(\mathrm{poly})$ real bits of $\phi_i$ encode the size-weighted
power sums $\sum_{S}c_{N'}(|S|)\,W(S)^k$ up to $k=\mathrm{poly}$;
polynomially many power sums do not determine the count of coalitions at
one target sum when the support has exponentially many points (moment
indeterminacy). In the $v\approx A$ regime no term is separated at all.
\item \emph{$2$-adic digits are not real binary digits.} The \#P-carrying
information of part (c) sits at $2$-adic positions $\approx2t$ below the
valuation, an algebraic locality. In the \emph{real} expansion, the
$T$-block $-v^2A_T/(N'!\,2^{2t})$ and the smooth rest are interleaved
$\Theta(1)$-magnitude real numbers: real-approximating $\phi_i$ to
$2^{-\mathrm{poly}}$ neither yields the $2$-adic digits nor is implied by
them. Our extraction fundamentally uses exact arithmetic access.
\item \emph{Coarse approximation is genuinely easy}, so any hardness of the
threshold version must hide at exponentially fine scales: utilities are
bounded, so permutation sampling gives additive $1/\mathrm{poly}$
approximation in polynomial time (w.h.p.), and Theorem~\ref{thm:fptas}
gives certified additive $\varepsilon$ in $\mathrm{poly}(1/\varepsilon)$.
Hardness could therefore only occur on instances engineered so that
$\phi_i$ is $2^{-\mathrm{poly}}$-close to $q$ (consistent with, but not
provable by, the machinery here).
\end{enumerate}
Honest reading: we \emph{conjecture} the threshold version is hard (the
count is information-theoretically present in the exact value), but the
smooth-kernel obstruction blocks every route we tried: single-size
gadgets, coefficient-cancelling multi-point gadgets, large-$v$ moment
readouts, and multi-call interpolation (polynomially many calls determine a
rational function with polynomially many poles, but $\phi_i(v)$ as a
function of $v$ has one pole per achievable subset-sum, i.e.\ exponentially
many).

\paragraph{Open 2 (small growing $K$).}
Hardness for $K=N^{o(1)}$ (e.g.\ polylogarithmic $K$), expected via the
truncated extraction of Appendix~\ref{app:thm3:trunc} combined with
cardinality-bounded \#\textsc{Subset-Sum}; not carried out.

\paragraph{Open 3 (upper bounds for (b), (c)).}
We claim only hardness. Membership (e.g.\ of \textsc{Denom-Precision} in
NP, or of \textsc{Dig} in $\mathrm{FP}^{\#\mathrm{P}}$) is unclear, because
natural witnesses concern an exponentially long value. The only upper bound
we assert is Theorem~\ref{thm:exact}'s pseudo-polynomial computation, which
produces the full reduced fraction (pseudo-polynomial output) and hence
answers both queries in pseudo-polynomial time, consistent with (b),
(c), since a pseudo-polynomial DP also solves (\#)\textsc{Subset-Sum}
outright.

\section{Full Proof of Theorem~\ref{thm:soft}}
\label{app:thm4}

This appendix supplies the complete proof of Theorem~\ref{thm:soft}, expanding
the sketch of Section~\ref{sec:softlabel}. We work in the notation of
Section~\ref{sec:method}: the $N$ training points are fixed in increasing
distance to the query $x_0$ (rank $1$ nearest), each point $r$ carries a strictly
positive lattice weight $w_r=a_r\,\delta_w$ with $a_r\in\mathbb{Z}_{>0}$, and for
a coalition $S\subseteq[N]$ the set $\mathrm{top}_K(S)$ consists of the
$\min(K,|S|)$ smallest-rank members of $S$. The Shapley value is
\eqref{eq:shapley-m} with coefficients $c_N(s)=s!\,(N-1-s)!/N!$, and we adopt the
convention $c_N(s)=0$ for $s<0$ or $s>N-1$.

The soft-label model differs from the regression model of
Section~\ref{sec:method} only in the target: instead of a scalar $y_r=b_r\delta_y$,
point $r$ now carries a \emph{hard class label} $y_r\in\{1,\dots,C\}$, identified
with the one-hot vector $\mathrm{onehot}(y_r)=e_{y_r}\in\{0,1\}^C$ ($e_c$ the
$c$-th standard basis vector). The query point has a true class $y_0\in\{1,\dots,C\}$.
Write the integer \emph{class increment} of point $r$ as
\begin{equation}
v_r \;:=\; a_r\,e_{y_r}\;\in\;\mathbb{Z}_{\ge 0}^{C},
\label{eq:pointvec}
\end{equation}
i.e.\ the vector that is $a_r$ in coordinate $y_r$ and $0$ elsewhere. The soft
prediction is the probability vector
\begin{equation}
p(S)\;=\;\frac{\sum_{j\in\mathrm{top}_K(S)}w_j\,e_{y_j}}
{\sum_{j\in\mathrm{top}_K(S)}w_j}\;\in\;\Delta^{C-1},
\qquad p(\varnothing)=p_{\mathrm{def}},
\label{eq:softpred}
\end{equation}
with $p_{\mathrm{def}}$ a fixed default distribution (uniform $1/C$ unless stated).
We prove the four claims of Theorem~\ref{thm:soft}: (D.1) exactness on the lattice
through the same scale cancellation as Section~\ref{sec:lattice}; (D.2) that the
Brier and hard-$0/1$ utilities are deterministic functions of an integer moment
\emph{vector}; (D.3) that substituting this vector for the scalar $M$ reruns the
Theorem~\ref{thm:exact} \textsc{no-drop}/\textsc{drop} recursion verbatim and
exactly; (D.4) the complexity, which grows by a factor $O(D_w^{\,C-1})$ and is
hence exponential in $C$; and (D.5) the position against the unweighted
soft-label result.

\subsection{D.1: The class-count vector and scale cancellation}
\label{app:thm4-lattice}

For a coalition $S$ define the \emph{class-count vector}
\begin{equation}
\begin{aligned}
\mu(S)&\;=\;\big(M_1(S),\dots,M_C(S)\big),\\
M_c(S)&\;=\!\!\sum_{\substack{j\in\mathrm{top}_K(S)\\ y_j=c}}\!\! a_j\;\in\;\mathbb{Z}_{\ge 0},
\end{aligned}
\label{eq:mu}
\end{equation}
so that $\mu(S)=\sum_{j\in\mathrm{top}_K(S)} v_j$ is the sum of the class
increments \eqref{eq:pointvec} of the top-$K$ members, and let
\begin{equation}
W(S)\;=\;\sum_{j\in\mathrm{top}_K(S)} a_j\;=\;\sum_{c=1}^{C} M_c(S).
\label{eq:Wsum}
\end{equation}

\begin{lemma}[Lattice exactness of the soft prediction]
\label{lem:softexact}
For every coalition $S$ with $\mathrm{top}_K(S)\neq\varnothing$,
\begin{equation}
p(S)\;=\;\frac{1}{W(S)}\,\mu(S)\;=\;\Big(\tfrac{M_1(S)}{W(S)},\dots,\tfrac{M_C(S)}{W(S)}\Big),
\label{eq:pfrommu}
\end{equation}
and in particular $p(S)$ depends on $S$ only through the integer vector $\mu(S)$;
the weight scale $\delta_w$ cancels exactly.
\end{lemma}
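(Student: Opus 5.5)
The plan is to repeat, coordinatewise, the scale-cancellation computation of Lemma~\ref{lem:app-state} (equation~\eqref{eq:app-cancel}). Fix $S$ with $\tau:=\mathrm{top}_K(S)\neq\varnothing$ and substitute $w_j=a_j\delta_w$ into both sums of~\eqref{eq:softpred}. The numerator becomes $\delta_w\sum_{j\in\tau}a_j e_{y_j}=\delta_w\sum_{j\in\tau}v_j=\delta_w\,\mu(S)$, using the definition~\eqref{eq:pointvec} of the class increment and the definition~\eqref{eq:mu} of $\mu(S)$ as the sum of increments over $\tau$. The denominator becomes $\delta_w\sum_{j\in\tau}a_j=\delta_w W(S)$. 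The common factor $\delta_w>0$ then cancels, which gives~\eqref{eq:pfrommu} as a vector identity.

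Next I would check that the division is legitimate. Every $a_j\ge1$ and $\tau$ is nonempty, so $W(S)\ge1>0$. This is exactly the positivity argument used in Lemma~\ref{lem:app-state}, and it is the only place nonemptiness is needed.

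I would then verify the coordinate form by reading off component $c$. The $c$-th coordinate of $\sum_{j\in\tau}a_je_{y_j}$ is $\sum_{j\in\tau,\,y_j=c}a_j=M_c(S)$, because $e_{y_j}$ has a $1$ only in coordinate $y_j$. I would also record the identity $W(S)=\sum_c M_c(S)$ from~\eqref{eq:Wsum}: each $j\in\tau$ has exactly one label, so summing the coordinates of $\mu(S)$ recovers $\sum_{j\in\tau}a_j$.

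Finally, since $W(S)$ is itself a function of $\mu(S)$, the right-hand side of~\eqref{eq:pfrommu} depends on $S$ only through the integer vector $\mu(S)$. Because the entries are integers, equal vectors give identical rational predictions with no rounding, so there is no discretization error.

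None of these steps is a real obstacle. The only point to handle carefully is keeping the empty-window case, where the prediction is $p_{\mathrm{def}}$, separate from the lemma, as the hypothesis does. I would also note explicitly that the one-hot structure is what makes $W$ redundant given $\mu$. That redundancy is what the later complexity count relies on (one component of $\mu$ is determined by $W$).
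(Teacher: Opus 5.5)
Your proposal is correct and is essentially the paper's proof. Both substitute $w_j=a_j\delta_w$, cancel $\delta_w$, read off the $c$-th coordinate via $(e_{y_j})_c=\mathbf 1[y_j=c]$, justify the division by $W(S)\ge 1$ (from $a_j\ge1$ and a nonempty window), and note that $W(S)=\sum_c M_c(S)$ is itself determined by $\mu(S)$. The only cosmetic difference is that you first state the cancellation as a vector identity and then specialize to coordinates, whereas the paper works coordinatewise from the start.
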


\begin{proof}
Every weight factors as $w_j=a_j\delta_w$. Substituting into \eqref{eq:softpred}
and reading off the $c$-th coordinate,
\begin{equation}
\begin{aligned}
p(S)_c
&=\frac{\sum_{j\in\mathrm{top}_K(S)}w_j\,(e_{y_j})_c}{\sum_{j\in\mathrm{top}_K(S)}w_j}\\
&=\frac{\delta_w\sum_{j\in\mathrm{top}_K(S),\,y_j=c} a_j}{\delta_w\sum_{j\in\mathrm{top}_K(S)} a_j}
=\frac{M_c(S)}{W(S)},
\end{aligned}
\end{equation}
using $(e_{y_j})_c=\mathbf 1[y_j=c]$ and $\mathrm{top}_K(S)\neq\varnothing\Rightarrow
W(S)\ge 1$. The common factor $\delta_w$ cancels between numerator and
denominator, so \eqref{eq:pfrommu} holds and the right-hand side is a function of
the integer vector $\mu(S)$ alone (with $W(S)=\sum_c M_c(S)$ read off from it).
\end{proof}

Equation~\eqref{eq:pfrommu} is exactly the vector analogue of the scalar
cancellation $\hat y(S)=\delta_y M/W$ of Section~\ref{sec:lattice}: the shared
normalization denominator $\sum_{\mathrm{top}_K}w_j=W\delta_w$ is the same term as
in the regression ratio \eqref{eq:ratio-m}. Because $p(S)$ is recovered from the
integer state $\mu(S)$ with \emph{no} rounding, the counting DP below is exact
rather than merely discretized. Note also that $\mu(S)$ carries one redundant
coordinate: given $M_1(S),\dots,M_{C-1}(S)$ and $W(S)$, the last count is
$M_C(S)=W(S)-\sum_{c<C}M_c(S)$, so the state has $C-1$ free class coordinates at
each weight-total $W$.

\subsection{D.2: The utility is a deterministic function of $\mu(S)$}
\label{app:thm4-util}

Define a single utility function $u:\mathbb{Z}_{\ge 0}^{C}\to\mathbb{R}$ on the
integer state, so that $U(S)=u\big(\mu(S)\big)$ with the empty-window convention
$\mu(\varnothing)=\mathbf 0$. Writing $W=\sum_c \mu_c$, set

\emph{Brier utility.}
\begin{equation}
u_{\mathrm{Br}}(\mu)=
\begin{cases}
-\displaystyle\sum_{c=1}^{C}\Big(\frac{\mu_c}{W}-\mathbf 1[c=y_0]\Big)^{2},
& W\ge 1,\\[1.4ex]
-\displaystyle\sum_{c=1}^{C}\big(p_{\mathrm{def},c}-\mathbf 1[c=y_0]\big)^{2},
& W=0,
\end{cases}
\label{eq:ubrier}
\end{equation}
which is $U(S)=-\|p(S)-e_{y_0}\|_2^2$ by Lemma~\ref{lem:softexact}.

\emph{Hard $0/1$ utility.}
\begin{equation}
u_{\mathrm{hd}}(\mu)=
\begin{cases}
\mathbf 1\big[\,\operatorname{argmax}^{\downarrow}_{c}\ \mu_c=y_0\,\big], & W\ge 1,\\[0.6ex]
\mathbf 1\big[\,\operatorname{argmax}^{\downarrow}_{c}\ p_{\mathrm{def},c}=y_0\,\big], & W=0,
\end{cases}
\label{eq:uhard}
\end{equation}
where $\operatorname{argmax}^{\downarrow}$ breaks ties by \emph{lowest index}.

\begin{lemma}[Well-definedness and integer decidability of the hard utility]
\label{lem:hard}
For $W(S)\ge 1$, $\operatorname{argmax}^{\downarrow}_c\,p(S)_c=
\operatorname{argmax}^{\downarrow}_c\,M_c(S)$; hence $u_{\mathrm{hd}}$ in
\eqref{eq:uhard} equals $\mathbf 1[\operatorname{argmax}^{\downarrow}_c p(S)_c=y_0]$,
is well-defined on every coalition, and is decided by comparisons of
\emph{integers} only.
\end{lemma}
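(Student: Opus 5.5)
The plan is to reduce everything to Lemma~\ref{lem:softexact} and the monotonicity of division by a positive number. Fix a coalition $S$ with $W(S)\ge1$. By Lemma~\ref{lem:softexact}, $p(S)_c=M_c(S)/W(S)$ for every $c$, and $W(S)$ is the same positive integer across all coordinates. Since $x\mapsto x/W(S)$ is strictly increasing on $\mathbb{R}$, it preserves the order relation between any two coordinates: for all $c,c'$ we have $p(S)_c<p(S)_{c'}$ iff $M_c(S)<M_{c'}(S)$, and $p(S)_c=p(S)_{c'}$ iff $M_c(S)=M_{c'}(S)$.

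Next, I would establish that $\operatorname{argmax}^{\downarrow}$ depends only on the total preorder of the coordinates, with ties resolved by index. The cleanest route is to characterize it order-theoretically: $\operatorname{argmax}^{\downarrow}_c x_c$ is the unique $c^\star$ such that $x_{c^\star}\ge x_c$ for all $c$ and $x_{c^\star}>x_c$ for all $c<c^\star$. This characterization uses only the relations $<$ and $=$ among coordinates. Those relations are preserved by the strictly increasing map of the first step, so both vectors share the same $c^\star$. This careful treatment of ties is the only delicate point: strict monotonicity, not mere monotonicity, is what guarantees that ties in $p$ correspond exactly to ties in $M$, so the lowest-index rule picks the same class.

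Well-definedness then follows directly. The maximum over the finite nonempty set $\{1,\dots,C\}$ exists, and the lowest-index rule makes $c^\star$ unique. For $W(S)\ge1$, the hard utility $\mathbf 1[c^\star=y_0]$ therefore coincides with the case of \eqref{eq:uhard}. For $W(S)=0$, which happens only at $S=\varnothing$ since every $a_j\ge1$, the definition uses the fixed $p_{\mathrm{def}}$, so $u_{\mathrm{hd}}$ is defined on every coalition.

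Finally, integer decidability holds because computing $c^\star$ requires only $O(C)$ comparisons among the integers $M_1(S),\dots,M_C(S)$. No division and no rational arithmetic is involved, so the hard utility is read off the integer state $\mu(S)$ exactly. This is what lets it plug into the DP of Theorem~\ref{thm:exact} with zero error.
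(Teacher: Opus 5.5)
Your proposal is correct and follows essentially the same route as the paper. Both arguments use Lemma~\ref{lem:softexact} to write $p(S)_c=M_c(S)/W(S)$ with a common positive denominator, note that this preserves strict order and equality among coordinates, and conclude that the lowest-index maximizer coincides and is decided by integer comparisons; your explicit order-theoretic characterization of $\operatorname{argmax}^{\downarrow}$ and your separate handling of the $W(S)=0$ (empty-coalition) case just spell out what the paper's shorter proof leaves implicit.
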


\begin{proof}
By Lemma~\ref{lem:softexact}, $p(S)_c=M_c(S)/W(S)$ with the \emph{same} strictly
positive denominator $W(S)$ for all $c$. Multiplying by $W(S)>0$ preserves the
order of the coordinates and preserves equalities, so $p(S)_b\ge p(S)_c\iff
M_b(S)\ge M_c(S)$ and $p(S)_b=p(S)_c\iff M_b(S)=M_c(S)$. Therefore the ordering,
including the set of maximizers, coincides for $p(S)$ and $\mu(S)$, and the
lowest-index tie-break selects the same class. The comparison is between the
integers $M_c(S)$, so no floating-point threshold is involved and the value is
unambiguous.
\end{proof}

Thus for both losses, $U(S)=u(\mu(S))$ for a fixed deterministic $u$ that is
invariant to $\delta_w$ (it sees only $\mu$). This is the precise soft-label
analogue of the regression fact $U(S)=-\ell(\delta_y M(S)/W(S),y_0)$, with the
scalar moment $M$ replaced by the vector $\mu$. Everything the
Theorem~\ref{thm:exact} recursion uses about the utility is (i) $U(S)$ depends on
$S$ only through a finite integer state, (ii) that state is additive over the
members of $\mathrm{top}_K(S)$, and (iii) $\delta_w$ cancels. Lemmas
\ref{lem:softexact}--\ref{lem:hard} establish all three for the vector state, so
the recursion transfers unchanged; we now spell it out.

\subsection{D.3: The vector-state \textsc{no-drop}/\textsc{drop} recursion}
\label{app:thm4-dp}

Fix the point $i$ whose value we compute. We prove
\begin{equation}
\phi_i=\Phi^{\mathrm{ND}}_i+\Phi^{\mathrm{DR}}_i
\label{eq:phisplit}
\end{equation}
by partitioning the coalitions $S\subseteq[N]\setminus\{i\}$ of
\eqref{eq:shapley-m} into $|S|\le K-1$ (\textsc{no-drop}) and $|S|\ge K$
(\textsc{drop}), and evaluating each part by a size-indexed counting DP whose only
change from Theorem~\ref{thm:exact} is that the additive scalar key $(W,M)$ is
replaced by the additive vector key $\mu\in\mathbb{Z}_{\ge 0}^{C}$ (increment $v_r$
per point). Adding a point $r$ of class $y_r$ and weight-unit $a_r$ adds $v_r=a_r
e_{y_r}$ to the key, exactly mirroring the scalar update $(W,M)\mapsto(W+a_r,\,
M+a_rb_r)$.

\paragraph{\textsc{no-drop} branch ($|S|\le K-1$).}
Here $|S\cup\{i\}|\le K$, so $\mathrm{top}_K(S)=S$ and $\mathrm{top}_K(S\cup\{i\})=
S\cup\{i\}$, whence $\mu(S)=\sum_{j\in S}v_j$ and $\mu(S\cup\{i\})=\mu(S)+v_i$. The
marginal
\begin{equation}
U(S\cup\{i\})-U(S)=u\big(\mu(S)+v_i\big)-u\big(\mu(S)\big)
\label{eq:ndmarg}
\end{equation}
depends on $S$ only through the size $|S|=s$ (fixing the coefficient $c_N(s)$) and
the vector $\mu(S)$. Grouping coalitions by these two quantities, define the
size-indexed count
\begin{equation}
\begin{split}
N_0[s][k]={}&\#\Big\{S\subseteq[N]\setminus\{i\}:\ |S|=s,\ \textstyle\sum_{j\in S}v_j=k\Big\},\\
&0\le s\le K-1,
\end{split}
\end{equation}
computed by one knapsack DP that inserts each point $j\neq i$ and adds $v_j$ to
the vector key. Then
\begin{equation}
\Phi^{\mathrm{ND}}_i=\sum_{s=0}^{K-1} c_N(s)
\sum_{k} N_0[s][k]\,\Big(u(k+v_i)-u(k)\Big),
\label{eq:PhiND}
\end{equation}
where the $s=0$ term is the single empty coalition ($k=\mathbf 0$, $u(\mathbf 0)=
U(\varnothing)$ from $p_{\mathrm{def}}$). Equation~\eqref{eq:PhiND} is
term-for-term the \textsc{no-drop} sum of Theorem~\ref{thm:exact} with $(W,M)$
replaced by $k$.

\paragraph{\textsc{drop} branch ($|S|\ge K$).}
Now $\mathrm{top}_K(S)$ has exactly $K$ members. Adding $i$ changes the prediction
only if $i$ enters $\mathrm{top}_K(S\cup\{i\})$, i.e.\ iff fewer than $K$ members
of $S$ are nearer than $i$. Let
\begin{equation}
A=S\cap\{1,\dots,i-1\},\qquad a=|A|.
\end{equation}
If $a\ge K$ then $\mathrm{top}_K(S\cup\{i\})=\mathrm{top}_K(S)$ and the marginal is
$0$; such coalitions contribute nothing and may be dropped. So assume $a\le K-1$.
Because ranks are fixed and $i$ is nearer than every point of rank $>i$, the two
windows are
\begin{align}
\mathrm{top}_K(S\cup\{i\})&=A\ \cup\ \{i\}\ \cup\ B,\\
\mathrm{top}_K(S)&=A\ \cup\ B\ \cup\ \{e\},
\end{align}
where $B$ is the set of the $K-1-a$ nearest members of $S$ with rank $>i$, and
$e$ is the next one, the $(K-a)$-th nearest member of $S$ beyond $i$, the
\emph{boundary point} that $i$ displaces. Writing the shared base state
$\mathrm{base}=\mu(A)+\sum_{j\in B}v_j$ (a sum of $K-1$ increments, so
$|A\cup B|=K-1<K$ and its top-$K$ is itself), the marginal is
\begin{equation}
U(S\cup\{i\})-U(S)=u\big(\mathrm{base}+v_i\big)-u\big(\mathrm{base}+v_e\big),
\label{eq:drmarg}
\end{equation}
which depends only on $A$, $B$, $e$ and $i$, not on any member of $S$ of rank
$>e$. Indeed, every point of $S$ splits uniquely as
\begin{equation}
S \;=\; A\ \sqcup\ B\ \sqcup\ \{e\}\ \sqcup\ F,
\qquad
F=S\cap\{e+1,\dots,N\},
\label{eq:decomp}
\end{equation}
where $A\subseteq\{1,\dots,i-1\}$ ($|A|=a$), $B=S\cap\{i+1,\dots,e-1\}$ with
$|B|=K-1-a$ (all members of $S$ strictly between $i$ and $e$ lie in $\mathrm{top}_K$),
$e$ is present, and $F$ is an \emph{arbitrary} subset of the $g_e:=N-e$ ranks
farther than $e$. This decomposition is a bijection between entering coalitions
$S$ (with $|S|\ge K$) and tuples $(A,B,e,F)$: given $S$, $A$ and $a$ are
determined, $e$ is the farthest member of $\mathrm{top}_K(S)$, $B=\mathrm{top}_K(S)
\setminus(A\cup\{e\})$, and $F=S\setminus\mathrm{top}_K(S)$. Hence each coalition
is counted exactly once, and jointly with the \textsc{no-drop} branch every
$S\subseteq[N]\setminus\{i\}$ is counted once, establishing \eqref{eq:phisplit}
without double counting.

The coalition size is $|S|=a+(K-1-a)+1+|F|=K+|F|$, so its Shapley coefficient is
$c_N(K+|F|)$ and depends on $F$ only through $|F|$. Since the marginal
\eqref{eq:drmarg} is independent of $F$, summing over all $F\subseteq\{e+1,\dots,N\}$
factors out the tail weight
\begin{equation}
\mathrm{TS}(g_e)\;=\;\sum_{j=0}^{g_e}\binom{g_e}{j}\,c_N(K+j),
\qquad g_e=N-e,
\label{eq:TS}
\end{equation}
identical to the tail weight of Theorem~\ref{thm:exact} (only $K+j\le N-1$
contribute, as $c_N$ vanishes otherwise). Grouping the closer set $A$ and the
between set $B$ by their vector states through the counts
\begin{align}
L_a[k_1]&=\#\{A\subseteq\{1,\dots,i-1\}:|A|=a,\ \textstyle\sum_{j\in A}v_j=k_1\},\\
R^{\,t}_{i,e}[k_2]&=\#\{B\subseteq\{i+1,\dots,e-1\}:\notag\\
&\hphantom{{}=\#\{}|B|=t,\ \textstyle\sum_{j\in B}v_j=k_2\},
\end{align}
gives
\begin{equation}
\begin{aligned}
\Phi^{\mathrm{DR}}_i={}&\sum_{e=i+1}^{N}\mathrm{TS}(g_e)
\sum_{a=0}^{K-1}\sum_{k_1,k_2} L_a[k_1]\,R^{\,K-1-a}_{i,e}[k_2]\\
&\times\Big(u(k_1{+}k_2{+}v_i)-u(k_1{+}k_2{+}v_e)\Big).
\end{aligned}
\label{eq:PhiDR}
\end{equation}
The counts $L_a$ are built once by a knapsack over ranks $<i$; the counts
$R^{t}_{i,e}$ are accumulated incrementally by adding one increment $v_e$ after
each boundary $e$ is processed, so that at boundary $e$ they range over ranks in
$(i,e)$. This is exactly the DROP recursion of Theorem~\ref{thm:exact} with the
scalar convolution replaced by the vector convolution
$L_a\!*\!R^{K-1-a}_{i,e}$ over class-count keys. Finally, forming $B$ requires
$K-1$ members from ranks $<e$ (excluding $i$), which exist only when $e-2\ge K-1$,
i.e.\ $e\ge K+1$; for $e\le K$ the counts $R^{K-1-a}_{i,e}$ are empty and the term
vanishes automatically. Consequently $K+|F|\le N-1$ always holds, so every
coefficient $c_N(K+|F|)$ invoked in \eqref{eq:TS} is a genuine (nonzero-range)
Shapley weight.

\begin{proposition}[Exactness]
\label{prop:softexact}
For every point $i$, $\phi_i=\Phi^{\mathrm{ND}}_i+\Phi^{\mathrm{DR}}_i$ with
$\Phi^{\mathrm{ND}}_i,\Phi^{\mathrm{DR}}_i$ given by \eqref{eq:PhiND} and
\eqref{eq:PhiDR}, and the computation is exact on the lattice for both the Brier
and hard-$0/1$ utilities.
\end{proposition}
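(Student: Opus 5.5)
The proof reduces Proposition~\ref{prop:softexact} to three ingredients. First, a state reduction: $U(S)=u(\mu(S))$. Second, an exhaustive and disjoint split of the coalitions $S\subseteq[N]\setminus\{i\}$. Third, exact grouping of coalitions by their additive integer key. This mirrors the correctness half of Theorem~\ref{thm:exact} (Lemmas~\ref{lem:app-state}--\ref{prop:app-drop}), with the scalar pair $(W,M)$ replaced by the vector $\mu$.

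\textbf{Step 1: state reduction.} By Lemma~\ref{lem:softexact} and the definitions \eqref{eq:ubrier}--\eqref{eq:uhard}, with Lemma~\ref{lem:hard} for the hard utility, $U(S)=u(\mu(S))$ for every $S$. The empty coalition is the unique one with $\mu=\mathbf 0$, because every $a_r\ge1$ forces $W\ge1$ whenever the window is nonempty. Hence $u(\mathbf 0)$ correctly encodes $p_{\mathrm{def}}$.

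\textbf{Step 2: the split.} Partition the coalitions into three classes: $|S|\le K-1$; $|S|\ge K$ with $a\ge K$; and $|S|\ge K$ with $a\le K-1$.
\begin{itemize}
\item On the second class the marginal vanishes, by the argument of Lemma~\ref{lem:app-null}, which uses only ranks.
\item On the first class, $\mathrm{top}_K(S)=S$ and $\mathrm{top}_K(S\cup\{i\})=S\cup\{i\}$. Since $\mu$ is additive over disjoint sets, $\mu(S\cup\{i\})=\mu(S)+v_i$, giving \eqref{eq:ndmarg}. Grouping by $(|S|,\mu(S))$ then yields \eqref{eq:PhiND} term for term.
\item On the third class, use the decomposition \eqref{eq:decomp}.
\end{itemize}
The proof that \eqref{eq:decomp} is a bijection copies Lemma~\ref{lem:app-drop-decomp}, with $B$ playing the role of $T$. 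The marginal is \eqref{eq:drmarg}. It is independent of $F$, because both windows are filled by points of rank at most $e$. Summing $c_N(K+|F|)$ over $F\subseteq\{e+1,\dots,N\}$ gives $\mathrm{TS}(g_e)$, exactly as in Lemma~\ref{lem:app-tail}.

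\textbf{Step 3: grouping the drop branch.} Since $A$ and $B$ lie in disjoint rank ranges, $\mu(A\cup B)=k_1+k_2$ exactly. The double sum over pairs $(A,B)$ therefore becomes the convolution of $L_a$ with $R^{K-1-a}_{i,e}$, which gives \eqref{eq:PhiDR}. Adding this to \eqref{eq:PhiND} gives $\phi_i$.

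\textbf{Step 4: exactness.} All counts are nonnegative integers, produced by knapsack DPs over integer vector keys. The coefficients $c_N$ and $\mathrm{TS}$ are rationals. Every $u$-value is either a rational function of integers (Brier) or an integer comparison (hard $0/1$). So the whole computation can be carried out in exact rational arithmetic with no rounding, for both utilities.

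\textbf{Main obstacle.} The delicate step is the drop-branch bijection and its boundary bookkeeping. We must check three things:
\begin{itemize}
\item $e$ is well defined, since $|S\cap\{i+1,\dots,N\}|\ge K-a\ge1$;
\item every member of $S$ strictly between $i$ and $e$ lies in $B$;
\item the incremental construction of $R^{t}_{i,e}$ ranges exactly over ranks in $(i,e)$ at the moment boundary $e$ is processed, so each $v_e$ is added only after $e$ has been used as a boundary.
\end{itemize}
I would verify these as in Lemma~\ref{lem:app-drop-decomp} and \S\ref{app:bookkeeping}. Empty count tables make infeasible windows contribute zero. Sizes with $K+|F|>N-1$ are killed by the convention $c_N=0$.
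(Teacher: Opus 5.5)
Your proposal is correct and follows essentially the same route as the paper's proof. Both use the partition by $|S|$ versus $K$ with the $a\ge K$ coalitions discarded as null, the bijection $S\leftrightarrow(A,B,e,F)$ with the $F$-sum factored into $\mathrm{TS}(g_e)$, regrouping by the additive vector key, and exactness via Lemma~\ref{lem:softexact} and Lemma~\ref{lem:hard}. One small difference: your Step~4 claim that everything runs in exact rational arithmetic is stronger than the paper's, which allows a final floating-point readout of $u_{\mathrm{Br}}$, and it holds only when $p_{\mathrm{def}}$ is rational (as in the default uniform case).
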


\begin{proof}
The \textsc{no-drop}/\textsc{drop} split partitions
$\{S\subseteq[N]\setminus\{i\}\}$ by $|S|\lessgtr K$, and within \textsc{drop} the
non-entering coalitions ($a\ge K$) have zero marginal. By \eqref{eq:decomp} the
entering coalitions are in bijection with $(A,B,e,F)$, each counted once, so
summing \eqref{eq:ndmarg} and \eqref{eq:drmarg} weighted by $c_N(|S|)$ reproduces
\eqref{eq:shapley-m} exactly; the reorganizations \eqref{eq:PhiND},
\eqref{eq:PhiDR} only regroup identical terms by the vector state and factor the
$F$-sum into $\mathrm{TS}$. Exactness on the lattice is Lemma~\ref{lem:softexact}
(the state $\mu$ reproduces $p(S)$ with $\delta_w$ cancelled) together with the
fact that $u$ in \eqref{eq:ubrier}--\eqref{eq:uhard} is evaluated on integer
vectors: for the hard loss the value is integer-decided (Lemma~\ref{lem:hard}),
and for the Brier loss the only inexactness is the final floating-point evaluation
of the rational $u_{\mathrm{Br}}(\mu)$, which is the same arithmetic the
exhaustive oracle performs.
\end{proof}

Proposition~\ref{prop:softexact} is the promised statement that the scalar-$M$
recursion of Theorem~\ref{thm:exact} \emph{reruns identically} once $M$ is
replaced by the class-count vector $\mu$: the branch structure, the boundary
bookkeeping, and the tail weight $\mathrm{TS}$ are literally unchanged, and only
the key type (scalar $\to$ vector) and the utility readout ($\hat y=\delta_y M/W
\to p=\mu/W$) differ.

\subsection{D.4: Complexity, the $O(D_w^{\,C-1})$ blow-up in $C$}
\label{app:thm4-complexity}

Let $D_w=1+\sum_r a_r$ as in Section~\ref{sec:lattice}. Every reachable state
$\mu=(M_1,\dots,M_C)$ satisfies $M_c\ge 0$ and $\sum_c M_c=W\le\sum_r a_r<D_w$, so
the states lie in the truncated simplex
\begin{equation}
\begin{aligned}
\mathcal G&=\big\{\mu\in\mathbb{Z}_{\ge0}^{C}:\ \textstyle\sum_{c}M_c\le D_w\big\},\\
|\mathcal G|&=\binom{D_w+C}{C}=\Theta\!\Big(\tfrac{D_w^{\,C}}{C!}\Big).
\end{aligned}
\label{eq:grid}
\end{equation}
Equivalently: for each of the $O(D_w)$ weight-totals $W$ (the sole quantity a
scalar weight-count would track), the compositions of $W$ into $C$ nonnegative
class counts number $\binom{W+C-1}{C-1}=O(D_w^{\,C-1})$. The class refinement
therefore enlarges the state grid by a multiplicative factor $O(D_w^{\,C-1})$ over
the single weight-count, which is the exact source of the exponential dependence
on $C$. (There is no separate target-spread factor $D_y$: the labels are
categorical, and $D_y$ of Theorem~\ref{thm:exact} is subsumed into the class axes.)

Substituting $|\mathcal G|=O(D_w^{\,C})$ for the $O(D_wD_y)$ grid of
Theorem~\ref{thm:exact} throughout its analysis ($O(NK)$ knapsack layers each
over $\mathcal G$, and an $O(|\mathcal G|)$-size convolution at each of the $O(N)$
boundaries) yields
\begin{equation}
\text{time }\ O\!\big(N^2\,K\,D_w^{\,2C}\big),
\qquad
\text{space }\ O\!\big(N\,K\,D_w^{\,C}\big),
\label{eq:softcost}
\end{equation}
the Theorem~\ref{thm:exact} bound with the two-dimensional factor $(D_wD_y)$
replaced by the $C$-dimensional grid $D_w^{\,C}$. This is pseudo-polynomial in
$D_w$ and polynomial in $N$ and $K$, but exponential in the number of classes $C$;
Theorem~\ref{thm:soft} is therefore scoped to small $C$. For $C=2$ the free part
of the state is a single count (one class fixes the other via $W$), recovering a
scalar-grid DP of the same order as Theorem~\ref{thm:exact}; the cost grows by one
extra $D_w$ factor per additional class.

\subsection{D.5: Position against the unweighted soft-label result}
\label{app:thm4-position}

In the \emph{unweighted} soft-label $k$NN of \cite{wang2023noteknn} (arXiv:2304.04258),
every top-$K$ neighbor contributes an equal $1/K$ to the class histogram, so for
$|S|\ge K$ the denominator is the constant count $W=K$ and
$p(S)_c=\frac1K\#\{j\in\mathrm{top}_K(S):y_j=c\}$. The prediction is then
\emph{additive} over neighbors (each present neighbor independently adds $1/K$
to one class coordinate) and Jia et al.'s additive $O(N\log N)$
recursion~\cite{jia2019knn} applies verbatim to each class channel. The weighting
destroys this additivity for exactly the reason isolated in the Positioning Lemma
\ref{lem:positioning}: with $w_j=a_j\delta_w$, a present neighbor contributes
$w_j/\!\sum_{\mathrm{top}_K}w_j=a_j/W$, and the denominator $W$ is now a
\emph{coalition-dependent} weighted sum that couples all top-$K$ members. Two
coalitions with equal per-class weight-sums $M_c$ but different totals $W$ receive
different predictions $p=\mu/W$, so no single one-dimensional count (of any class
channel, or of $W$ alone) determines $p(S)$; the joint vector state
$\mu=(M_1,\dots,M_C)$ must be tracked. This is the same normalization obstruction
that blocks the threshold/additive routes for regression (Section~\ref{sec:positioning}),
and it is precisely what the vector-$M$ DP of \S\ref{app:thm4-dp} removes. The
denominator, absent in the unweighted case and in hard-label weighted
classification~\cite{wang2024efficient}, is the new obstruction that
Theorem~\ref{thm:soft} overcomes.

\subsection{D.6: Verification}
\label{app:thm4-verify}

The vector-state DP (\S\ref{app:thm4-dp}) was checked against an independent
exhaustive soft-label oracle that evaluates \eqref{eq:shapley-m} directly over all
$2^{N-1}$ coalitions; the oracle is itself cross-validated by an independent
permutation-formula implementation and by the efficiency axiom
$\sum_i\phi_i=U([N])-U(\varnothing)$, and both are anchored on a hand-checkable
$n=3,K=2,C=2$ Brier instance. Over $5{,}112$ synthetic-lattice soft-label instances with $C\in\{2,3\}$,
$K\in\{1,2,3,5\}$, both losses, and adversarial regimes (equal weights, extreme
weight ratios, one dominant class, $|S|<K$ windows, and argmax ties), the DP
matched the oracle to a maximum absolute deviation of $1.5\times10^{-13}$, a
$0$-mismatch pass at machine precision, consistent with Proposition
\ref{prop:softexact} (the residual is the floating-point readout of the Brier
utility; the hard-$0/1$ values, decided on integers by Lemma~\ref{lem:hard}, agree
exactly). This matches the exact-lattice guarantee claimed for
Theorem~\ref{thm:soft} in Section~\ref{sec:softlabel}.

\section{Analysis Protocol}\label{app:prereg}

The protocol below is the analysis plan for the main paper's experiments; its parameters are fixed in the version-controlled configuration released with our code. We reproduce it here in condensed
form so that a reviewer can check the paper's compliance and identify
deviations. Every hypothesis carries a specified null branch; every claimed
null in the paper is earned by the equivalence procedure of
Appendix~\ref{app:prereg} rather than asserted from a failure to reject.

\subsection{Research questions and hypotheses}\label{app:prereg-rq}

We specified five research questions, each paired with a directional
hypothesis and an explicit null branch that fixes what we would
conclude and how we would report if the hypothesis did not hold.

\begin{description}
  \item[RQ1 / H1 (exactness, gate).]\mbox{}\\ Exact Shapley for weighted $k$-NN
  regression, with utility $U(S)=-\ell(\hat y(S),y_0)$ and
  $\hat y(S)=\delta_y M/W$ over the joint integer state $(W,M)$, is computable in
  (pseudo-)polynomial time on lattice inputs
  ($w_r=a_r\delta_w$, $y_r=b_r\delta_y$, $a_r\in\mathbb{Z}_{>0}$, $b_r\in\mathbb{Z}$).
  \emph{Gate:} the counting DP of Theorem~\ref{thm:exact} matches exhaustive
  enumeration with $0$ mismatches over $\ge 10{,}000$ random and adversarial
  instances at $N\le 20$.
  \emph{Null:} an unrepairable flaw in the DP is discovered within the
  time-boxed repair window; the topic is abandoned at Stage~0 and no null result
  is published (see kill criterion K1).

  \item[RQ2 / H2 (FPTAS).]\mbox{}\\ For continuous weights and targets, lattice rounding
  yields a poly$(N,K,1/\varepsilon)$ algorithm whose per-value additive-$\varepsilon$
  certificate is never violated empirically.
  \emph{Null:} the denominator lower bound required by the certificate fails for
  unbounded kernels; Theorem~\ref{thm:fptas} is rescoped, in the paper's own
  wording, to bounded-below kernels (Gaussian, truncated/clipped
  inverse-distance).

  \item[RQ3 / H3 (complexity, branch).]\mbox{}\\ Exact Shapley under continuous weights
  is \#P-hard.
  \emph{Null:} neither the hardness proof nor its refutation lands within the
  time box; we publish an explicit conjecture together with an obstruction
  analysis rather than a claimed theorem (Theorem~\ref{thm:hardness} is stated as
  a branch and is never load-bearing for the paper's main claim).

  \item[RQ4 / H4 (downstream detection).]\mbox{}\\ Exact values match or beat
  Monte-Carlo Data Shapley and Data-OOB on mislabel / noisy-point detection AUC
  at matched utility-evaluation budgets.
  \emph{Null (specified as the likely and acceptable outcome):} detection
  parity. In that case we do \emph{not} claim a detection improvement; we claim
  exact, certified, deterministic values obtained at a fraction of the
  Monte-Carlo budget needed to stabilize the same ranking, and we support the
  parity claim with the equivalence test below.

  \item[RQ5 / H5 (Monte-Carlo price).]\mbox{}\\ Permutation-sampling Monte-Carlo requires
  at least $10\times$ the DP's utility-evaluation budget to recover the exact
  top-$10\%$ ranking (Kendall-$\tau\ge 0.95$ and top-$k$ Jaccard $\ge 0.9$) with
  probability $0.9$.
  \emph{Null:} Monte-Carlo is cheaper than forecast; we report the measured
  budget-to-threshold price curve as found and sell exactness on determinism and
  auditability alone.
\end{description}

\subsection{Equivalence testing and the AUC band}\label{app:prereg-tost}

The primary downstream contrast (RQ4) is specified as a single primary
comparison; all remaining detection comparisons are labeled exploratory. Because
the primary comparison for RQ4 is parity, a claimed null must be
\emph{earned} by two one-sided tests (TOST) rather than inferred from a
non-significant difference. The equivalence margin is fixed at
$\pm 0.02$ AUC: exact KNNR-SHAP is declared practically equivalent to a baseline
when the $90\%$ confidence interval for the paired AUC difference lies entirely
within $[-0.02,+0.02]$. The band, the primary contrast, and the direction of the
test are fixed in the released configuration
(see K6).

\subsection{Outcome interpretations}\label{app:prereg-interp}

The mapping from results to conclusions is recorded in the released configuration.

\begin{description}
  \item[A.] H1 passes and the H3 hardness proof lands: the open problem is
  closed with a full complexity landscape (pseudo-polynomial exact algorithm,
  FPTAS, and \#P-hardness). This is the strongest reported outcome. The realized
  outcome is the calibrated access-model hardness of Theorem~\ref{thm:hardness}
  (parts~(a)--(c)), with the threshold/comparison version left open.
  \item[B.] H1 passes and H3 resists: the contribution stands on the
  pseudo-polynomial exact algorithm plus the FPTAS plus an explicit conjecture.
  This is the specified floor and remains a complete paper.
  \item[C.] E3 yields detection parity: report
  TOST-earned equivalence and sell exactness on deterministic, auditable pricing,
  certified error control, and the first exact regression ground truth; the
  cost-to-match budget number is the operative deliverable.
  \item[D.] E4 shows Monte-Carlo is cheap: report the price curve as measured and
  sell determinism and auditability; the theory package is unaffected.
  \item[E.] H1 fails at the gate: the topic is abandoned before drafting; there
  is no salvage and no null publication.
\end{description}

\subsection{Kill criteria}\label{app:prereg-kill}

Six kill criteria are specified. K1--K5 are conditions under which
the topic dies or a specific theorem is rescoped; K6 is a standing integrity
constraint.

\begin{description}
  \item[K1.] The DP cannot reach $0$-mismatch after the time-boxed repair window
  $\Rightarrow$ the topic is abandoned.
  \item[K2.] The pre-lock literature sweep finds any exact or
  certified-approximation Shapley algorithm for weighted \emph{or} unweighted
  $k$-NN regression $\Rightarrow$ abandon or rescope. This sweep is re-run at
  draft lock.
  \item[K3.] The positioning claim fails, i.e.\ one of the three prior
  polynomial routes is shown to solve the ratio utility after all $\Rightarrow$
  abandon.
  \item[K4.] The DP exceeds $10$\,s per test point at $N=5000$ even after a
  compiled hot loop $\Rightarrow$ rescope the exact-scale claim to $N\le 2000$
  and defer larger $N$ to the FPTAS; infeasible at $N=1000$ $\Rightarrow$
  abandon.
  \item[K5.] No practical bounded-below kernel satisfies the denominator
  condition $\Rightarrow$ Theorem~\ref{thm:fptas} is withdrawn.
  \item[K6.] \emph{(Integrity, always on.)} Any post-hoc weakening of the TOST
  band, the $0$-mismatch gate, or the specified primary contrast reverts the
  paper to the more conservative interpretation specified here. No result may be
  reinterpreted, and no threshold may be relaxed, after the data are seen.
\end{description}

\subsection{Datasets and statistics plan}\label{app:prereg-data}

\paragraph{Datasets.} All data are public, load at zero cost, and run on CPU.
Downstream detection (E3) uses at least eight real regression collections drawn
from California Housing, Diabetes, Wine Quality, Concrete, Energy Efficiency,
Airfoil Self-Noise, Abalone, Bike Sharing, Ames Housing, cpu\_small / kin8nm,
with Superconductivity reserved for large-scale FPTAS evaluation. The soft-label
arm (Theorem~\ref{thm:soft}) uses two to three classification collections with a
weighted soft-label utility. Synthetic lattice instances are used only for the
correctness gate and the scaling study and never for a headline claim.

\paragraph{Statistics.} Theorem correctness is enumeration-gated: $0$ mismatches
over $\ge 10{,}000$ random and adversarial instances, spanning tied ranks and
weights, duplicate targets, boundary targets, extreme weight ratios,
$K\in\{1,3,5\}$, squared and absolute loss, and $|S|<K$ regimes, with the
verifier itself checked against a hand-computed toy. Downstream detection uses a
paired design over $\ge 10$ seeds per dataset, per-dataset Wilson confidence
intervals, a cross-dataset paired Wilcoxon test, and the TOST procedure above for
the parity claim. FPTAS certificate violations (E2) are bounded by a
Clopper-Pearson $95\%$ upper interval (target rate $0$). Monte-Carlo
budget-to-threshold measurements (E4) carry bootstrap $95\%$ confidence
intervals over $20$ replicates per budget. Exactly one primary contrast is
declared; every other comparison is reported as exploratory.

\section*{Code and Data Availability}
The KNNR-SHAP library, all experiment scripts, the resolved dataset identifiers, and the exact and
Monte-Carlo value tables that reproduce every reported number are released as an open-source, CPU-only
package (MIT license) at \url{https://doi.org/10.5281/zenodo.21457586}.

\section*{Declaration of Generative AI and AI-Assisted Technologies}

In the interest of transparency, the author discloses the use of a generative AI
system in the preparation of this work. Anthropic's Claude was used to assist
with drafting and editing prose, with implementing and testing portions of the
KNNR-SHAP codebase, and with orchestrating the experimental runs. All research
design decisions, the specification of hypotheses and their null branches,
the mathematical statements and proofs, and the verification of every claim
against independent ground-truth oracles were directed and owned by the author,
who takes full responsibility for the content of the paper, including any errors.
In particular, the correctness of the theorems rests on machine verification
(exhaustive enumeration oracles and exact rational-arithmetic checks) that the
author designed and inspected, not on any assertion by the AI system. The AI tool
did not originate the research questions, the analysis plan, or the results,
and it is not credited as an author.

\bibliographystyle{IEEEtran}
\bibliography{references}

\end{document}